\documentclass[10pt]{article} 
\usepackage[accepted]{tmlr}


\usepackage{amsmath,amsfonts,bm}









\def\eqref#1{equation~\ref{#1}}









\def\1{\bm{1}}










\DeclareMathAlphabet{\mathsfit}{\encodingdefault}{\sfdefault}{m}{sl}
\SetMathAlphabet{\mathsfit}{bold}{\encodingdefault}{\sfdefault}{bx}{n}













\usepackage{hyperref}
\usepackage{url}
\usepackage{amsmath,amssymb,amsthm,subcaption}
\usepackage{graphicx}
\usepackage{epstopdf}
\usepackage{mathtools}
\usepackage{wrapfig}
\usepackage{hyperref}
\usepackage{xcolor}
\usepackage{cleveref}
\newtheorem{theorem}{Theorem}[section]
\newtheorem{corollary}{Corollary}[theorem]
\newtheorem{lemma}[theorem]{Lemma}
\crefname{lemma}{Lemma}{lemmas}

\DeclareMathOperator{\diag}{diag}
\theoremstyle{definition}
\newtheorem{definition}{Definition}[section]
\newtheorem{assumption}{Assumption}

\usepackage[toc,page]{appendix}

\bibliographystyle{tmlr}

\title{Directional Convergence Near Small Initializations and \\
	Saddles in Two-Homogeneous Neural Networks}


\author{\name Akshay Kumar \email kumar511@umn.edu \\
	\addr Department of Electrical  and Computer Engineering\\
	University of Minnesota, Minneapolis, MN
	\AND
	\name Jarvis Haupt \email jdhaupt@umn.edu \\
	\addr Department of Electrical  and Computer Engineering\\
	University of Minnesota, Minneapolis, MN
}



\begin{document}

	\maketitle
	
	\begin{abstract}
		This paper examines gradient flow dynamics of \emph{two-homogeneous neural networks} for small initializations, where all weights are initialized near the origin. For both square and logistic losses, it is shown that for sufficiently small initializations, the gradient flow dynamics spend sufficient time in the neighborhood of the origin to allow the weights of the neural network to approximately converge in direction to the Karush-Kuhn-Tucker (KKT) points of a \emph{neural correlation function} that quantifies the correlation between the output of the neural network and corresponding labels in the training data set. For square loss, it has been observed that neural networks undergo saddle-to-saddle dynamics when initialized close to the origin. Motivated by this, this paper also shows a similar directional convergence among weights of small magnitude in the neighborhood of certain saddle points.
		
	\end{abstract}
	\section{Introduction}
	Massively overparameterized deep neural networks trained with (stochastic) gradient descent are widely known to be immensely successful architectures for inference. Recent works have attributed this success to the \emph{implicit regularization} of gradient descent -- the mysterious ability of  gradient descent to find a solution that generalizes well despite the non-convexity of the loss landscape, the presence of spurious optima, and no explicit regularization \citep{behnam_ib,soudry_ib}. 
	
	To resolve this mystery, several works have studied the dynamics of gradient descent during training of neural networks \citep{ntk,chizat_lazy,mei_feature,chizat_opt}.  An important observation emerging from these studies has been the effect of initialization on the trajectory of gradient descent. For large initialization, the gradient descent dynamics are approximately linear and can be described by the so-called \emph{Neural Tangent Kernel} (NTK) \citep{ntk, arora_exact}. This regime is also referred to as \emph{lazy training} \citep{chizat_lazy}, since the weights of the neural networks do not change much and remain near their initializations throughout training, preventing the neural networks from learning the underlying features from the data. 
	
	In contrast, for small initializations, gradient descent dynamics is highly non-linear and exhibits \emph{feature learning} \citep{Geiger_feature, fl_yang, mei_feature}. Additionally, the benefit of small initializations over large in terms of generalization performance has also been observed under various settings \citep{Geiger_feature}.  For example, \citet{chizat_lazy} train deep convolutional neural networks with varying scales of initialization while keeping other aspects of the neural network fixed, and a significant drop in performance is observed upon increasing the scale of initialization.  In other recent works such as \citet{saddle_jacot, gf_orth, pesme_sd},  a phenomenon termed \emph{saddle-to-saddle dynamics} has been observed during training. These works reveal that the trajectory of gradient descent passes through a sequence of saddle points during training, in stark contrast to the linear dynamics observed in the NTK regime.
	
	The investigation into the gradient descent dynamics of neural networks with small initializations has spurred numerous inquiries, yet a comprehensive theoretical framework remains elusive. The study of linear neural networks has led to valuable insights into the sparsity-inducing tendencies of gradient descent \citep{srebro_ib,cohen_mtx_fct}. These tendencies also appear to be present in non-linear neural networks \citep{chizat_lazy, chizat_opt}, however, rigorous results are limited to two-layer Rectified Linear Unit (ReLU) and Leaky-ReLU networks under various simple data-specific assumptions  such as orthogonal inputs \citep{gf_orth},  linearly separable data \citep{min_align,ma_wang_bin, lyu_simp,wang_saddle}, the XOR  mapping \citep{brutzkus_xor} or univariate data \citep{bruna_spline, safran_uni}. Another important line of work has uncovered an intriguing phenomenon of directional convergence among neural network weights during the initial training stages \citep{maennel_quant, luo_condense, brutzkus_xor, atanasov_align, chen_phase}. In \citet{maennel_quant}, it is shown that the weights of two-layer ReLU neural networks, trained using gradient flow with small initialization, converge in direction early in the training process while maintaining small norm. Although this result primarily describes dynamics near initialization, it constitutes a crucial step towards a comprehensive understanding of neural network training dynamics and has contributed significantly towards understanding the training dynamics  in some of the aforementioned works \citep{gf_orth,min_align,wang_saddle,lyu_simp}. However, the work of \citet{maennel_quant} is limited to two-layer ReLU networks and raises the question of whether this phenomenon holds for other neural networks.

	\section{Our Contributions}
	\label{sec_cont}
	This work establishes the phenomenon of directional convergence in a more general setting. Specifically, we study the gradient flow dynamics resulting from training of two-homogeneous neural networks near small initializations and \emph{also at certain saddle points}. 
	
	A neural network $\mathcal{H}$, where $\mathcal{H}(\mathbf{x};\mathbf{w})$ is the real-valued output of the neural network, $\mathbf{x}$ is the input, and $\mathbf{w}$ is a vector containing all the weights, is defined here to be two-(positively) homogeneous if 
	$$\mathcal{H}(\mathbf{x};c\mathbf{w}) = c^2\mathcal{H}(\mathbf{x};\mathbf{w}), \text{ for all } c\geq0.$$ 
	While this class does not encompass deep neural networks, it is broad enough to include several interesting types of neural networks. Let $\sigma(x)$ denote the ReLU (or  Leaky-ReLU) function, then, some examples of two-homogeneous neural networks include
	\begin{itemize}
		\item Two-layer ReLU networks: $\mathcal{H}(\mathbf{x};\{v_k,\mathbf{u}_k\}_{k=1}^H) = \sum_{k=1}^H v_k\sigma(\mathbf{x}^\top\mathbf{u}_k). $
		\item Single-layer squared ReLU networks:  $\mathcal{H}(\mathbf{x};\{\mathbf{u}_k\}_{k=1}^H) = \sum_{k=1}^H p_k\sigma(\mathbf{x}^\top\mathbf{u}_k)^2, \text{ where }p_k\in \{-1,1\}. $
		\item Deep ReLU networks with only two trainable layers, for example $\mathcal{H}(\mathbf{x};\mathbf{W}_1,\mathbf{W}_2) = \mathbf{v}^\top\sigma(\mathbf{W}_2\sigma(\mathbf{W}_1\mathbf{x})), $ where $\mathbf{v}$ is a fixed vector. We emphasize that this class includes any $L-$layer deep ReLU network with exactly two trainable layers (not necessarily two consecutive layers). 
	\end{itemize}
	
	We consider a supervised learning setup for training and assume that $\{\mathbf{x}_i,y_i\}_{i=1}^n$ is the training dataset, $\mathbf{X} = [\mathbf{x}_1,\hdots,\mathbf{x}_n] \in\mathbb{R}^{d \times n}$, $\mathbf{y} = [y_1,\hdots,y_n]^\top \in \mathbb{R}^{n}$, and  $\mathcal{H}(\mathbf{X};\mathbf{w}) = \left[\mathcal{H}(\mathbf{x}_1;\mathbf{w}),\hdots ,\mathcal{H}(\mathbf{x}_n;\mathbf{w}) \right]^\top \in \mathbb{R}^{n}$ is the vector containing the output of neural network for all inputs, where $\mathbf{w}\in\mathbb{R}^k.$ We do not make any structural assumptions on the training dataset.
	
	In describing our results, a vital role is played by a quantity we refer to as the \emph{neural correlation function}  (NCF), which for a  fixed vector $\mathbf{z}\in\mathbb{R}^n$ and neural network $\mathcal{H}$ as above is defined as
	$$\mathcal{N}_{\mathbf{z},\mathcal{H}}(\mathbf{w}) = \mathbf{z}^\top \mathcal{H}(\mathbf{X};\mathbf{w}).$$
	The NCF is a measure of the correlation between the vector $\mathbf{z}$ and the output of the neural network. For a given NCF, we refer to the following constrained optimization problem as a \emph{constrained NCF}:
	$$\max_{\|\mathbf{w}\|^2_2=1} \mathcal{N}_{\mathbf{z},\mathcal{H}}(\mathbf{w}) .$$
	
	Our first main result (\autoref{thm_align_init}) shows that, for square and logistic loss \footnote{The results in \autoref{thm_align_init} apply, in fact, to a more general class of loss functions.}, if the initialization is sufficiently small, then, the gradient dynamics spends sufficient time near the origin such that the weights $\mathbf{w}$ are either approximately $\mathbf{0}$, or approximately converge in direction to non-negative Karush-Kuhn-Tucker (KKT) points of the constrained NCF
	$$\max_{\|\mathbf{w}\|^2_2=1} \mathcal{N}_{\mathbf{y},\mathcal{H}}(\mathbf{w}).$$
	
	Our next main result (\autoref{thm_align_sad}) shows a similar directional convergence near certain saddle points for square loss. Specifically, we show that if initialized in a sufficiently small neighborhood of that saddle point, then the gradient dynamics spends sufficient time near the saddle point such that the weights with small magnitude either approximately converge in direction to non-negative KKT points of the constrained NCF defined with respect to the residual error at that saddle point, or are approximately $\mathbf{0}$.

	\looseness=-1For illustration, we provide a brief ``toy'' example showing the phenomenon of directional convergence near small initialization. We train a single-layer squared ReLU neural network using gradient descent and small initialization, and provide in \Cref{full_loss_traj} a visual depiction of $(a)$ the overall loss and the $\ell_2$ norm of the network weights, and $(b)$ the angle the weight vectors make with the positive horizontal axis, all as a function of the number of training iterations.  (See the figure caption for more specific experimental details.) It is evident that the training loss barely changes, and the norm of all the weights remains small, indicating that gradient dynamics is still near the origin. This is not surprising since the origin is a saddle point. However, the directions of the individual weight vectors for the neurons undergo significant changes. This experiment suggests that while the gradient dynamics may not significantly change the weight vector magnitudes, it does change their directions. Further, and perhaps more interestingly, these directions not only change but also appear to converge.  The objective of this paper is to explain how such a phenomenon could occur by just minimizing the loss using gradient descent, and also characterize the directions along which the weights converge.
	\begin{figure}
		\centering
		\begin{subfigure}[b]{0.4\textwidth}
			\centering
			\includegraphics[width=1\textwidth]{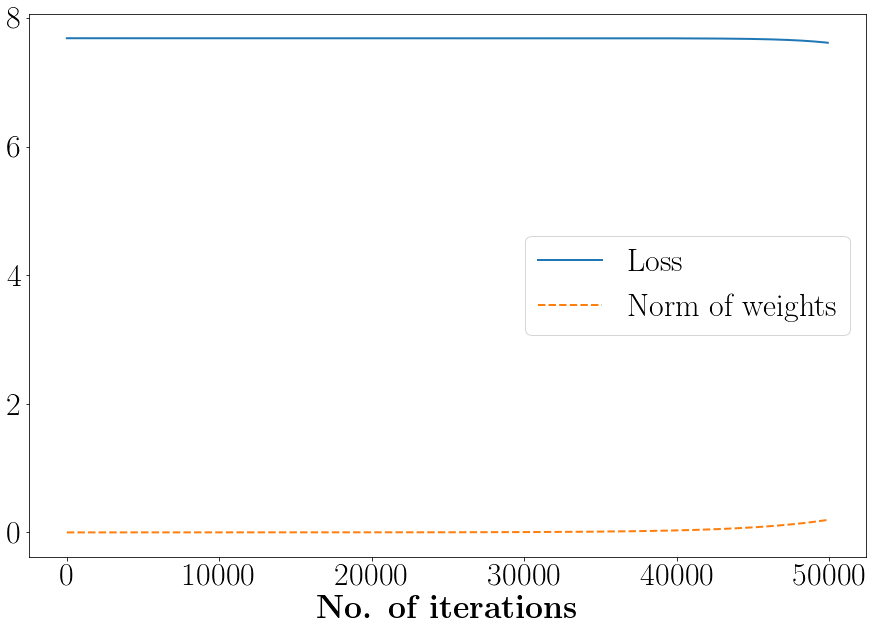}
			\caption{}
			\label{near_init_loss_evol}
		\end{subfigure}
		\begin{subfigure}[b]{0.4\textwidth}
			\centering
			\includegraphics[width=1\textwidth]{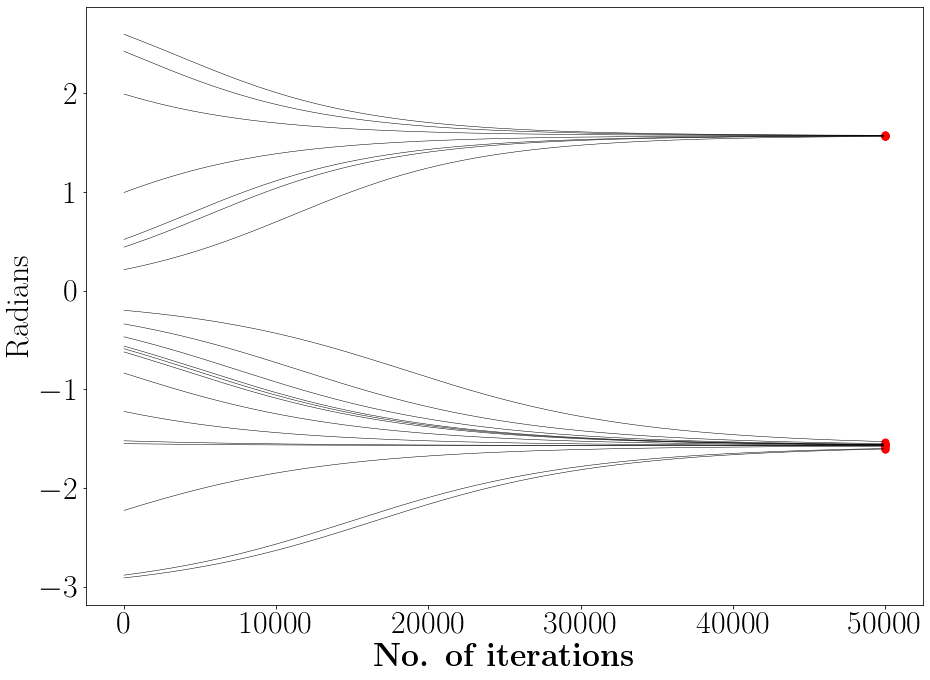}
			\caption{}
			\label{near_init_dir_evol}
		\end{subfigure}
		\caption{A two-dimensional scenario where a single-layer squared ReLU neural network with $20$ hidden neurons is trained by gradient descent. The network architecture is defined as $\mathcal{H}(x_1,x_2;\{\mathbf{u}_i\}_{i=1}^{20}) = \sum_{i=1}^{20}\max(0,\mathbf{u}_{1i}x_1+\mathbf{u}_{2i}x_2)^2$, where $\mathbf{u}_i$ represents the weights for the $i$th neuron. For training, we use 50 unit norm inputs and corresponding labels are generated using the function $\mathcal{H}^*(x_1,x_2) = 5\max(0,x_1)^2+4\max(0,-x_1)^2$. We use square loss and optimize using gradient descent for 50000 iterations with step-size $5\cdot 10^{-5}$ . At initialization, the weights of each hidden neuron are drawn from Gaussian distribution with standard deviation $10^{-5}$.  Panel $(a)$: the evolution of training loss and the $\ell_2$-norm of all the weights with iterations. Panel $(b)$: the evolution of $\arctan(\mathbf{u}_{2i}(t)/\mathbf{u}_{1i}(t))$ (the angle $\mathbf{u}_{i}(t)$ makes with the positive $x-$axis) for all hidden neurons. We see that the norm of the weights remain small and loss barely changes, though the weight vectors  converge in direction to their final location (denoted with red dots). }
		\label{full_loss_traj}
	\end{figure}
	
	Probably the most related existing work to our effort here is \citet{maennel_quant}, which exclusively focuses on two-layer ReLU neural networks. Compared to that work, ours establishes directional convergence near small initializations for two-homogeneous neural networks, a much wider class of neural networks, highlighting the inherent importance of homogeneity for these types of phenomena. As alluded above, this class also includes deep ReLU networks with only two trainable layers (not necessarily two consecutive layers), for which the results of \citet{maennel_quant} are inapplicable. Further, while \citet{maennel_quant} only focuses on initialization, we also establish directional convergence near certain saddle points. This extension is particularly pertinent because it has been observed in previous works that neural networks exhibit saddle-to-saddle dynamics under small initialization.  Consequently, our result describing dynamics near small initialization and saddle points could be important for a better understanding of the training dynamics in the future.
	
	Finally, while the result of \citet{maennel_quant} has certainly advanced our understanding, their analysis near small initializations relies on heuristic arguments and are not completely rigorous; see \citet[Section 2.2]{min_align} for specific details. Our proof technique is rigorous and fundamentally different from \cite{maennel_quant} to handle a wider class of neural networks.

	\section{Preliminaries}

	In this section, we briefly describe some preliminary concepts that will be useful in rigorously describing the problem. 
	
	Throughout the paper, $\|\cdot\|_2$ denotes the $\ell_2$ norm for a vector and the spectral norm for a matrix. For any $N\in\mathbb{N}$, we let $[N] = \{1,2,\hdots,N\}$ denote the set of positive integers less than or equal to $N$. We denote derivatives by $\dot{x}(t) = \frac{dx(t)}{dt}$, and for the sake of brevity we may remove the independent variable $t$ if it is clear from the context. For a vector $\mathbf{x}$, $x_i$ denotes its $i$-th entry. The $k$-dimensional sphere is denoted by  $\mathcal{S}^{k-1}$. A KKT point of an optimization problem is defined to be a non-negative KKT point if the objective value at that KKT point is non-negative.
	
	A function $f: X \rightarrow \mathbb{R}$ is called \emph{locally Lipschitz continuous} if for every $\mathbf{x}\in X$ there exists a neighborhood $U$ of $\mathbf{x}$ such that $f$ restricted to $U$ is Lipschitz continuous. A locally Lipschitz continuous function is differentiable almost everywhere \citep[Theorem 9.1.2]{lewis_borwein}.
	
	For any locally Lipschitz continuous function, $f: X \rightarrow \mathbb{R}$ , its \emph{Clarke subdifferential} at a point $\mathbf{x}\in X$ is the set
	$$\partial f(\mathbf{x}) = \text{conv}\left\{\lim_{i\rightarrow\infty}\nabla f(\mathbf{x}_i): \lim_{i\rightarrow \infty}\mathbf{x}_i= \mathbf{x}, \mathbf{x}_i\in \Omega\right\},$$
	where $\Omega$  is any full-measure subset of $X$ such that $f$ is differentiable for all $\mathbf{x}\in\Omega$. The set $\partial f(\mathbf{x})$ is nonempty, convex, and compact for all $\mathbf{x}\in X$, and the mapping $\mathbf{x}\rightarrow\partial f(\mathbf{x})$ is upper-semicontinuous \citep[Proposition 1.5]{clarke_ns}. We denote by $\overline{\partial} f(\mathbf{x})$ the unique minimum norm subgradient.
	
	Since the neural networks considered in this paper may be non-smooth (as a function of $\mathbf{w}$), to rigorously define the gradient flow for such functions we use the notion of \emph{o-minimal structures} \citep{coste_omin_into}. In particular, we consider neural networks that are definable under some \emph{o-minimal structure}, a mild technical assumption that is satisfied by almost all modern neural networks \citep{ji_matus_align}, including the examples presented in \Cref{sec_cont}. Formally, an {o-minimal structure} is a collection $\mathcal{S} = \{\mathcal{S}_n\}_{n=1}^\infty$ where each $\mathcal{S}_n$ is a set of subsets of $\mathbb{R}^n$ containing all algebraic subsets of $\mathbb{R}^n$ and is closed under finite union and intersection, complement, projection, and Cartesian product. The elements of $\mathcal{S}_{1}$ are the finite unions of points and intervals. For a given o-minimal structure $\mathcal{S}$, a set $\mathbf{A}\subset\mathbb{R}^n$ is definable if $\mathbf{A}\in\mathcal{S}_n$. A function $f:\mathbf{D}\rightarrow\mathbb{R}^m$ with $\mathbf{D}\subset\mathbb{R}^n$ is definable if the graph of $f$ is in $\mathcal{S}_{n+m}$. Since a set remains definable under projection, the domain $\mathbf{D}$ is also definable; see \citet{coste_omin_into} for a detailed introduction of o-minimal structures.
	
	Using the notion of definability under o-minimal structures, we define \emph{gradient flow} for non-smooth functions following \citet{Davis_intro,ji_matus_align, Lyu_ib}. A function $\mathbf{z}:I\rightarrow\mathbb{R}^d$ on the interval $I$ is an \emph{arc} if it is absolutely continuous for any compact sub-interval of $I$. An arc is differentiable almost everywhere, and the composition of an arc with a locally Lipschitz function is also an arc. For any locally Lipschitz and definable function $f(\mathbf{x})$, $\mathbf{x}(t)$ evolves under gradient flow of $f(\mathbf{x})$ if it is an arc, and
	\begin{align}
	\dot{\mathbf{x}}(t)\in-\partial f(\mathbf{x}(t)), \text{ for a.e.\  } t\geq0.
	\end{align}
	If $\mathbf{x}(t)$ evolves under positive gradient flow of $f(\mathbf{x})$, i.e., $\dot{\mathbf{x}}(t)\in\partial f(\mathbf{x}(t)), \text{ for a.e.\  } t\geq0,$
	we still call $\mathbf{x}(t)$ a gradient flow of $f(\mathbf{x})$. In what follows, it will be clear from the context whether it is positive or negative gradient flow.

	\section{Problem Setup}\label{setup_prob}
	Within the framework introduced above, we consider the minimization of
	\begin{align}
	L\left(\mathbf{w}\right) = \sum_{i=1}^n \ell\left(\mathcal{H}(\mathbf{x}_i;\mathbf{w}),y_i\right),
	\label{loss_gn}
	\end{align}
	where $\ell(\hat{y},y)$ is a loss function; in this work, we assume the loss function is definable and have locally Lipschitz gradient.
	\begin{assumption}
	The loss function $\ell(\hat{y},y) $ is definable under some o-minimal structure that includes polynomials and exponentials, and $\nabla_{\hat{y}}\ell(\hat{y},y)$ is locally Lipschitz in $\hat{y}.$
	\end{assumption} 
	The above assumption is satisfied by commonly used loss functions such as square loss and logistic loss. Next, as alluded above, we also assume that the neural networks under consideration are \emph{two-homogeneous}, a property we formalize via the following assumption.
	\begin{assumption}\label{2_homo_assumption}
		\emph{For any fixed $\mathbf{x}$, $\mathcal{H}(\mathbf{x};\mathbf{w})$ is locally Lipschitz and definable under some o-minimal structure that includes polynomials and exponentials, and for all $c\geq0$, $\mathcal{H}(\mathbf{x};c\mathbf{w}) = c^2\mathcal{H}(\mathbf{x};\mathbf{w}).$} 
	\end{assumption}
	In \citet{wilkie_o_minimal}, it was shown that there exists an o-minimal structure in which polynomials and exponential functions are definable. Also, the definability of a function is stable under algebraic operations, composition, inverse, maximum, and minimum. Since ReLU/Leaky-ReLU is a maximum of two polynomials, typical neural networks involving ReLU activation function are definable \citep{ji_matus_align}. Also, under the above assumptions, $ L\left(\mathbf{w}\right)$ is definable. Finally, we also require $\mathcal{H}$ to be two-homogeneous for our results to hold, which rules out deep neural networks such as deep ReLU networks with more than 2 trainable layers.
	
	Next, since $L\left(\mathbf{w}\right)$ is definable, the gradient flow $\mathbf{w}(t)$ is an arc that satisfies for a.e.\  $t\geq 0$
	\begin{align}
	\dot{\mathbf{w}}(t) \in -\partial L(\mathbf{w}(t)), \mathbf{w}(0) = \delta\mathbf{w}_0,
	\label{gf_upd_gen}
	\end{align}
	where $\mathbf{w}_0$ is a vector and $\delta$ is a positive scalar that controls the scale of initialization.
	
	
	For differential inclusions, it is possible to have multiple solutions for the same initialization. This leads to technical difficulties in proving our results. We will address this difficulty by making use of the following definition which is inspired by \citet{lyu_simp} and will be discussed in more detail in the later sections. 
	\begin{definition}
		Suppose $g(\mathbf{w}):\mathbb{R}^k\rightarrow\mathbb{R}$ is locally Lipschitz and definable under some o-minimal structure, and consider the following differential inclusion with initialization $\tilde{\mathbf{w}}$
		$$ \frac{d{\mathbf{w}}}{dt} \in  \partial g(\mathbf{w}), {\mathbf{w}}(0) = \tilde{\mathbf{w}}, \text{ for a.e.\  }t\geq 0.$$
		We say $\tilde{\mathbf{w}}$ is a {non-branching initialization} if the differential inclusion has a unique solution for all $t\geq 0$.
	\end{definition}
	
Before proceeding to the main results, we briefly state additional definitions which are used in the paper. Let
		$$\beta := \sup\{\|\mathcal{H}(\mathbf{X};\mathbf{w})\|_2 : \mathbf{w}\in \mathcal{S}^{k-1}\},$$
		where $\mathbf{X}$ and $\mathbf{y}$ denote the training examples and labels. For $\mathbf{z}\in\mathbb{R}^n$, we define $\ell'(\mathbf{z},\mathbf{y}) = [\nabla_{\hat{y}}\ell(z_1,y_1), \hdots,\nabla_{\hat{y}}\ell(z_n,y_n)]^\top\in\mathbb{R}^n$. Now, since $\nabla_{\hat{y}}\ell(\hat{y},y)$ is locally Lipschitz in $\hat{y}$, we define $\hat{\beta}$ such that if $\|\mathbf{z}\|_2\leq \beta$, then
		\begin{align}
		\|\ell'(\mathbf{z},\mathbf{y}) - \ell'(\mathbf{0},\mathbf{y})\|_2 \leq \hat{\beta}\|\mathbf{z}\|_2,
		\label{bt_t_def}
		\end{align}
		and $\tilde{\beta} = \hat{\beta}\beta + \| \ell'(\mathbf{0},\mathbf{y})\|_2$.
	
	\section{Main Results}
	
	\subsection{Directional Convergence Near Initialization}\label{sec_dyn_init}
	
	We are now in position to state our first main result establishing approximate directional convergence of the weights near small initialization.
	
	\begin{theorem}\label{thm_align_init}
		Let $\mathbf{w}_0$ be a unit norm vector and a non-branching initialization of the differential inclusion 
		\begin{align}
		\dot{\mathbf{u}} \in  \partial \mathcal{N}_{-\ell'(\mathbf{0},\mathbf{y}),\mathcal{H}}(\mathbf{u}), {\mathbf{u}}(0) = \mathbf{w}_0.
		\label{main_flow_thm}
		\end{align}
		For any $\epsilon\in(0,\eta)$, where $\eta$ is a positive constant\footnote{Here, $\eta$ depends on the solution of \cref{main_flow_thm}, which solely relies on $\mathbf{X}, \mathbf{y}, \mathcal{H}, \mathbf{w}_0$, and is independent of $\delta$. See \Cref{low_bd_U} and the proof for more details.}, there exist $C>1$ and $\overline{\delta}>0$ such that the following holds: for any $\delta\in(0,\overline{\delta})$ and solution $\mathbf{w}(t)$ of \cref{gf_upd_gen} with initialization  $\mathbf{w}(0) = \delta\mathbf{w}_0$, we have
		$$\|\mathbf{w}({t})\|_2 \leq \sqrt{C}\delta, \text{ for all } t\in \left[0,\overline{T}\right],$$
		where $\overline{T} = \frac{\ln(C)}{4\beta\tilde{\beta} }.$ Further, either
		$$\|\mathbf{w}(\overline{T})\|_2\geq \delta\eta, \text{ and }  \frac{\mathbf{w}(\overline{T})^\top\hat{\mathbf{u}}}{\|\mathbf{w}(\overline{T})\|_2} \geq 1 - \left(1+\frac{3}{2\eta}\right)\epsilon,$$
		where $\hat{\mathbf{u}}$ is a non-negative KKT point of 
		$$\max_{\|\mathbf{u}\|_2^2 = 1} \mathcal{N}_{-\ell'(\mathbf{0},\mathbf{y}),\mathcal{H}}(\mathbf{u}) =  -\ell'(\mathbf{0},\mathbf{y})^\top \mathcal{H}(\mathbf{X};\mathbf{u}),$$
		or $$\|\mathbf{w}(\overline{T})\|_2 \leq 2\delta\epsilon.$$
		
	\end{theorem}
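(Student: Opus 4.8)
The plan is to exploit the fact that near the origin the gradient flow of the loss is, after a time rescaling, a small perturbation of the positive gradient flow of the neural correlation function $\mathcal{N}_{\mathbf{y},\mathcal{H}}$. The first step is to control the norm growth. Writing $\rho(t) = \|\mathbf{w}(t)\|_2^2$, I would differentiate $\rho$ and use that $\dot{\mathbf{w}}\in-\partial L(\mathbf{w})$ together with Euler's identity for $2$-homogeneous functions — namely $\langle \mathbf{w},\partial L(\mathbf{w})\rangle$ can be expressed via $\mathcal{H}(\mathbf{X};\mathbf{w})$ and the residual $\mathcal{H}(\mathbf{X};\mathbf{w})-\mathbf{y}$ (for square loss; analogously for logistic). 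Since $\mathcal{H}$ is $2$-homogeneous, $\|\mathcal{H}(\mathbf{X};\mathbf{w})\|_2\le\beta\|\mathbf{w}\|_2^2$, so for $\|\mathbf{w}\|_2$ small the residual is dominated by $\|\mathbf{y}\|_2$, and one gets a differential inequality of the form $\dot{\rho}\le 4\beta\tilde{\beta}\,\rho$ (up to constants absorbed into $\tilde{\beta}$), as long as $\rho$ stays bounded by $C\delta^2$. A Gr\"onwall/continuation argument then yields $\|\mathbf{w}(t)\|_2\le\sqrt{C}\delta$ on $[0,\overline{T}]$ with $\overline{T}=\ln(C)/(4\beta\tilde\beta)$, where $C>1$ is chosen large enough for the later alignment estimate to go through; one checks the bound is never violated before $\overline{T}$ by the usual maximality argument on the escape time.

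The second and main step is the directional comparison. I would rescale $\mathbf{w}(t)=\delta\,\mathbf{v}(s)$ with a time change $s = s(t)$ chosen so that the $\delta$-dependence in the leading-order dynamics cancels (this is where $2$-homogeneity of $\nabla\mathcal{H}$, i.e. $1$-homogeneity, is used: $\partial L(\delta \mathbf{v}) = \delta\,\partial\mathcal{N}_{\mathbf{y},\mathcal{H}}(\mathbf{v}) + O(\delta^3)$ for square loss, since the residual is $-\mathbf{y}+O(\delta^2)$). Thus $\mathbf{v}(s)$ satisfies $\dot{\mathbf{v}}\in\partial\mathcal{N}_{\mathbf{y},\mathcal{H}}(\mathbf{v}) + (\text{error of size }O(\delta^2))$ on the relevant time interval, which is a perturbation of \cref{main_flow_thm}. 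Because $\mathbf{w}_0$ is a non-branching initialization, \cref{main_flow_thm} has a unique solution $\mathbf{u}(s)$, and by a Gr\"onwall-type stability estimate for differential inclusions (upper semicontinuity of the Clarke subdifferential plus uniqueness of the limiting flow) the rescaled trajectory $\mathbf{v}(s)$ stays $O(\delta^2 e^{Ls})$-close to $\mathbf{u}(s)$ on compact $s$-intervals. Then I invoke the known asymptotics of the positive gradient flow of a $2$-homogeneous function: $\mathcal{N}_{\mathbf{y},\mathcal{H}}(\mathbf{u}(s))$ grows (it is nondecreasing along positive gradient flow), and once it becomes positive, $\mathbf{u}(s)/\|\mathbf{u}(s)\|_2$ converges to a KKT point of the constrained NCF $\max_{\|\mathbf{u}\|_2^2=1}\mathcal{N}_{\mathbf{y},\mathcal{H}}(\mathbf{u})$ — this is a Łojasiewicz-type argument using definability (the o-minimal assumption), analogous to radial-tangential decomposition results for homogeneous flows; non-negativity of the limiting KKT point comes from the direction of the flow and the structure of $\mathcal{N}$. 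The constant $\eta$ is extracted as (a lower bound on) the norm $\|\mathbf{u}(\overline{s})\|_2$ at the endpoint, or alternatively the amount of ``progress'' the NCF has made; it depends only on $\mathbf{X},\mathbf{y},\mathcal{H},\mathbf{w}_0$.

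The third step is the dichotomy and the final bookkeeping. At time $\overline{T}$ (equivalently some $\overline{s}$), either the limiting flow $\mathbf{u}$ has escaped the origin — i.e. $\|\mathbf{u}(\overline{s})\|_2\ge\eta$ and its direction is within $\epsilon$ of a KKT point — in which case closeness of $\mathbf{v}$ to $\mathbf{u}$ (after choosing $\overline{\delta}$ small enough that the $O(\delta^2)$ error is $\le\epsilon$) transfers both the norm lower bound $\|\mathbf{w}(\overline{T})\|_2\ge\delta\eta$ and, via a triangle-inequality/angle computation, the alignment $\mathbf{w}(\overline{T})^\top\hat{\mathbf{u}}/\|\mathbf{w}(\overline{T})\|_2\ge 1-(1+\tfrac{3}{2\eta})\epsilon$ (the $\tfrac{3}{2\eta}$ factor is exactly what comes out of converting an additive $O(\epsilon)$ perturbation of a vector of norm $\ge\eta$ into an angular error); or the limiting flow has not yet escaped, in which case $\|\mathbf{u}(\overline{s})\|_2$ is small and, again by closeness, $\|\mathbf{w}(\overline{T})\|_2\le 2\delta\epsilon$. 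The main obstacle is the second step: making the perturbation-of-differential-inclusions comparison rigorous despite non-smoothness of $\mathcal{H}$ — one cannot simply subtract two inclusions, and uniqueness for the perturbed flow is not available, so the argument must be set up so that only the limiting flow \cref{main_flow_thm} needs to be non-branching, with the actual flow $\mathbf{w}(t)$ compared to it via a one-sided (Filippov-type) estimate. Keeping track of how the $O(\delta^2)$ errors in the vector field, compounded over the interval of length $\overline{s}\sim\overline{T}$, remain below $\epsilon$ after choosing $\overline{\delta}$, is the delicate quantitative heart of the proof; the logistic-loss case requires separately checking that $\ell'(\hat{y},y)\approx\ell'(0,y)$ for $\|\mathbf{w}\|_2$ small so that the same leading-order reduction to $\mathcal{N}_{\mathbf{y},\mathcal{H}}$ holds.
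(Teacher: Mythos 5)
Your plan follows essentially the same route as the paper: a Gr\"onwall bound on $\|\mathbf{w}\|_2^2$ keeps the trajectory within $\sqrt{C}\delta$ of the origin for time $\overline{T}=\ln(C)/(4\beta\tilde\beta)$, the rescaled trajectory $\mathbf{w}/\delta$ is compared to the positive NCF gradient flow via a Filippov-type stability result for differential inclusions (the paper's Lemma \ref{err_bd_diff_inclusion}, proved by Arzel\`a--Ascoli compactness rather than by a quantitative Gr\"onwall rate), directional convergence of that limiting flow is established by the unbounded Kurdyka--{\L}ojasiewicz inequality (Lemma \ref{desing_func}), and the non-branching assumption closes the argument by making the limiting solution unique. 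Two small imprecisions that you should clean up when fleshing this out: because $\partial\mathcal{H}(\mathbf{x};\cdot)$ is $1$-homogeneous the substitution $\mathbf{v}=\mathbf{w}/\delta$ requires \emph{no} time reparametrization (the $\delta$ cancels automatically, so $s=t$), and the dichotomy in Lemma \ref{ncf_convg} is not ``once $\mathcal{N}$ becomes positive'' but rather ``$\mathbf{u}(t)\to\mathbf{0}$ versus $\mathbf{u}(t)/\|\mathbf{u}(t)\|_2$ converges,'' with the latter possible even when $\mathcal{N}(\mathbf{u}(t))\le 0$ for all $t$ as long as $\|\mathbf{u}(t)\|_2$ stabilizes at a positive value.
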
	
	
	Here, $\epsilon$ represents the level of directional convergence of the weight, and $C$ represents how long gradient flow needs to stay near the origin to ensure the desired level of directional convergence. 
	
	\looseness = -1 In words, the first part of the result establishes that for a given choice of $\epsilon>0$, we can choose $\delta$ sufficiently small such that the norm of the weights remains small for all $t\in [0,\overline{T}]$, indicating that gradient flow remains near the origin. The second part quantifies what happens at the time $\overline{T}$; there are two possible outcomes.  In one scenario, the weights approximately converge in direction towards a non-negative KKT point of the constrained NCF defined with respect to $-\ell'(\mathbf{0},\mathbf{y})$ and neural network $\mathcal{H}$ (additionally, $\|\mathbf{w}(\overline{T})\|_2\geq \delta\eta$, where $\eta$ is a constant that does not depend on $\delta$.) In contrast, in the second scenario $\|\mathbf{w}(\overline{T})\|_2\leq 2\delta\epsilon$, where we can choose $\epsilon$ and $\delta$ both to be arbitrarily small. Thus, compared to the first scenario, in the second scenario, the weights get much closer to the origin. In fact, as it will become more clear from the proof sketch later, this happens because the gradient dynamics of the NCF can converge to $\mathbf{0}$.
	
It is easy to verify that for square loss, $\ell(\hat{y},y) = \frac{1}{2}(\hat{y}-y)^2$ and $-\ell'(\mathbf{0},\mathbf{y}) = \mathbf{y}$. Similarly, for logistic loss, $\ell(\hat{y},y) = \ln(1+e^{-\hat{y}y})$ and $-\ell'(\mathbf{0},\mathbf{y}) = \mathbf{y}/2$. Hence, for both of these loss functions, the weights approximately converge in direction towards a non-negative KKT point of  
		$$\max_{\|\mathbf{u}\|_2^2 = 1} \mathcal{N}_{\mathbf{y},\mathcal{H}}(\mathbf{u}) =  \mathbf{y}^\top \mathcal{H}(\mathbf{X};\mathbf{u}).$$
Now, for general training data, it may not be possible to get a closed form expression for the KKT points of the NCF. In \Cref{kkt_ex} we provide some simple examples where closed form expressions can be computed, which may be helpful to the readers.
	
	Finally, note that we require $\mathbf{w}_0$ to be a non-branching initialization of \cref{main_flow_thm}. The necessity for such a requirement essentially arises because there could exist multiple solutions for differential inclusions. We discuss it in more detail after providing the proof sketch of the above theorem. However, we note that if the neural network $\mathcal{H}(\mathbf{x};\mathbf{w})$ has locally Lipschitz gradients then this requirement is always satisfied, since in that case \cref{main_flow_thm} always has a unique solution. This would include, for example, the squared ReLU neural network.
	
	\subsubsection{Proof Sketch of \Cref{thm_align_init}}\label{proof_sketch}
	We provide a brief proof sketch for \Cref{thm_align_init} here; the complete proof can be found in \autoref{proof_align_init}. The proof ultimately relies upon two lemmas. The first one describes the approximate dynamics of $\mathbf{w}(t)$ in the initial stages of training for small initialization.  
	\begin{lemma}
		\label{main_thm}
		Let $C>1$ be an arbitrarily large constant and $\mathbf{w}(t)$ be any solution of \cref{gf_upd_gen} with initialization  $\mathbf{w}(0) = \delta\mathbf{w}_0$, where  $\delta\leq\sqrt{\frac{1}{C}}$ and $\|\mathbf{w}_0\|_2 = 1$. Then, for all $t\in\left[0,\frac{\ln(C)}{4\beta\tilde{\beta} }\right]$,
		\begin{equation}
		\|\mathbf{w}(t)\|_2 \leq \sqrt{C}\delta. 
		\label{bd_vk}
		\end{equation}
		Further, for the differential inclusion
		\begin{align}
		\dot{\mathbf{u}} \in  \partial \mathcal{N}_{-\ell'(\mathbf{0},\mathbf{y}),\mathcal{H}}(\mathbf{u}), {\mathbf{u}}(0) = \mathbf{w}_0,
		\label{main_flow_p}
		\end{align}
		and any $\epsilon>0$ there exists a small enough $\overline{\delta}>0$ such that for any $\delta\in(0,\overline{\delta})$,
		\begin{align}
		\left\|\frac{\mathbf{w}(t)}{\delta} - \mathbf{u}(t)\right\|_2 \leq \epsilon, \text{ for all } t\in\left[0,\frac{\ln(C)}{4\beta\tilde{\beta} }\right],
		\label{bd_uk}
		\end{align}
		where $\mathbf{u}(t)$ is a certain solution of \cref{main_flow_p}.
		
	\end{lemma}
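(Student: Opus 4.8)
The plan is to establish the two conclusions \eqref{bd_vk} and \eqref{bd_uk} in turn, the first one carving out a fixed compact region on which the rescaled dynamics of the second can be controlled uniformly in $\delta$. \textbf{For the norm bound \eqref{bd_vk}}, put $r(t)=\|\mathbf{w}(t)\|_2^2$, absolutely continuous with $\dot r = 2\langle\mathbf{w},\dot{\mathbf{w}}\rangle$ a.e. The key tool is Euler's identity for the Clarke subdifferential of the two-homogeneous map $\mathcal{H}(\mathbf{x}_i;\cdot)$: $\langle\mathbf{g},\mathbf{w}\rangle = 2\mathcal{H}(\mathbf{x}_i;\mathbf{w})$ for every $\mathbf{g}\in\partial_{\mathbf{w}}\mathcal{H}(\mathbf{x}_i;\mathbf{w})$, which follows by passing to the limit in the smooth identity $\langle\nabla_{\mathbf{w}}\mathcal{H},\mathbf{w}\rangle=2\mathcal{H}$, using continuity of $\mathcal{H}$ and convexity. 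Taking a measurable selection $\dot{\mathbf{w}}(t)=-\sum_i\partial_{\hat y}\ell\big(\mathcal{H}(\mathbf{x}_i;\mathbf{w}),y_i\big)\mathbf{g}_i(t)$ with $\mathbf{g}_i(t)\in\partial_{\mathbf{w}}\mathcal{H}(\mathbf{x}_i;\mathbf{w}(t))$ yields $\langle\mathbf{w},\dot{\mathbf{w}}\rangle=-2\sum_i\partial_{\hat y}\ell(\mathcal{H}(\mathbf{x}_i;\mathbf{w}),y_i)\,\mathcal{H}(\mathbf{x}_i;\mathbf{w})$. I bound this via Cauchy--Schwarz together with $\|\mathcal{H}(\mathbf{X};\mathbf{w})\|_2\le\beta\|\mathbf{w}\|_2^2$ (two-homogeneity and the definition of $\beta$), $|\partial_{\hat y}\ell(\hat y,y)|\le|\hat y-y|$ for square loss and $|\partial_{\hat y}\ell(\hat y,y)|\le 2|y|$ for logistic loss, and the definition of $\tilde\beta$, to get $\dot r\le 4\beta\tilde\beta\,r$ whenever $r\le1$. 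Since $r(0)=\delta^2\le 1/C\le1$, Gr\"onwall plus a continuity/bootstrap argument — the candidate bound $r(t)\le\delta^2 e^{4\beta\tilde\beta t}$ never leaves $[0,1]$ on $[0,\overline{T}]$ because $\delta^2 e^{4\beta\tilde\beta\,\overline{T}}=C\delta^2\le1$ — gives $r(t)\le C\delta^2$ for $t\in[0,\overline{T}]$.

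\textbf{For \eqref{bd_uk}}, set $\tilde{\mathbf{w}}(t)=\mathbf{w}(t)/\delta$, so $\|\tilde{\mathbf{w}}(t)\|_2\le\sqrt C$ on $[0,\overline{T}]$ by the previous step. Since the Clarke generalized Jacobian of the two-homogeneous vector map $\mathbf{w}\mapsto\mathcal{H}(\mathbf{X};\mathbf{w})$ is one-homogeneous and $\mathcal{H}(\mathbf{X};\delta\tilde{\mathbf{w}})=\delta^2\mathcal{H}(\mathbf{X};\tilde{\mathbf{w}})$, and writing the loss as $L=g\circ\mathcal{H}(\mathbf{X};\cdot)$ and $\mathcal{N}_{\mathbf{y},\mathcal{H}}=\langle\mathbf{y},\cdot\rangle\circ\mathcal{H}(\mathbf{X};\cdot)$ with $C^1$ (indeed smooth) outer maps — so the Clarke chain rule holds with equality — a direct computation, using $\nabla g(\mathbf{v})=-\mathbf{y}+O(\|\mathbf{v}\|)$ near $\mathbf{v}=\mathbf{0}$, gives
$$\operatorname{dist}\!\Big(\dot{\tilde{\mathbf{w}}}(t),\ \partial\mathcal{N}_{\mathbf{y},\mathcal{H}}(\tilde{\mathbf{w}}(t))\Big)\le K\delta^2\qquad\text{for a.e.\ }t\in[0,\overline{T}],$$
where $K$ depends only on $\mathbf{X},\mathbf{y},\mathcal{H},C$ and bounds the residual drift uniformly over the ball of radius $\sqrt C$ (local Lipschitzness of $\mathcal{H}(\mathbf{x}_i;\cdot)$ makes the Clarke Jacobian bounded on compacts). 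Decomposing $\mathcal{H}(\mathbf{X};\cdot)$ into neuronwise summands and invoking the sum rule would only give an inclusion into the a priori larger set $\sum_i y_i\,\partial_{\mathbf{w}}\mathcal{H}(\mathbf{x}_i;\cdot)$; keeping it as one vector-valued map and using the $C^1$-outer chain rule with equality is what pins the limiting drift to $\partial\mathcal{N}_{\mathbf{y},\mathcal{H}}$.

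\textbf{Passing $\delta\to0$.} Fix $\epsilon>0$. On $[0,\overline{T}]$ the family $\{\tilde{\mathbf{w}}_\delta\}$ is uniformly bounded by $\sqrt C$ and uniformly Lipschitz (the right-hand side above is bounded by $\sup\{\|\mathbf{v}\|_2:\mathbf{v}\in\partial\mathcal{N}_{\mathbf{y},\mathcal{H}}(\mathbf{u}),\,\|\mathbf{u}\|_2\le\sqrt C\}+K\delta^2$), and $\tilde{\mathbf{w}}_\delta(0)=\mathbf{w}_0$ for all $\delta$. By Arzel\`a--Ascoli, along any sequence $\delta_j\to0$ some subsequence of $\tilde{\mathbf{w}}_{\delta_j}$ converges uniformly to a limit $\mathbf{u}^\star$, and the standard closure theorem for differential inclusions with upper-semicontinuous, convex- and compact-valued right-hand sides (pass to the limit via weak-$L^2$ convergence of the derivatives and convexity) shows $\mathbf{u}^\star$ solves \eqref{main_flow_p} with $\mathbf{u}^\star(0)=\mathbf{w}_0$. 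Hence $\operatorname{dist}_{C([0,\overline{T}])}(\tilde{\mathbf{w}}_\delta,\mathcal{U})\to0$ as $\delta\to0$, where $\mathcal{U}$ is the nonempty, compact set of solutions of \eqref{main_flow_p} started at $\mathbf{w}_0$ — were this false, a bad subsequence would contradict the accumulation just shown. So there is $\overline{\delta}>0$ such that for every $\delta\in(0,\overline{\delta})$ one can choose $\mathbf{u}\in\mathcal{U}$ with $\|\tilde{\mathbf{w}}_\delta-\mathbf{u}\|_{C([0,\overline{T}])}\le\epsilon$, which is \eqref{bd_uk}.

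\textbf{Main obstacle.} The crux is this last step together with the subdifferential-calculus point preceding it: correctly identifying the limiting inclusion, and then transferring uniform closeness of the perturbed trajectories to the full solution set of the unperturbed NCF flow \emph{despite non-uniqueness of solutions} — which is exactly why the conclusion asserts closeness only to ``a certain solution'' $\mathbf{u}(t)$, namely the one produced by the compactness argument at each $\delta$, rather than to a prescribed one. By comparison, the Gr\"onwall/bootstrap estimate and the homogeneity bookkeeping are routine.
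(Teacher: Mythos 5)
Your first part (the norm bound via Euler's identity, the $\dot r \le 4\beta\tilde\beta r$ estimate, and Gr\"onwall with the bootstrap keeping $r\le1$) matches the paper's argument. The compactness/closure machinery in your second part — Arzel\`a–Ascoli to extract a uniform limit, the closure theorem for upper-semicontinuous convex-compact-valued differential inclusions, and the contradiction argument giving $\operatorname{dist}(\tilde{\mathbf w}_\delta,\mathcal U)\to0$ — is exactly what the paper packages as \cref{err_bd_diff_inclusion}, a Filippov-type stability result, and that part of your plan is sound.

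Where your argument has a genuine gap is the subdifferential bookkeeping that identifies the target inclusion. You assert that, because the outer map $g$ is $C^1$, the Clarke chain rule for $L=g\circ\mathcal H(\mathbf X;\cdot)$ and for $\mathcal N_{\mathbf y,\mathcal H}=\langle\mathbf y,\cdot\rangle\circ\mathcal H(\mathbf X;\cdot)$ holds with \emph{equality}, and from this you deduce $\operatorname{dist}\bigl(\dot{\tilde{\mathbf w}}(t),\partial\mathcal N_{\mathbf y,\mathcal H}(\tilde{\mathbf w}(t))\bigr)\le K\delta^2$. For a locally Lipschitz inner map that is not Clarke-regular (e.g.\ two-layer ReLU), the chain rule for $C^1$-outer maps only gives the one-sided inclusion $\partial(g\circ h)(x)\subseteq\nabla g(h(x))^\top\partial h(x)$; equality can fail, and correspondingly $\partial\mathcal N_{\mathbf y,\mathcal H}(\mathbf u)$ is in general a \emph{proper} subset of $\sum_i y_i\,\partial\mathcal H(\mathbf x_i;\mathbf u)$, a fact the paper itself records in \cref{gd_limit} with a ``$\subseteq$''. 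A selection $\dot{\tilde{\mathbf w}}(t)$ lying near the larger sum-rule set therefore need not lie near $\partial\mathcal N_{\mathbf y,\mathcal H}(\tilde{\mathbf w}(t))$, so the claimed $O(\delta^2)$ distance bound does not follow. The paper sidesteps this by never trying to pin the rescaled drift to $\partial\mathcal N_{\mathbf y,\mathcal H}$: it writes the rescaled dynamics as $\dot{\mathbf w}/\delta\in\sum_i(y_i-\xi_i(t))\partial\mathcal H(\mathbf x_i;\mathbf w/\delta)$ with $\|\xi(t)\|_2=O(\delta^2)$ and then invokes \cref{err_bd_diff_inclusion} against the sum-rule inclusion $\dot{\tilde{\mathbf w}}\in\sum_i y_i\,\partial\mathcal H(\mathbf x_i;\tilde{\mathbf w})$. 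If you rerun your compactness step against this larger inclusion instead, the rest of your argument goes through unchanged and coincides with the paper's proof; as written, the chain-rule-with-equality step is the one piece that does not hold for the nonsmooth networks the lemma is meant to cover.
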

	
	The first part of the lemma shows that for sufficiently small $\delta$ the gradient dynamics can spend an arbitrarily large time near the origin. To understand the implications of the second part, let us first focus on the differential inclusion in \cref{main_flow_p}, which is the positive gradient flow of the NCF defined with respect to $-\ell'(\mathbf{0},\mathbf{y})$ and neural network $\mathcal{H}$. From \cref{bd_uk}, we observe that for small initialization, in the initial stages of the dynamics, ${\mathbf{w}(t)}/{\delta}$ is approximately equal to $\mathbf{u}(t)$. To get an intuitive idea about the proof of the above lemma, consider the evolution of $\mathbf{w}(t)$ near small initializations, which approximately can be expressed as
		$$\dot{\mathbf{w}} \in  -\sum_{i=1}^n \nabla_{\hat{y}}\ell(\mathcal{H}(\mathbf{x}_i;\mathbf{w}),{y}_i)\partial\mathcal{H}(\mathbf{x}_i;\mathbf{w}) \approx  -\sum_{i=1}^n \nabla_{\hat{y}}\ell(0,{y}_i)\partial\mathcal{H}(\mathbf{x}_i;\mathbf{w}).$$
		Since $\mathcal{H}(\mathbf{x};\mathbf{w})$ is two-homogeneous, we have that $\partial\mathcal{H}(\mathbf{x};\mathbf{w})$ is one-homogeneous. Hence, dividing the above equation by $\delta$ we get
		$$\dot{\mathbf{w}}/\delta \approx -\frac{1}{\delta}\sum_{i=1}^n \nabla_{\hat{y}}\ell(0,{y}_i)\partial\mathcal{H}(\mathbf{x}_i;\mathbf{w}) =   -\sum_{i=1}^n \nabla_{\hat{y}}\ell(0,{y}_i)\partial\mathcal{H}(\mathbf{x}_i;\mathbf{w}/\delta) = \partial \mathcal{N}_{-\ell'(\mathbf{0},\mathbf{y}),\mathcal{H}}(\mathbf{w}/\delta).$$
		Since $\mathbf{w}(0)/\delta = \mathbf{w}_0$, from the above equation we observe that the evolution of $\mathbf{w}(t)/\delta$ is approximately governed by the differential inclusion in \cref{main_flow_p}. Thus, one expects $\mathbf{w}(t)/\delta$ to be close to a certain solution of \cref{main_flow_p}, provided $\|\mathbf{w}(t)\|_2$ remains small.
	
	Now, since $\delta$ is a positive scalar, dividing $\mathbf{w}(t)$ by it does not change the direction of $\mathbf{w}(t)$. Further, note that the dynamics of $\mathbf{u}(t)$ do not depend on $\delta$ and the approximation in \cref{bd_uk} can be made to hold for an arbitrarily long time by choosing sufficiently small $\delta$. Thus, if we choose $C$ large enough such that the approximation in \cref{bd_uk} is valid for a sufficiently long time in which $\mathbf{u}(t)$ approximately converges in direction, then by virtue of \cref{bd_uk}, $\mathbf{w}(t)$ would also approximately converge in direction.
	
	Thus, our aim is to establish approximate directional convergence of $\mathbf{u}(t)$ within some finite time. For this, we turn towards analyzing the gradient flow dynamics of the NCF. Recall that, for a given vector $\mathbf{z}$, the NCF is defined as
	\begin{equation}
	\mathcal{N}_{\mathbf{z},\mathcal{H}}(\mathbf{u})= \mathbf{z}^\top \mathcal{H}(\mathbf{X};\mathbf{u}),
	\end{equation}
	and gradient flow will satisfy for a.e.\  $t\geq 0$
	\begin{align}
	\frac{d{\mathbf{u}}}{dt} \in  \partial \mathcal{N}_{\mathbf{z},\mathcal{H}}(\mathbf{u}), {\mathbf{u}}(0) = \mathbf{u}_0,
	\label{gd_ncf}
	\end{align} 
	where $\mathbf{u}_0$ is the initialization. 
	
	Since the function value increases along the gradient flow trajectory and $\mathcal{N}_{\mathbf{z},\mathcal{H}}(\mathbf{u})$ may not be bounded from above, the gradient flow trajectory can potentially diverge to infinity and take $\mathcal{N}_{\mathbf{y},\mathcal{H}}(\mathbf{u})$ to infinity along with it. However, in the following lemma we show that the gradient flow will always converge in direction, and also characterize those directions.
	\begin{lemma}
		For any solution $\mathbf{u}(t)$ of \cref{gd_ncf},  either $\lim_{t\rightarrow\infty}\mathcal{N}_{\mathbf{z},\mathcal{H}}(\mathbf{u}(t)) = \infty$ or $\lim_{t\rightarrow\infty}\mathcal{N}_{\mathbf{z},\mathcal{H}}(\mathbf{u}(t))= 0$. Also, either $\lim_{t\rightarrow\infty}\frac{\mathbf{u}(t)}{\|\mathbf{u}(t)\|_2}$ exists or $\lim_{t\rightarrow\infty}{\mathbf{u}(t)} = 0.$ If $\lim_{t\rightarrow\infty}\frac{\mathbf{u}(t)}{\|\mathbf{u}(t)\|_2}$ exists then its value, say $\mathbf{u}^*$, must be a non-negative KKT point of the optimization problem
		\begin{equation}
		\max_{\|\mathbf{u}\|_2^2=1}\mathcal{N}_{\mathbf{z},\mathcal{H}}(\mathbf{u})= \mathbf{z}^\top \mathcal{H}(\mathbf{X};\mathbf{u}).
		\label{const_opt_ncf}
		\end{equation}
		\label{ncf_convg}
	\end{lemma}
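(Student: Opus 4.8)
The plan is to exploit the two-homogeneity of $\mathcal{H}$ (hence the two-homogeneity of $\mathcal{N}_{\mathbf{z},\mathcal{H}}$) to decompose the gradient flow into a radial part $r(t) = \|\mathbf{u}(t)\|_2$ and an angular part $\mathbf{v}(t) = \mathbf{u}(t)/r(t)$, and then run a Łojasiewicz-type argument on the sphere. First I would record the basic consequences of homogeneity: if $\mathcal{N}(\mathbf{u})$ is $2$-homogeneous and locally Lipschitz, then by Euler's identity for the Clarke subdifferential $\langle \mathbf{g},\mathbf{u}\rangle = 2\mathcal{N}(\mathbf{u})$ for every $\mathbf{g}\in\partial\mathcal{N}(\mathbf{u})$, and $\partial\mathcal{N}$ is $1$-homogeneous. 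Writing $\dot{\mathbf{u}} = \mathbf{g}(t)$ with $\mathbf{g}(t)\in\partial\mathcal{N}(\mathbf{u}(t))$ a.e., we get $\tfrac{d}{dt}\tfrac{1}{2}r^2 = \langle \mathbf{u},\mathbf{g}\rangle = 2\mathcal{N}(\mathbf{u}) = 2r^2\mathcal{N}(\mathbf{v})$, so $\dot{r} = 2r\,\mathcal{N}(\mathbf{v})$, and similarly $\tfrac{d}{dt}\mathcal{N}(\mathbf{u}(t)) = \langle\mathbf{g},\mathbf{g}\rangle = \|\mathbf{g}\|_2^2 \geq 0$, confirming monotonicity. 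The dichotomy $\mathcal{N}(\mathbf{u}(t))\to\infty$ or $\mathcal{N}(\mathbf{u}(t))\to 0$ I would get by showing that the limit $L := \lim_{t\to\infty}\mathcal{N}(\mathbf{u}(t))\in(0,\infty]$ cannot be a finite positive number: if $L<\infty$ then $\int_0^\infty\|\mathbf{g}(t)\|_2^2\,dt = L - \mathcal{N}(\mathbf{u}(0))<\infty$, forcing $r(t)$ to be bounded (since $L>0$ would otherwise require $\mathcal{N}(\mathbf{v})$ bounded away from $0$ along a subsequence, pushing $r$ — and with it $\|\mathbf{g}\| = r\|\partial\mathcal{N}(\mathbf{v})\|$ — to stay large, contradicting integrability); one then derives on the compact set $\{\|\mathbf{u}\|_2\le R\}$ a contradiction with a desingularizing (Łojasiewicz/Kurdyka) inequality for the definable function $\mathcal{N}$, which says $\mathcal{N}$ cannot converge to a value it does not attain in finite-length trajectories while having square-integrable velocity, unless the trajectory has finite length and converges — in which case the limit point $\bar{\mathbf{u}}$ satisfies $0\in\partial\mathcal{N}(\bar{\mathbf{u}})$, whence $L = \mathcal{N}(\bar{\mathbf{u}}) = \tfrac12\langle 0,\bar{\mathbf{u}}\rangle = 0$, contradiction. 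So $L\in\{0,\infty\}$.

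Next, for the directional statement I would pass to the sphere. If $L=0$ then $\mathcal{N}(\mathbf{v}(t))\le 0$ eventually can't be ruled out directly, but the cleaner route is: when $\mathbf{u}(t)\not\to 0$, I claim $r(t)$ is bounded below by a positive constant eventually, and then project. Define the normalized flow; a standard computation gives $\dot{\mathbf{v}} = \tfrac{1}{r}(\mathbf{g} - \langle\mathbf{g},\mathbf{v}\rangle\mathbf{v}) = \tfrac{1}{r}\,\mathbf{g}_\perp$ after a time reparametrization $ds = dt/r$ (using $1$-homogeneity of $\partial\mathcal{N}$ to write $\mathbf{g} = r\,\tilde{\mathbf{g}}$ with $\tilde{\mathbf{g}}\in\partial\mathcal{N}(\mathbf{v})$), so in the new time $s$ the angular variable $\mathbf{v}(s)$ follows the projected subgradient flow of $\mathcal{N}$ restricted to $\mathcal{S}^{k-1}$. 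Since $\mathcal{N}|_{\mathcal{S}^{k-1}}$ is definable and Lipschitz on a compact manifold, the Kurdyka–Łojasiewicz inequality for definable functions (as used in \citet{ji_matus_align,Lyu_ib}) applies: the projected flow has finite length in $s$, so $\mathbf{v}(s)$ converges to some $\mathbf{u}^*\in\mathcal{S}^{k-1}$ which is a critical point of the constrained problem \cref{const_opt_ncf}, i.e. $\partial\mathcal{N}(\mathbf{u}^*)$ meets $\mathbb{R}\mathbf{u}^*$, which is exactly the KKT condition with multiplier $\lambda$ satisfying $2\mathcal{N}(\mathbf{u}^*) = \langle\mathbf{g}^*,\mathbf{u}^*\rangle = \lambda$; non-negativity of the multiplier follows because $\mathcal{N}(\mathbf{u}(t))$ is nondecreasing so its limiting radial-rescaled value makes $\lambda = 2\lim\mathcal{N}(\mathbf{v})\ge 0$ (and the "non-negative KKT point" terminology is matched — I would cite the precise constrained-optimization lemma the paper sets up). The remaining case is that the reparametrized time $\int_0^\infty dt/r(t)$ is finite, i.e. $r(t)\to\infty$ fast; but then $L=\infty$, and I'd still need convergence of $\mathbf{v}$ — here I'd instead bound the angular length directly in $t$: $\int\|\dot{\mathbf{v}}\|\,dt = \int \tfrac{1}{r}\|\mathbf{g}_\perp\|\,dt \le \big(\int\|\mathbf{g}\|^2 r^{-2}\big)^{1/2}(\cdots)$, and use that $\int\|\mathbf{g}\|^2 = \int \dot{\mathcal{N}} $ may be infinite — so this naive bound fails, and one must again invoke the KL inequality for $\mathcal{N}$ itself (not its restriction), which controls $\int\|\mathbf{g}_\perp\|/r$ via $\int |\tfrac{d}{dt}\psi(\mathcal{N}(\mathbf{u}))|$ for a desingularizer $\psi$.

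The main obstacle, as signposted in the last sentence, is the case $\mathcal{N}(\mathbf{u}(t))\to\infty$: the trajectory escapes to infinity, classical Łojasiewicz convergence (which needs a bounded trajectory) does not apply off the shelf, and one cannot simply rescale time because the rescaled time-horizon may be finite. The resolution I'd pursue is the homogeneity-adapted Łojasiewicz inequality: for a $2$-homogeneous definable $\mathcal{N}$, write $\mathcal{N}(\mathbf{u}) = r^2\mathcal{N}(\mathbf{v})$ and apply the KL inequality to the function $\mathbf{v}\mapsto\mathcal{N}(\mathbf{v})$ on the sphere together with the identity $\|\overline{\partial}\mathcal{N}(\mathbf{u})\|_2 = r\,\|\overline{\partial}(\mathcal{N}|_{\mathcal{S}^{k-1}})(\mathbf{v})\|$ (valid away from $\mathcal{N}(\mathbf{v})=0$ by the homogeneous structure) — this converts the unbounded-trajectory problem into a bounded one on $\mathcal{S}^{k-1}$ after the $ds = dt/r$ reparametrization, and the only thing to check is that the $s$-horizon is infinite whenever $\mathbf{u}(t)\not\to 0$, which follows because $\dot r = 2r\mathcal{N}(\mathbf{v})$ integrates to $\ln(r(t)/r(0)) = 2\int_0^t\mathcal{N}(\mathbf{v})\,dt' = 2\int\mathcal{N}(\mathbf{v})\,r\,ds$, and one shows $\mathcal{N}(\mathbf{v}(s))$ stays bounded along the finite-length sphere trajectory so the blow-up of $r$ forces $\int ds=\infty$. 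Pulling this together yields both alternatives: $\mathbf{u}^*=\lim \mathbf{u}/\|\mathbf{u}\|$ exists and is a non-negative KKT point, or $\mathbf{u}(t)\to 0$.
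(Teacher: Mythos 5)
Your plan of reducing to a flow on the sphere and applying a Kurdyka--Lojasiewicz argument is the same high-level idea the paper uses, just phrased differently: the paper works with the radially-normalized function $\tilde{\mathcal{N}}(\mathbf{u}) = \mathcal{N}(\mathbf{u})/\|\mathbf{u}\|_2^2 = \mathcal{N}(\mathbf{u}/\|\mathbf{u}\|_2)$ on $\mathbb{R}^k\setminus\{\mathbf{0}\}$ and invokes the unbounded-domain desingularizing lemma of \citet{ji_matus_align} (which requires $\|\mathbf{u}\|_2>1$ and is the reason for the paper's rescaling $\hat{\mathbf{u}}=2\mathbf{u}/\eta$ in the $\mathcal{N}\le 0$ case), whereas you propose a Riemannian KL on $\mathcal{S}^{k-1}$. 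Either route is sound, and your version sidesteps the $\|\mathbf{u}\|_2>1$ bookkeeping at the cost of having to make sense of Clarke subdifferentials on a manifold; the paper keeps everything in $\mathbb{R}^k$. However, two steps in your write-up are wrong as stated.

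First, the time reparametrization is a red herring. By $1$-homogeneity of $\partial\mathcal{N}$, writing $\mathbf{g}=r\tilde{\mathbf{g}}$ with $\tilde{\mathbf{g}}\in\partial\mathcal{N}(\mathbf{v})$, a direct computation gives $\dot{\mathbf{v}} = \tfrac{1}{r}(\mathbf{g}-\langle\mathbf{g},\mathbf{v}\rangle\mathbf{v}) = (I-\mathbf{v}\mathbf{v}^\top)\tilde{\mathbf{g}}$ already in original time $t$ — the factor $1/r$ cancels the factor $r$ from homogeneity. So the angular variable follows the projected subgradient flow on the sphere without any change of clock, and the entire discussion about whether $\int_0^\infty dt/r(t)$ is finite is moot. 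Indeed this is exactly what the paper's identity $\tfrac{d}{dt}\tilde{\mathcal{N}}(\mathbf{u}(t)) = \bigl\|(I-\mathbf{v}\mathbf{v}^\top)\overline{\partial}\mathcal{N}(\mathbf{v})\bigr\|_2^2 = \|\dot{\mathbf{v}}\|_2^2$ expresses.

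Second, and more seriously, your argument for the dichotomy $L:=\lim_{t\to\infty}\mathcal{N}(\mathbf{u}(t))\in\{0,\infty\}$ has a gap. You claim that $L\in(0,\infty)$ ``forces $r(t)$ bounded'' and then derive a contradiction on a compact ball. But $L\in(0,\infty)$ in fact forces $r\to\infty$: once $\mathcal{N}(\mathbf{u}(t))\ge L/2>0$, the identity $\tfrac{d}{dt}r^2 = 4\mathcal{N}(\mathbf{u})\ge 2L$ makes $r^2$ grow at least linearly. So the compact-ball Lojasiewicz argument you sketch never applies, and the parenthetical about $\mathcal{N}(\mathbf{v})$ being ``bounded away from $0$'' contradicts what actually happens ($\mathcal{N}(\mathbf{v})=\mathcal{N}(\mathbf{u})/r^2\to 0$ if $r\to\infty$ and $L<\infty$). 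The clean way to close this — and the paper's way — is to use the monotonicity of $\tilde{\mathcal{N}}$: if $\mathcal{N}(\mathbf{u}(t_0))>0$ at any finite $t_0$, then $\tilde{\mathcal{N}}(\mathbf{u}(t))\ge\tilde{\mathcal{N}}(\mathbf{u}(t_0))>0$ for $t\ge t_0$, so $\mathcal{N}(\mathbf{u}(t))\ge\tilde{\mathcal{N}}(\mathbf{u}(t_0))\,r(t)^2\to\infty$; and if $\mathcal{N}(\mathbf{u}(t))\le 0$ for all $t$, then $r(t)$ is nonincreasing (hence bounded) and $\mathcal{N}$ increases to a nonpositive limit, which must be $0$ (a negative limit would force $r\to 0$ in finite time via $\tfrac{d}{dt}r^2\le -4\gamma$). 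This avoids any appeal to Lojasiewicz for the dichotomy and makes the proof elementary at that stage; save the KL machinery for the angular convergence and the identification of $\mathbf{u}^*$ as a KKT point, where (as in the paper) you also need an upper-semicontinuity argument for $\partial\mathcal{N}$ rather than just asserting the limit is a critical point.
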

	The above lemma states that any solution of \cref{gd_ncf} will either converge to $\mathbf{0}$ or converge in direction to a KKT point of the constrained NCF.  To establish directional convergence in the above lemma, we follow a similar technique as in \citet[Theorem 3.1]{ji_matus_align}.
	
	To prove \Cref{thm_align_init} from here, we combine \cref{main_thm} and \cref{ncf_convg}. From a given initialization $\delta\mathbf{w}_0$, we get $\mathbf{w}_0$. Then, for the solution $\mathbf{u}(t)$ of the differential inclusion in \cref{main_flow_p}, using \cref{ncf_convg}, we choose $\overline{T}$ large enough such that $\mathbf{u}(\overline{T})$ either approximately converges in direction to the KKT point of the NCF or gets close to $\mathbf{0}$. Then, based on that $\overline{T}$, using \cref{main_thm}, we choose $\delta$ sufficiently small such that $\mathbf{w}(t)/\delta$ is close to $\mathbf{u}(t)$, for all $t\in [0,\overline{T}]$. The result follows.
	
	There is, however, one issue with the above argument.  The differential inclusions could have multiple solutions, and in \cref{bd_uk}, the approximation holds for \emph{some} solution of \cref{main_flow_p}; it is not known beforehand which solution it would be. Therefore, we would need to choose $\overline{T}$ large enough such that \emph{all} solutions of \cref{main_flow_p} have approximately converged in direction. However, this may not be possible for all initializations $\mathbf{w}_0$. To illustrate this we consider a simple example. 
	
	\begin{figure}[t]
		\centering
		\includegraphics[width=0.5\textwidth]{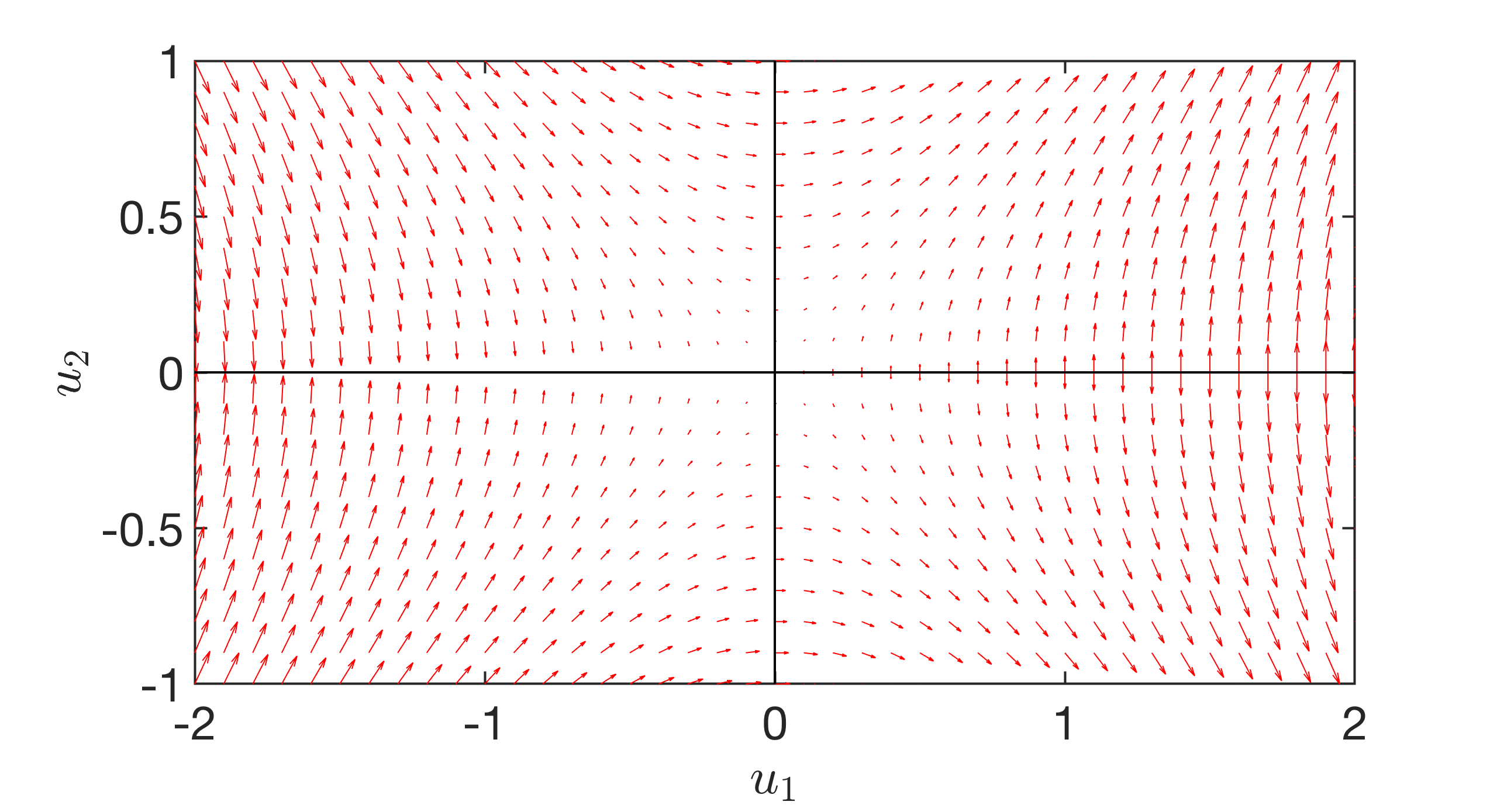}
		\caption{The gradient field of $f(u_1,u_2) = u_1|u_2|$.}
		\label{gf_plot_u1u2n}
	\end{figure} 
	Consider the function $f(u_1,u_2) = u_1|u_2|$ that satisfies \Cref{2_homo_assumption}, and is differentiable everywhere except along the line $u_2=0$.  In \Cref{gf_plot_u1u2n} we plot its gradient field. Note that $\tilde{\mathbf{u}} = [1,0]^\top$ is a critical point for $f(u_1,u_2) $, i.e., $\mathbf{0}\in \partial f(\tilde{\mathbf{u}})$. Thus, if initialized at  $\tilde{\mathbf{u}}$, one possible gradient flow solution is to stay at $\tilde{\mathbf{u}}$ for all $t\geq 0$. However, $\partial f(\tilde{\mathbf{u}})$ contains other vectors which could lead to gradient flow escaping from $\tilde{\mathbf{u}}$. Moreover, one could construct a gradient flow solution that can spend arbitrary amount of time at  $\tilde{\mathbf{u}}$ before escaping it. Specifically, for any finite $T$, 
	$$\mathbf{u}_T(t) = \left\{\begin{matrix}
	\begin{bmatrix}
	1\\ 
	0
	\end{bmatrix}, \text{ for all }t\in [0,T]\\ 
	\begin{bmatrix}
	\cosh\left({t-T}\right)\\ 
	\sinh\left({t-T}\right)
	\end{bmatrix},\text{ for all }t\geq T,
	\end{matrix}\right.$$
	is a possible gradient flow solution. We note that $\lim_{t\rightarrow\infty} \mathbf{u}_T(t)/\|\mathbf{u}_T(t)\|_2 = [1/\sqrt{2},1/\sqrt{2}]^\top$, however, clearly for any finite time $\overline{T}$ we can choose $T$ large such that $\mathbf{u}_T(\overline{T})/\|\mathbf{u}_T(\overline{T})\|_2 $ stays away from $ [1/\sqrt{2},1/\sqrt{2}]^\top$. Thus, we can not establish finite time approximate directional convergence for all possible gradient flow solutions. Complete details for this example can be found in \autoref{gf_simp_example}.
	
	To address this issue, we only consider initialization which leads to a unique solution. In particular, we assume $\mathbf{w}_0$ to be a non-branching initialization of \cref{main_flow_thm}. As noted earlier, if the neural network has locally Lipschitz gradients, then this requirement is always satisfied. However, for more general networks such as two-layer ReLU neural networks the above assumption may appear somewhat restrictive. That said, it is worth noting that a similar assumption was also made in \citet{lyu_simp}, where the full training dynamics of two-layer Leaky-ReLU neural networks were investigated in a simple setting involving linearly separable data. Furthermore, \cite{maennel_quant} addresses this challenge of non-uniqueness by asserting that the differential inclusion resulting from the gradient flow of the loss function will have a unique solution in ``almost all''  cases, but do not provide a formal proof. We leave it as an important future research direction to handle more general initializations. 
	
	Another potential research direction is to characterize the set of non-branching initializations. For example, in the function depicted in \Cref{gf_plot_u1u2n}, it can be shown that except for the set $\{u_2 = 0, u_1> 0\}$, all the other points are non-branching, which implies that almost all points are non-branching; for details see \Cref{unique_f}. Whether a similar behavior happen for a broader class of datasets and neural networks is an interesting future direction, and a positive answer would justify the non-branching assumption in those cases. However, establishing such a result, even for a two-layer (Leaky) ReLU neural networks, seems to be difficult. In the following lemma, we show that for two-layer Leaky-ReLU neural networks there exists sets, near certain KKT points, within which all points are non-branching. We hope it can help future works in better characterization of the set of non-branching initializations.
	
	\begin{lemma}
		Let $\mathcal{H}(\mathbf{x};v, \mathbf{u}) = v\sigma(\mathbf{x}^\top\mathbf{u})$, where $\sigma(x) = \max(x,\alpha x)$, for some $\alpha \in \mathbb{R}$. Suppose $\{v_*,\mathbf{u}_*\}$ is a KKT point of 
		$$\max_{v^2+\|\mathbf{u}\|_2^2=1}\mathcal{N}_{\mathbf{z},\mathcal{H}}(v,\mathbf{u})= v\mathbf{z}^\top \sigma(\mathbf{X}^\top\mathbf{u}),$$
		where $v_*\mathbf{z}^\top \sigma(\mathbf{X}^\top\mathbf{u}_*) > 0$ and $\min_{i\in [n]} |\mathbf{x}_i^\top\mathbf{u}_*| > 0$. Then there exists a $\gamma>0$ such that every element of the set $\mathcal{S} = \{v,\mathbf{u}: sign(v_*)v>\|\mathbf{u}\|_2,  \|\mathbf{u}  - \mathbf{u}_*\|_2 \leq \gamma\}$ is a non-branching initialization of the differential inclusion
		$$\begin{bmatrix}
		\dot{v}\\ 
		\dot{\mathbf{u}}
		\end{bmatrix} \in  \begin{bmatrix}
		\partial_v\mathcal{N}_{\mathbf{z},\mathcal{H}}(v,\mathbf{u})\\ 
		\partial_\mathbf{u}\mathcal{N}_{\mathbf{z},\mathcal{H}}(v,\mathbf{u})
		\end{bmatrix}.$$
		\label{sm_kkt_rl}
	\end{lemma}
	
	The proof can be found in \Cref{unique_kkt_pf}. Note that in the above lemma $\min_{i\in [n]} |\mathbf{x}_i^\top\mathbf{u}_*| > 0$ implies that the hyperplane defined by $\mathbf{u}_*$ does not pass through any vector in the training set. Also, although we have only considered a single neuron, the above lemma can be extended to multi-neuron setting by ensuring that the initialization of each neuron is in a set similar to $\mathcal{S}$.
	\subsubsection{A Corollary for Separable Neural Networks}
	
	We next consider the case when $\mathcal{H}(\mathbf{x};\mathbf{w})$ is separable and can be divided into smaller neural networks. In the following lemma, we describe the directional convergence for such neural networks near small initialization. 
	\begin{corollary}\label{sep_nn}
		Suppose we can write $\mathbf{w} = \left[\mathbf{w}_1,\hdots,\mathbf{w}_H\right]^\top$ such that $\mathcal{H}(\mathbf{x};\mathbf{w}) = \sum_{i=1}^H \mathcal{H}_i(\mathbf{x};\mathbf{w}_i)$, for all $\mathbf{x}$. Consider the same setting as in \Cref{thm_align_init}, then,  for any $\epsilon\in(0,\eta)$, where $\eta$ is a positive constant, there exist $C>1$ and $\overline{\delta}>0$ such that for any $\delta\in(0,\overline{\delta})$, and for all $i\in [H]$, either
		$$\sqrt{C}\delta\geq\|\mathbf{w}_i(\overline{T})\|_2\geq \delta\eta, \text{ and }  \frac{\mathbf{w}_i(\overline{T})^\top\hat{\mathbf{u}}_i}{\|\mathbf{w}_i(\overline{T})\|_2} \geq 1 - \left(1+\frac{3}{2\eta}\right)\epsilon,$$
		where $\hat{\mathbf{u}}_i$ is a non-negative KKT point of 
		\begin{align}
		\label{ind_ncf_const}
		\max_{\|\mathbf{u}\|_2^2=1} \mathcal{N}_{-\ell'(\mathbf{0},\mathbf{y}),\mathcal{H}_i}(\mathbf{u}) = -\ell'(\mathbf{0},\mathbf{y})^\top \mathcal{H}_i(\mathbf{X};\mathbf{u}),
		\end{align}
		or $$\|\mathbf{w}_i(\overline{T})\|_2 \leq 2\delta\epsilon, \text{ where } \overline{T} = \frac{\ln(C)}{4\beta\tilde{\beta} }.$$
	\end{corollary}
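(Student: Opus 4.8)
The plan is to reduce the separable case to the machinery already developed for \Cref{thm_align_init}, exploiting the fact that separability of $\mathcal{H}$ passes to the neural correlation function. Concretely, since $\mathcal{H}(\mathbf{x};\mathbf{w}) = \sum_{i=1}^H \mathcal{H}_i(\mathbf{x};\mathbf{w}_i)$, we have $\mathcal{N}_{\mathbf{y},\mathcal{H}}(\mathbf{w}) = \mathbf{y}^\top\mathcal{H}(\mathbf{X};\mathbf{w}) = \sum_{i=1}^H \mathcal{N}_{\mathbf{y},\mathcal{H}_i}(\mathbf{w}_i)$, a sum of locally Lipschitz definable functions acting on disjoint blocks of variables, and each $\mathcal{H}_i$ is itself two-homogeneous and definable, so \Cref{2_homo_assumption} holds for it. Because the Clarke subdifferential of a function that is block-separable in this way factors as the Cartesian product of the individual Clarke subdifferentials, the positive gradient flow \eqref{main_flow_thm} decouples: writing $\mathbf{w}_0 = [\mathbf{w}_{0,1},\ldots,\mathbf{w}_{0,H}]^\top$, the (unique, since $\mathbf{w}_0$ is non-branching) solution $\mathbf{u}(t)$ of \eqref{main_flow_thm} has blocks $\mathbf{u}_i(t)$ each solving $\dot{\mathbf{u}}_i\in\partial\mathcal{N}_{\mathbf{y},\mathcal{H}_i}(\mathbf{u}_i)$ with $\mathbf{u}_i(0)=\mathbf{w}_{0,i}$. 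Moreover each $\mathbf{w}_{0,i}$ is a non-branching initialization of this block inclusion, since two distinct block solutions would, combined with the remaining coordinates, yield two distinct solutions of \eqref{main_flow_thm}; in particular if $\mathbf{w}_{0,i}=\mathbf{0}$ then $\mathbf{u}_i\equiv\mathbf{0}$.

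Next I would invoke \Cref{main_thm} for the \emph{full} flow. The loss $L$ itself does not decouple (the blocks interact through $\mathcal{H}(\mathbf{X};\mathbf{w})=\sum_i\mathcal{H}_i(\mathbf{X};\mathbf{w}_i)$), but \Cref{main_thm} already handles exactly this situation: for any target $\epsilon>0$ and any $C>1$ there is a threshold $\overline{\delta}$ so that for $\delta\in(0,\overline{\delta})$ one has $\|\mathbf{w}(t)\|_2\le\sqrt{C}\delta$ and $\|\mathbf{w}(t)/\delta-\mathbf{u}(t)\|_2\le\epsilon$ on $[0,\overline{T}]$ with $\overline{T}=\ln(C)/(4\beta\tilde{\beta})$, the constants $\beta,\tilde{\beta}$ being those of the full network $\mathcal{H}$. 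Restricting to the $i$-th block of each vector, both bounds are inherited coordinate-wise: $\|\mathbf{w}_i(t)\|_2\le\sqrt{C}\delta$ and $\|\mathbf{w}_i(t)/\delta-\mathbf{u}_i(t)\|_2\le\epsilon$ for every $i$ and every $t\in[0,\overline{T}]$.

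It then remains to understand the block flows $\mathbf{u}_i(t)$, which is what \Cref{ncf_convg} does: for each $i$, either $\lim_{t\to\infty}\mathbf{u}_i(t)=\mathbf{0}$, or $\|\mathbf{u}_i(t)\|_2\to\infty$ and $\mathbf{u}_i(t)/\|\mathbf{u}_i(t)\|_2$ converges to a non-negative KKT point $\hat{\mathbf{u}}_i$ of \eqref{ind_ncf_const}. Since $H$ is finite, I would choose $\overline{T}$ large enough that, simultaneously for all $i$, either $\|\mathbf{u}_i(\overline{T})\|_2$ is below the chosen tolerance (first case) or $\|\mathbf{u}_i(\overline{T})\|_2$ exceeds a common positive constant $2\eta$, obtained as the minimum over the finitely many second-case blocks of the per-block constants supplied by \Cref{low_bd_U}, while the direction of $\mathbf{u}_i(\overline{T})$ lies within the prescribed tolerance of $\hat{\mathbf{u}}_i$. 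Fixing $C$ by $\overline{T}=\ln(C)/(4\beta\tilde{\beta})$ and then $\delta$ small via \Cref{main_thm}, the two per-block alternatives in the statement follow from the same elementary perturbation estimate used at the end of the proof of \Cref{thm_align_init}: when $\|\mathbf{u}_i(\overline{T})\|_2\ge 2\eta$ and $\|\mathbf{w}_i(\overline{T})/\delta-\mathbf{u}_i(\overline{T})\|_2\le\epsilon$, the vector $\mathbf{w}_i(\overline{T})$ has norm at least $\delta\eta$ and makes a small angle with $\mathbf{u}_i(\overline{T})$, hence with $\hat{\mathbf{u}}_i$, giving the stated correlation bound; otherwise $\|\mathbf{w}_i(\overline{T})\|_2\le 2\delta\epsilon$. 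The upper bound $\|\mathbf{w}_i(\overline{T})\|_2\le\sqrt{C}\delta$ is immediate from \Cref{main_thm}.

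The hard part, though I expect it to be essentially routine, is the very first step: verifying cleanly that the Clarke subdifferential of the block-separable $\mathcal{N}_{\mathbf{y},\mathcal{H}}$ factors as a product and that the non-branching property is inherited by each block, so that the decoupled block flows $\mathbf{u}_i(t)$ are genuinely the objects to which \Cref{ncf_convg} applies. After that, the remaining work is bookkeeping — making the escape time $\overline{T}$ and the lower-bound constant $\eta$ uniform over the finitely many blocks — which is harmless, since the blocks whose NCF flow vanishes are simply absorbed into the second alternative of the statement.
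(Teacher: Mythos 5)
Your proposal is correct and follows essentially the same route as the paper's own proof: decouple the auxiliary flow \eqref{main_flow_thm} into its blocks (separability of $\mathcal{H}$ passes to $\mathcal{N}_{\mathbf{y},\mathcal{H}}$, and the Clarke subdifferential of a block-separable sum is the product of the block subdifferentials), apply \Cref{main_thm} to the full flow and restrict the error bound coordinate-wise, apply \Cref{ncf_convg} and \Cref{low_bd_U} per block, take the minimum $\eta$ and the maximum escape time over the finitely many blocks, and finish with the same perturbation estimate used at the end of the proof of \Cref{thm_align_init}. The one place you go beyond the paper is in explicitly arguing that the non-branching property of $\mathbf{w}_0$ descends to each block initialization $\mathbf{w}_{0,i}$ (by splicing any purported second block solution into the full flow), a point the paper leaves implicit; this is a helpful clarification rather than a different approach.
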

	The above result establishes that for separable neural networks, the weights of smaller neural networks approximately converge in direction to the KKT points of the optimization problem in \cref{ind_ncf_const}, the constrained NCF defined with respect to the output of smaller neural networks.
	
	Indeed, this is precisely what we observed for the ``toy'' experiments depicted in \Cref{full_loss_traj}. Recall, in that case, the neural network was the sum of squared ReLU functions and hence satisfies the setting of \Cref{sep_nn}. In the bottom of \Cref{near_init_evol_main}, we again plot the evolution of the direction of the weights for each hidden neuron. On the top, we plot the constrained NCF with respect to the output of each neuron, which will be identical for each neuron. We clearly observe that the weights of each neuron converge in direction towards KKT points of the constrained NCF.       
	
	\begin{figure}[t]
		\centering
		\includegraphics[width=0.5\textwidth]{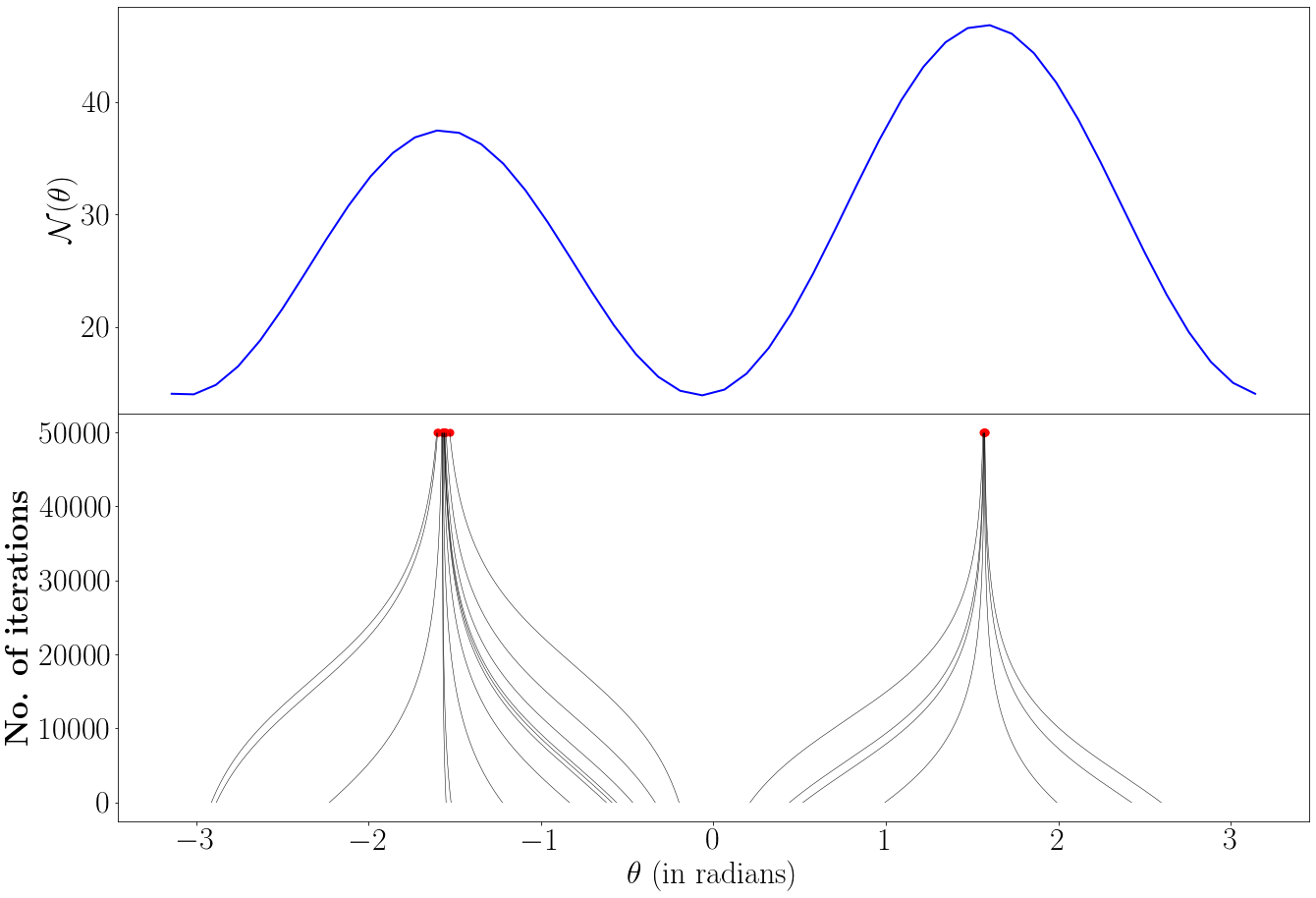}
		\caption{The lower part shows the content of \Cref{near_init_dir_evol} with the horizontal and vertical axes interchanged. The top plot shows the constrained NCF $\mathcal{N}_{\mathbf{y},\mathcal{H}}(\theta)= \sum_{i=1}^n y_i\max(0,[\cos(\theta), \sin(\theta)]^\top\mathbf{x}_i)^2$.  As predicted by \Cref{sep_nn}, the neuron weights converge in direction to the KKT points of the NCF.}
		\label{near_init_evol_main}
	\end{figure} 
	
	Finally, we would like to emphasize that while in \Cref{thm_align_init} the directional convergence is established for all the weights, \Cref{sep_nn} further establishes directional convergence for weights of smaller neural networks as well. In fact, the results in previous works on 2-layer ReLU neural network \citep{maennel_quant}, which is separable, are along the lines of \Cref{sep_nn} and it is possible to get those results using \Cref{sep_nn}. We provide these details in \Cref{dc_2_lyr_rl}.
	
	\subsection{Directional Convergence Near Saddle Points}\label{sec_align_saddle}
	Several theoretical and empirical works have observed a saddle-to-saddle dynamics during training of  neural networks with small initialization and square loss \citep{saddle_jacot,pesme_sd, gf_orth,lee_saddle}. The evolution of loss alternates between being stagnant and decreasing sharply, almost like a piecewise constant function. This indicates that weights move from one saddle of the loss function to another during training. Some theoretical works further show that at each saddle point only certain number of weights are non-zero. For example, \citet{pesme_sd} proves saddle-to-saddle dynamics in two-homogeneous diagonal linear networks, where at each saddle points only few weights are non-zero. The authors of \citet{gf_orth} study two-layer ReLU network with orthogonal inputs, and show that gradient flow enters neighborhood of a saddle point where one set of neurons have high norm while others have zero norm.
	
	In this section, we show that the directional convergence near initialization described in the previous section also occurs near certain saddle points for square loss. However, there could be different kinds of saddle points throughout the loss landscape. The choice of saddle points considered here is motivated by the above observations, such as only a certain number of weights being non-zero at the saddle points. 
	
	We assume that the weights of neural network $\mathbf{w}$ can be divided into two sets,  $\mathbf{w} = \left[\mathbf{w}_n, \mathbf{w}_z\right]$, such that $\mathcal{H}(\mathbf{x};\mathbf{w}) = \mathcal{H}_n(\mathbf{x};\mathbf{w}_n) + \mathcal{H}_z(\mathbf{x};\mathbf{w}_z)$, where $\mathcal{H}_n(\mathbf{x};\mathbf{w}_n)$ and $\mathcal{H}_z(\mathbf{x};\mathbf{w}_z)$ each satisfy \Cref{2_homo_assumption}. For square loss, we minimize 
	\begin{align}
	L\left(\mathbf{w}_n, \mathbf{w}_z\right) = \frac{1}{2}\sum_{i=1}^n\left\|\mathcal{H}_n(\mathbf{x}_i;\mathbf{w}_n) + \mathcal{H}_z(\mathbf{x}_i;\mathbf{w}_z) - {y}_i\right\|^2 = \frac{1}{2}\left\|\mathcal{H}_n(\mathbf{X};\mathbf{w}_n) + \mathcal{H}_z(\mathbf{X};\mathbf{w}_z) - \mathbf{y}\right\|^2.
	\label{obj_func}
	\end{align}
	The saddle point of \cref{obj_func} that we consider here satisfies the following.
	\begin{assumption}\label{saddle_pt_def}
		\emph{We assume 
			$\{\overline{\mathbf{w}}_n, \overline{\mathbf{w}}_z\}$ is a saddle point of \cref{obj_func} such that 
			\begin{equation}
			\|\overline{\mathbf{w}}_n\|_2\in [m,M], \text { and } \|\overline{\mathbf{w}}_z\|_2 = 0,
			\label{act_hn}
			\end{equation}
			where  $m, M$ are positive constants. Further, if $\mathcal{H}_n(\mathbf{x};\mathbf{w}_n)$ does not have a locally Lipschitz gradient, then we assume that there exists $\gamma, \kappa>0$ such that for all $\mathbf{w}_n$ satisfying $\|\mathbf{w}_n - \overline{\mathbf{w}}_n\|_2 \leq \gamma$ it holds that 
				\begin{equation}
				\left\langle\overline{\mathbf{w}}_n - \mathbf{w}_n , \mathbf{s}\right\rangle \geq -\kappa\|\overline{\mathbf{w}}_n - \mathbf{w}_n\|_2^2 , \text{where } \mathbf{s} \in -\partial_{\mathbf{w}_n} L(\mathbf{w}_n,\mathbf{0}).
				\label{tech_ass}
				\end{equation}}
	\end{assumption}
	In the above assumption, $\overline{\mathbf{w}}_n$ is the set of weights with high norm while $\overline{\mathbf{w}}_z$ contains sets of weights with zero norm.
	Due to homogeneity, $\mathcal{H}_z(\mathbf{x};\overline{\mathbf{w}}_z) = \mathbf{0}$, and thus $\mathcal{H}_n(\mathbf{x};\overline{\mathbf{w}}_n)$ is effectively the output of the neural network at  $\{\overline{\mathbf{w}}_n, \overline{\mathbf{w}}_z\}$. 
	
	When $\mathcal{H}_n(\mathbf{x};\mathbf{w}_n)$ does not have locally Lipschitz gradient, we require \cref{tech_ass} to ensure that if ${\mathbf{w}}_n$ initialized near $\overline{\mathbf{w}}_n$, then it stays near it for a sufficiently long time. We discuss the motivation for this inequality after discussing our main theorem of this section, stated below.
	\begin{theorem}\label{thm_align_sad}
		Let  $\left\{\overline{\mathbf{w}}_n,\overline{\mathbf{w}}_z\right\}$ satisfy \Cref{saddle_pt_def}, and define $\overline{\mathbf{y}} = {\mathbf{y}} - \mathcal{H}_n(\mathbf{X};\overline{\mathbf{w}}_n)$. Suppose $\zeta_z$ is a unit norm vector and a non-branching initialization of the differential inclusion
		\begin{align}
		\dot{\mathbf{u}} \in  \partial \mathcal{N}_{\overline{\mathbf{y}},\mathcal{H}_z}(\mathbf{u}), {\mathbf{u}}(0) = \zeta_z.
		\label{main_flow_saddle}
		\end{align}
		For any $\epsilon\in(0,\eta)$, where $\eta$ is a positive constant\footnote{Here, $\eta$ depends on the solution of \cref{main_flow_saddle}, which solely relies on $\mathbf{X}, \bar{\mathbf{y}}, \mathcal{H}_z, \zeta_z$, and is independent of $\delta$. See the proof for more details.}, there exist $C>1$ and $\overline{\delta}>0$ such that the following holds: for any $\delta\in(0,\overline{\delta})$ and gradient flow solution $\left\{\mathbf{w}_n(t),\mathbf{w}_z(t)\right\}$ of \cref{obj_func} that satisfies for a.e.\  $t\geq 0$
		\begin{align}
		\begin{bmatrix}
		\dot{\mathbf{w}}_n\\ 
		\dot{\mathbf{w}}_z
		\end{bmatrix} \in  -\begin{bmatrix}
		\partial_{\mathbf{w}_n}  L\left(\mathbf{w}_n, \mathbf{w}_z\right)\\ 
		\partial_{\mathbf{w}_z}  L\left(\mathbf{w}_n, \mathbf{w}_z\right)
		\end{bmatrix}, \begin{bmatrix}
		{\mathbf{w}}_n(0)\\ 
		{\mathbf{w}_z}(0)
		\end{bmatrix} = \begin{bmatrix}
		\overline{\mathbf{w}}_n + \delta\zeta_n\\ 
		\overline{\mathbf{w}}_z + \delta\zeta_z
		\end{bmatrix},
		\label{gf_upd_saddle}
		\end{align}
		where $\zeta_n$ is a unit norm vector, we have
		\begin{align}
		\|\mathbf{w}_n(t) - \overline{\mathbf{w}}_n\|_2^2 +  \|\mathbf{w}_z(t) - \overline{\mathbf{w}}_z\|_2^2 \leq C\delta^{2}, \text{ for all } t\in \left[0,  \overline{T}\right],
		\label{close_to_saddle}
		\end{align}
		where $\overline{T} = \frac{1}{M_2}\ln\left(C\right)$, and $M_2$ is a constan\footnote{$M_2$ depends on $\beta, \|\overline{\mathbf{y}}\|_2$ and various parameters associated with $\mathcal{H}_z$ and $\mathcal{H}_n$ near $\left\{\overline{\mathbf{w}}_n,\overline{\mathbf{w}}_z\right\}$ such as their Lipschitz constant, maximum value etc. Importantly, it does not depend on $\epsilon$ and $\delta$.}. Further, either
		$$\|\mathbf{w}_z(\overline{T})\|_2\geq \delta\eta, \text{ and }  \frac{\mathbf{w}_z(\overline{T})^\top\hat{\mathbf{u}}}{\|\mathbf{w}_z(\overline{T})\|_2} \geq 1 - \left(1+\frac{3}{2\eta}\right)\epsilon,$$
		where $\hat{\mathbf{u}}$ is a non-negative KKT point of 
		$$\max_{\|\mathbf{u}\|_2^2 = 1} \mathcal{N}_{\overline{\mathbf{y}},\mathcal{H}_z}(\mathbf{u}) = \overline{\mathbf{y}}^\top \mathcal{H}_z(\mathbf{X};\mathbf{u}), $$
		or $$\|\mathbf{w}_z(\overline{T})\|_2 \leq 2\delta\epsilon.$$
		
	\end{theorem}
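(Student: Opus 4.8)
The plan is to mimic the two-lemma structure used for \Cref{thm_align_init}, but now working in a neighborhood of the saddle $\{\overline{\mathbf{w}}_n,\overline{\mathbf{w}}_z\}$ rather than the origin. First I would change variables to the displacements $\mathbf{p}_n(t) = \mathbf{w}_n(t) - \overline{\mathbf{w}}_n$ and $\mathbf{p}_z(t) = \mathbf{w}_z(t) = \mathbf{w}_z(t) - \overline{\mathbf{w}}_z$ (using $\overline{\mathbf{w}}_z = \mathbf{0}$), with $\mathbf{p}_n(0) = \delta\zeta_n$, $\mathbf{p}_z(0) = \delta\zeta_z$. The goal of the first lemma (the analog of \Cref{main_thm}) is to show that both displacements stay of order $\delta$ for a time window of length $\overline{T} = \frac{1}{M_2}\ln C$; equivalently $\|\mathbf{p}_n(t)\|_2^2 + \|\mathbf{p}_z(t)\|_2^2 \leq C\delta^2$ on $[0,\overline{T}]$. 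The mechanism is a Gr\"onwall-type estimate: the loss gradients are locally Lipschitz away from the non-smooth locus, so $\frac{d}{dt}(\|\mathbf{p}_n\|_2^2 + \|\mathbf{p}_z\|_2^2) \leq M_2(\|\mathbf{p}_n\|_2^2 + \|\mathbf{p}_z\|_2^2)$ as long as we remain inside the $\gamma$-neighborhood where the estimates (and \cref{tech_ass}) are valid — so I would first run the argument for $t$ small, check that $\sqrt{C}\delta < \gamma$ for $\delta < \overline{\delta}$, and bootstrap. For the $\mathbf{w}_n$-block, the point $\overline{\mathbf{w}}_n$ need not be a critical point of $L(\cdot,\mathbf{0})$ restricted to $\mathbf{w}_n$ in the smooth case it is, since $\{\overline{\mathbf{w}}_n,\overline{\mathbf{w}}_z\}$ is a saddle of the full loss; in the non-smooth case \cref{tech_ass} is exactly the inequality $\langle \overline{\mathbf{w}}_n - \mathbf{w}_n, \mathbf{s}\rangle \geq 0$ that makes $\|\mathbf{p}_n\|_2^2$ non-increasing from the drift term, so the bound on $\|\mathbf{p}_n\|_2$ follows while $\mathbf{w}_z$ is still $O(\delta)$.

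The key observation driving the $\mathbf{w}_z$-block is that, because $\mathcal{H}_z$ is two-homogeneous, $\nabla_{\mathbf{w}_z} L$ evaluated at $\mathbf{w}_z = O(\delta)$ is dominated by its linear-in-residual part: writing the residual $\mathbf{r}(t) = \mathcal{H}_n(\mathbf{X};\mathbf{w}_n(t)) + \mathcal{H}_z(\mathbf{X};\mathbf{w}_z(t)) - \mathbf{y}$, we have $\mathbf{r}(t) = -\overline{\mathbf{y}} + O(\delta)$ on the window (since $\mathcal{H}_n(\mathbf{X};\mathbf{w}_n) = \mathcal{H}_n(\mathbf{X};\overline{\mathbf{w}}_n) + O(\delta)$ and $\mathcal{H}_z(\mathbf{X};\mathbf{w}_z) = O(\delta^2)$), so $-\partial_{\mathbf{w}_z} L(\mathbf{w}_n,\mathbf{w}_z) = \partial_{\mathbf{w}_z}\big(\overline{\mathbf{y}}^\top \mathcal{H}_z(\mathbf{X};\mathbf{w}_z)\big) + O(\delta^2) = \partial \mathcal{N}_{\overline{\mathbf{y}},\mathcal{H}_z}(\mathbf{w}_z) + O(\delta^2)$, using 1-homogeneity of the subdifferential. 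Rescaling $\mathbf{q}_z(t) = \mathbf{w}_z(t)/\delta$ gives $\dot{\mathbf{q}}_z \in \partial \mathcal{N}_{\overline{\mathbf{y}},\mathcal{H}_z}(\mathbf{q}_z) + O(\delta)$ with $\mathbf{q}_z(0) = \zeta_z$. A Grönwall/continuous-dependence argument (this is the same perturbation-to-differential-inclusion step as in the proof of \Cref{main_thm}, invoking definability and the non-branching hypothesis on $\zeta_z$ for \cref{main_flow_saddle}) then yields $\|\mathbf{q}_z(t) - \mathbf{u}(t)\|_2 \leq \epsilon$ on $[0,\overline{T}]$, where $\mathbf{u}(t)$ is the unique solution of \cref{main_flow_saddle}. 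Applying \Cref{ncf_convg} to $\mathbf{u}(t)$ with $\mathbf{z} = \overline{\mathbf{y}}$, $\mathcal{H} = \mathcal{H}_z$, I pick $\overline{T}$ large enough that $\mathbf{u}(\overline{T})$ has either approximately converged in direction to a non-negative KKT point of $\max_{\|\mathbf{u}\|_2=1}\overline{\mathbf{y}}^\top\mathcal{H}_z(\mathbf{X};\mathbf{u})$ or has $\|\mathbf{u}(\overline{T})\|_2$ very small; then I fix $C = e^{M_2\overline{T}}$ and choose $\overline{\delta}$ accordingly. Translating the two alternatives for $\mathbf{u}(\overline{T})$ back through $\|\mathbf{q}_z(\overline{T}) - \mathbf{u}(\overline{T})\|_2 \leq \epsilon$ and the triangle inequality for angles — exactly the final bookkeeping in the proof of \Cref{thm_align_init} — produces the stated dichotomy $\|\mathbf{w}_z(\overline{T})\|_2 \geq \delta\eta$ with cosine at least $1 - (1+\tfrac{3}{2\eta})\epsilon$, or $\|\mathbf{w}_z(\overline{T})\|_2 \leq 2\delta\epsilon$, with $\eta = \tfrac{1}{2}\inf_{t}\|\mathbf{u}(t)\|_2$ over the relevant range (finite and positive by the non-branching assumption, as in \Cref{low_bd_U}).

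The main obstacle, and the step that is genuinely new relative to \Cref{thm_align_init}, is controlling the coupling between the two blocks while staying near the saddle: the $O(\delta)$ error terms in the $\mathbf{w}_z$-equation come from $\mathbf{w}_n(t)$ moving off $\overline{\mathbf{w}}_n$, and a priori that motion could be of order $\delta$ times a factor growing like $e^{M_2 t}$, which over the window $\overline{T} \sim \ln(C)/M_2$ is of order $\delta\sqrt{C}$ — still $O(\delta)$ but with the right constant only if the Grönwall constant for the coupled system is handled carefully. The fix is to estimate the $\mathbf{w}_n$ and $\mathbf{w}_z$ displacements jointly (hence the single quadratic Lyapunov quantity $\|\mathbf{p}_n\|_2^2+\|\mathbf{p}_z\|_2^2$ in \cref{close_to_saddle}), bounding the cross terms by the local Lipschitz constants of $\mathcal{H}_n,\mathcal{H}_z$ near the saddle and folding everything into the single constant $M_2$; crucially $M_2$ must be shown independent of $\epsilon$ and $\delta$, which it is because it depends only on $\beta$, $\|\overline{\mathbf{y}}\|_2$, and the local geometry of $\mathcal{H}_n,\mathcal{H}_z$ around $\{\overline{\mathbf{w}}_n,\overline{\mathbf{w}}_z\}$. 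The non-smooth case additionally requires \cref{tech_ass} to rule out $\mathbf{w}_n$ escaping along a subgradient direction that points away from $\overline{\mathbf{w}}_n$; with that inequality the drift in $\|\mathbf{p}_n\|_2^2$ is sign-controlled and the joint Grönwall bound closes.
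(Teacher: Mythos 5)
Your proposal is correct and follows essentially the same route as the paper's proof: a two-part lemma combining (i) a Gr\"onwall estimate on the joint Lyapunov quantity $\|\mathbf{w}_n-\overline{\mathbf{w}}_n\|_2^2+\|\mathbf{w}_z\|_2^2$ — handled via local Lipschitzness of $\nabla\mathcal{H}_n$ in the smooth case or via \cref{tech_ass} in the non-smooth case — with (ii) a continuous-dependence argument showing $\mathbf{w}_z(t)/\delta$ stays $\epsilon$-close to the non-branching solution of \cref{main_flow_saddle}, followed by an appeal to \Cref{ncf_convg} and the same triangle-inequality bookkeeping as in \Cref{thm_align_init}. The only cosmetic differences are constant conventions ($C\delta^2$ vs.\ $2C\delta^2$) and a slightly different phrasing of the choice of $\eta$ (the paper sets $\eta=1$ when $\mathbf{u}(t)\to\mathbf{0}$ and otherwise takes $\eta$ from \Cref{low_bd_U}, rather than a single $\inf$ formula), neither of which affects correctness.
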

	In the above theorem, the initialization is near a saddle point which satisfies \Cref{saddle_pt_def}, and $\delta$ controls how far the initialization is from the saddle point. The vector $\overline{\mathbf{y}}$ represents the residual error at the saddle point and plays the same role as $-\ell'(\mathbf{0}, {\mathbf{y}})$ did in \Cref{thm_align_init}. Provided $\zeta_z$ is a non-branching initialization of \cref{main_flow_saddle}, we show that for sufficiently small $\delta$, the gradient flow spends enough time near the saddle point such that weights of small magnitude $\mathbf{w}_z$  either approximately converge in direction to the KKT point of the NCF defined with respect to $\overline{\mathbf{y}}$ and $\mathcal{H}_z$, or gets close to $\mathbf{0}$. The above theorem, similar to \Cref{thm_align_init}, shows directional convergence among weights of small magnitude. The proof technique is similar to the proof of \Cref{thm_align_init}; for details see \autoref{proof_align_sad}. We also demonstrate the phenomenon of directional convergence near saddle points using a numerical experiment in \autoref{proof_align_sad}.
	
	We next explain the motivation for \cref{tech_ass} in \Cref{saddle_pt_def} when $\mathcal{H}_n(\mathbf{x};\mathbf{w}_n)$  does not have locally Lipschitz gradients. Our proof of the above theorem crucially relies on showing that ${\mathbf{w}}_n(t)$  remains close to  $\overline{\mathbf{w}}_n$ and ${\mathbf{w}}_z(t)$  remains small for a sufficiently long time; i.e., \cref{close_to_saddle} holds. Suppose ${\mathbf{w}}_z(t)$ remains small, then the evolution of ${\mathbf{w}}_n(t)$ is approximately governed by the  gradient flow of $L\left(\mathbf{w}_n, 0\right) $. To understand the gradient flow dynamics of $L\left(\mathbf{w}_n, 0\right) $ near $\overline{\mathbf{w}}_n$, we use the following lemma.
	\begin{lemma}
		Suppose  $\{\overline{\mathbf{w}}_n, \overline{\mathbf{w}}_z\}$ is a saddle point of \cref{obj_func} such that 
		$\|\overline{\mathbf{w}}_n\|_2\in [m,M], \text { and } \|\overline{\mathbf{w}}_z\|_2 = 0.$ Then,
		$$  \mathbf{0} \in\partial_{\mathbf{w}_n} L\left(\overline{\mathbf{w}}_n, 0\right). $$
		\label{saddle_pt_wn} 
	\end{lemma}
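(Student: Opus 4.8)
The plan is to show that the condition $\mathbf{0}\in\partial_{\mathbf{w}_n}L(\overline{\mathbf{w}}_n,\mathbf{0})$ follows from the assumption that $\{\overline{\mathbf{w}}_n,\overline{\mathbf{w}}_z\}$ is a saddle point (in particular a critical point) of $L(\mathbf{w}_n,\mathbf{w}_z)$, by analyzing how the Clarke subdifferential of $L$ decouples across the block structure $\mathbf{w}=[\mathbf{w}_n,\mathbf{w}_z]$ at a point where $\overline{\mathbf{w}}_z=\mathbf{0}$. The subtlety is that $\partial_{\mathbf{w}_n}L(\mathbf{w}_n,\mathbf{0})$ — the Clarke subdifferential of the restricted map $\mathbf{w}_n\mapsto L(\mathbf{w}_n,\mathbf{0})$ — is a priori only \emph{contained in} the $\mathbf{w}_n$-projection of the full subdifferential $\partial L(\overline{\mathbf{w}}_n,\mathbf{0})$, so being a critical point of $L$ does not immediately give criticality of the restriction. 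The key observation that rescues this is two-homogeneity of $\mathcal{H}_z$: since $\mathcal{H}_z(\mathbf{x};c\mathbf{w}_z)=c^2\mathcal{H}_z(\mathbf{x};\mathbf{w}_z)$, the map $\mathbf{w}_z\mapsto\mathcal{H}_z(\mathbf{X};\mathbf{w}_z)$ has a vanishing gradient at $\mathbf{w}_z=\mathbf{0}$ wherever it is differentiable, and more generally $\partial_{\mathbf{w}_z}\big[\mathcal{H}_z(\mathbf{X};\mathbf{w}_z)\big]\big|_{\mathbf{w}_z=\mathbf{0}}$ behaves nicely; consequently $L$ is, to first order near $(\overline{\mathbf{w}}_n,\mathbf{0})$, independent of perturbations that keep $\mathbf{w}_z$ away, and the $\mathbf{w}_n$-block of $\partial L$ and $\partial_{\mathbf{w}_n}L(\cdot,\mathbf{0})$ actually coincide at this point.

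Concretely, I would proceed as follows. First, write out the chain rule for the Clarke subdifferential: for $L(\mathbf{w}_n,\mathbf{w}_z)=\tfrac12\|\mathcal{H}_n(\mathbf{X};\mathbf{w}_n)+\mathcal{H}_z(\mathbf{X};\mathbf{w}_z)-\mathbf{y}\|^2$, at a point of differentiability we have $\nabla_{\mathbf{w}_n}L = J_{\mathbf{w}_n}\mathcal{H}_n(\mathbf{X};\mathbf{w}_n)^\top\big(\mathcal{H}_n(\mathbf{X};\mathbf{w}_n)+\mathcal{H}_z(\mathbf{X};\mathbf{w}_z)-\mathbf{y}\big)$, where $J_{\mathbf{w}_n}\mathcal{H}_n$ is the Jacobian of $\mathbf{w}_n\mapsto\mathcal{H}_n(\mathbf{X};\mathbf{w}_n)$. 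Evaluating along sequences $\mathbf{x}_i\to(\overline{\mathbf{w}}_n,\mathbf{0})$ with $\mathbf{w}_z$-component held at $\mathbf{0}$, the residual factor converges to $\mathcal{H}_n(\mathbf{X};\overline{\mathbf{w}}_n)-\mathbf{y}=-\overline{\mathbf{y}}$ regardless of how $\mathbf{w}_n$ approaches $\overline{\mathbf{w}}_n$, so these limiting gradients are exactly the elements entering $\partial_{\mathbf{w}_n}L(\overline{\mathbf{w}}_n,\mathbf{0})$. Second, I would argue that the same set of limiting $\mathbf{w}_n$-gradients arises as the $\mathbf{w}_n$-block of limits of $\nabla L$ taken along \emph{arbitrary} sequences $(\mathbf{w}_n^{(i)},\mathbf{w}_z^{(i)})\to(\overline{\mathbf{w}}_n,\mathbf{0})$: the $\mathbf{w}_z$-perturbation contributes $\mathcal{H}_z(\mathbf{X};\mathbf{w}_z^{(i)})\to\mathcal{H}_z(\mathbf{X};\mathbf{0})=\mathbf{0}$ to the residual (continuity plus $\mathcal{H}_z(\mathbf{X};\mathbf{0})=\mathbf{0}$ from homogeneity), so it does not affect the limiting $\mathbf{w}_n$-gradient. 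Hence $\big(\partial L(\overline{\mathbf{w}}_n,\mathbf{0})\big)\big|_{\mathbf{w}_n\text{-block}} = \partial_{\mathbf{w}_n}L(\overline{\mathbf{w}}_n,\mathbf{0})$ after taking convex hulls. Third, since $(\overline{\mathbf{w}}_n,\overline{\mathbf{w}}_z)$ is a saddle (hence critical) point of $L$, we have $\mathbf{0}\in\partial L(\overline{\mathbf{w}}_n,\mathbf{0})$, and projecting onto the $\mathbf{w}_n$-block (which commutes with convex hull and with the projection identity just established) yields $\mathbf{0}\in\partial_{\mathbf{w}_n}L(\overline{\mathbf{w}}_n,\mathbf{0})$.

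The main obstacle I anticipate is making the projection/chain-rule step fully rigorous for nonsmooth definable $\mathcal{H}_n$ and $\mathcal{H}_z$: one must be careful that $\partial L$ is contained in (not necessarily equal to) the composition of Jacobians with the residual, and that the full-measure differentiability set used to define the Clarke subdifferential of the restriction $L(\cdot,\mathbf{0})$ can be taken to be a slice of a full-measure differentiability set of $L$ (this uses Fubini-type arguments valid for definable sets, or the fact that definable functions are $C^1$ on a dense open definable subset). The homogeneity of $\mathcal{H}_z$ is what makes the $\mathbf{w}_z$-block genuinely negligible near $\mathbf{0}$ and is the crux of why the lemma is true; everything else is bookkeeping with upper-semicontinuity of $\mathbf{x}\mapsto\partial L(\mathbf{x})$ and continuity of $\mathcal{H}_n,\mathcal{H}_z$.
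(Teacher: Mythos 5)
Your proof sets out to solve a difficulty that does not actually exist, because you have the direction of the key inclusion reversed. You write that $\partial_{\mathbf{w}_n}L(\cdot,\mathbf{0})$ ``is a priori only \emph{contained in} the $\mathbf{w}_n$-projection of the full subdifferential $\partial L(\overline{\mathbf{w}}_n,\mathbf{0})$,'' and you therefore go to some lengths (homogeneity of $\mathcal{H}_z$, Fubini-type slicing of the differentiability set, convex hulls) to upgrade this to an equality. In fact the inclusion runs the other way. Clarke's Proposition~2.3.15 states that for a locally Lipschitz $f(x,y)$,
\begin{align*}
\partial f(\bar{x},\bar{y}) \ \subseteq\ \partial_x f(\bar{x},\bar{y})\times\partial_y f(\bar{x},\bar{y}),
\end{align*}
where $\partial_x f$ and $\partial_y f$ are the Clarke subdifferentials of the restrictions $f(\cdot,\bar{y})$ and $f(\bar{x},\cdot)$; the inclusion can be strict, but it is in the direction that helps you. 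Consequently the $\mathbf{w}_n$-block projection of $\partial L(\overline{\mathbf{w}}_n,\mathbf{0})$ is \emph{contained in} $\partial_{\mathbf{w}_n}L(\overline{\mathbf{w}}_n,\mathbf{0})$, not the other way around, and no equality statement is needed.

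Once the direction is corrected, the lemma is immediate and this is exactly what the paper does: since $\{\overline{\mathbf{w}}_n,\overline{\mathbf{w}}_z\}$ is a saddle (hence critical) point of $L$ and $\overline{\mathbf{w}}_z=\mathbf{0}$, one has $\mathbf{0}\in\partial L(\overline{\mathbf{w}}_n,\mathbf{0})$, and Clarke's inclusion then gives $(\mathbf{0},\mathbf{0})\in\partial_{\mathbf{w}_n}L(\overline{\mathbf{w}}_n,\mathbf{0})\times\partial_{\mathbf{w}_z}L(\overline{\mathbf{w}}_n,\mathbf{0})$, in particular $\mathbf{0}\in\partial_{\mathbf{w}_n}L(\overline{\mathbf{w}}_n,\mathbf{0})$. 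Two-homogeneity of $\mathcal{H}_z$ plays no role in this lemma (it enters later, e.g.\ in \cref{locall_mn_wn} and the dynamical analysis), so the ``key observation'' you flag is a red herring here. Your second step --- showing that arbitrary sequences $(\mathbf{w}_n^{(i)},\mathbf{w}_z^{(i)})\to(\overline{\mathbf{w}}_n,\mathbf{0})$ produce the same limiting $\mathbf{w}_n$-gradients as sequences with $\mathbf{w}_z\equiv\mathbf{0}$ --- is plausibly true for the specific additive residual structure of $L$, but it is both unnecessary and the technically delicate part, since it requires matching full-measure differentiability sets across slices, which is exactly the type of argument Clarke's inclusion lets you avoid.
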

	If $\mathcal{H}_n(\mathbf{x};\mathbf{w}_n)$  has locally Lipschitz gradients, then $\nabla_{\mathbf{w}_n}L\left({\mathbf{w}}_n, 0\right)$ would be small and vary smoothly in the neighborhood of  $\overline{\mathbf{w}}_n$. This suffices to ensure that if ${\mathbf{w}}_n(0)$ is close to $\overline{\mathbf{w}}_n$, then ${\mathbf{w}}_n(t)$  remains close to  $\overline{\mathbf{w}}_n$ for a sufficiently long time. 
	
	However, if $\mathcal{H}_n(\mathbf{x};\mathbf{w}_n)$  does not have locally Lipschitz gradients, then $\mathbf{0}\in\partial_{\mathbf{w}_n}L\left(\overline{\mathbf{w}}_n, 0\right)$ but $\partial_{\mathbf{w}_n}L\left({\mathbf{w}}_n, 0\right) $ may be large near $\overline{\mathbf{w}}_n$. This prevents us from ensuring ${\mathbf{w}}_n(t)$  remains near $\overline{\mathbf{w}}_n$ . For example consider $g(\mathbf{u}) = (u_1|u_2|-1)^2$ and let $\tilde{\mathbf{u}} = [1,0]^T$. Then, $\mathbf{0}\in \partial g(\tilde{\mathbf{u}}) $. Let $\mathbf{u}^\delta = [1+\delta,\delta]^T$. Then, for any $\delta\in (0,0.1)$ and $\mathbf{s}\in -\partial g(\mathbf{u}^\delta)$, $\|\mathbf{s}\|_2 \geq 1$, and no matter how close $\mathbf{u}^\delta$ is to $\tilde{\mathbf{u}} $, if initialized at $\mathbf{u}^\delta$, the gradient flow will quickly get away from $\tilde{\mathbf{u}} $ (see the \autoref{gf_g_sec} for details).
	
	Hence, we require \cref{tech_ass} which implies that $\partial_{\mathbf{w}_n} L(\mathbf{w}_n,\mathbf{0})$ varies smoothly along $\overline{\mathbf{w}}_n - \mathbf{w}_n$. This is sufficient for us to ensure $\mathbf{w}_n(t)$ remains near $\overline{\mathbf{w}}_n$. In fact, if $\mathcal{H}_n(\mathbf{x};\mathbf{w}_n)$ have a locally Lipschitz gradient, then \cref{tech_ass} is automatically satisfied, since
		\begin{align*}
		\left\langle\overline{\mathbf{w}}_n - \mathbf{w}_n , \mathbf{s}\right\rangle &\geq -\|\overline{\mathbf{w}}_n - \mathbf{w}_n\|_2\|\nabla_{\mathbf{w}_n}L\left({\mathbf{w}}_n, 0\right)\|_2 \\
		&= -\|\overline{\mathbf{w}}_n - \mathbf{w}_n\|_2\|\nabla_{\mathbf{w}_n}L\left({\mathbf{w}}_n, 0\right) - \nabla_{\mathbf{w}_n}L\left(\overline{\mathbf{w}}_n, 0\right)\|_2
		\geq  -\kappa\|\overline{\mathbf{w}}_n - \mathbf{w}_n\|_2^2 ,
		\end{align*}
		where the equality holds since $\nabla_{\mathbf{w}_n}L\left(\overline{\mathbf{w}}_n, 0\right) = \mathbf{0}$ from \Cref{saddle_pt_wn}. The last inequality follows since $\mathcal{H}_n$ and the loss function have locally Lipschitz gradient. We also note that if the gradient of $\mathcal{H}_n(\mathbf{x};\mathbf{w}_n)$ is not locally Lipschitz globally, but is  locally Lipschitz in the neighborhood of $\overline{\mathbf{w}}_n$, then  \cref{tech_ass}  is also satisfied. For example again consider $g(\mathbf{u}) = (u_1|u_2|-1)^2$ and let $\tilde{\mathbf{u}} = [1,1]^T$. Then, it is easy to show that $\mathbf{0}\in \partial g(\tilde{\mathbf{u}}) $, and $g(\mathbf{u})$ has locally Lipschitz gradient around $\tilde{\mathbf{u}}$.
	
	\subsection{Higher Orders of Homogeneity}
	Given the results of \autoref{thm_align_init} and \autoref{thm_align_sad}, it is natural to ask whether neural networks with higher orders of homogeneity also exhibit similar behavior. Presently, we are unsure if such a behavior is possible. The main difficulty is due to the relative scaling between the weights and the gradient at small initialization. From our discussion in \Cref{proof_sketch}, we know that near small initialization, the evolution of $\mathbf{w}(t)$ can approximately be expressed as
		$$\dot{\mathbf{w}} \approx  -\sum_{i=1}^n \nabla_{\hat{y}}\ell(0,{y}_i)\partial\mathcal{H}(\mathbf{x}_i;\mathbf{w}).$$
		Now, from \Cref{euler_gen}, we know that if $\mathcal{H}(\mathbf{x};\mathbf{w})$ is $L$-homogeneous, then $\partial\mathcal{H}(\mathbf{x};\mathbf{w})$ is $(L-1)$-homogeneous. Thus, if $\|\mathbf{w}(0)\|_2$ scales as $\delta$, then $\|\partial\mathcal{H}(\mathbf{x};\mathbf{w(0)})\|_2$ is expected to scale as $\delta^{L-1}$. For $L=2$, $\|\mathbf{w}(0)\|_2$ and $\|\partial\mathcal{H}(\mathbf{x};\mathbf{w}(0))\|_2$ both scale as $\delta.$ Hence, for small $\delta$, in the initial stages of training,  the gradient flow will not change the norm of the weights significantly, but can have significant impact on its direction. However, for $L> 2$ and small $\delta$, the gradient is smaller than the weights, where the relative scaling between the two further worsens upon increasing $L$. Thus, it seems that, in the initials stages of training, the gradient will not have much impact on the direction and magnitude of the weights. Nonetheless, it is possible that in the slightly later stages of training the gradient can be large enough to change the direction of weights. However, it is unclear to us how to rigorously analyze such scenarios and is therefore a subject of future research.
	\section{Conclusions and Future Directions}
	In this work, we studied the gradient flow dynamics of two-homogeneous neural networks near small initializations and saddle points, and showed the approximate directional convergence of their weights in the initial stages of training.  
	
	An important future direction is, of course, to study the entire gradient flow dynamics of neural networks, and our work  could be an important step towards a comprehensive understanding of training dynamics of neural networks. Particularly, for successful training under small initialization, the gradient dynamics will have to eventually escape the origin. The escape direction may be determined by the directions to which the weights converge while the dynamics is near the origin. This also holds true while escaping other saddle points encountered by gradient dynamics during the training process, which notably is known to undergo saddle-to-saddle dynamics.
	
	Another possible future direction is to investigate similar directional convergence in deeper neural networks. For non-smooth neural networks, we required an additional assumption on the initialization to ensure uniqueness. It would be interesting to analyze scenarios when such assumptions do not hold.  We defer that to a future investigation. 
	\subsubsection*{Acknowledgments}
	The authors graciously acknowledge gift funding from InterDigital, which partially supported this work.
	
	\bibliography{main}

	\appendix
	\begin{appendices}
		
		\section{Key Properties of o-minimal Structures and Clarke Subdifferentials}
		In this section we give a brief overview of o-minimal structures, and the relevant properties of Clarke subdifferentials that are used in the proofs of our results. We borrow much of the discussion below from \citet{ji_matus_align, Davis_intro}
		
		An o-minimal structure is a collection $\mathcal{S} = \{\mathcal{S}_n\}_{n=1}^\infty$, where each $\mathcal{S}_n$ is set of subsets of $\mathbb{R}^n$, that satisfies following axioms:
		\begin{itemize}
			\item The elements of $\mathcal{S}_{1}$ are finite unions of points and intervals.
			\item All algebraic subsets of $\mathbb{R}^n$ are in $\mathcal{S}_n$.
			\item For all $n$, $\mathcal{S}_n$ is a Boolean subalgebra of the power set of $\mathbb{R}^n$.
			\item If $\mathbf{A}\in \mathcal{S}_n$, $\mathbf{B}\in \mathcal{S}_m$, then $\mathbf{A}\times\mathbf{B}\in\mathcal{S}_{n+m}$
			\item If $\mathcal{P}:\mathbb{R}^{n+1}\rightarrow\mathbb{R}^n$ is the projection on first $n$ coordinates and $\mathbf{A}\in \mathcal{S}_{n+1}$, then $\mathcal{P}(\mathbf{A})\in \mathcal{S}_n$.
		\end{itemize}
		For a given o-minimal structure $\mathcal{S}$, a set $\mathbf{A}\subset\mathbb{R}^n$ is definable if $\mathbf{A}\in\mathcal{S}_n$. A function $f:\mathbf{D}\rightarrow\mathbb{R}^m$ with $\mathbf{D}\subset\mathbb{R}^n$ is definable if the graph of $f$ is in $\mathcal{S}_{n+m}$. Since a set remains definable under projection, the domain $\mathbf{D}$ is also definable.
		
		In \citet{wilkie_o_minimal}, it was shown that there exists an o-minimal structure in which polynomials and exponential functions are definable. Also, the definability of a function is stable under algebraic operations, composition, inverse, maximum, and minimum. Since ReLU and Leaky-ReLU can each be expressed as maximums of two polynomials, it can be shown that the functions we will consider in this paper are definable.

		It is also true that deep neural networks with ReLU or Leaky-ReLU activation, and different kinds of layers, are definable.  For completeness, we state that result here as a lemma.
		\begin{lemma}\citep[Lemma B.2]{ji_matus_align}
			Suppose there exist $k,d_0,d_1,...,d_L>0$ and $L$ definable functions $(g_1,...,g_L)$ where $g_j: \mathbb{R}^{d_0}\times\hdots\times \mathbb{R}^{d_{j-1}}\times \mathbb{R}^k\rightarrow \mathbb{R}^{d_j}$. Let $h_1(\mathbf{x},\mathbf{w}):=g_1(\mathbf{x},\mathbf{w})$, and for $2\leq j\leq L$, 
			$$h_j(\mathbf{x},\mathbf{w}):=g_j (\mathbf{x},h_1(\mathbf{x},\mathbf{w}),...,h_{j-1}(\mathbf{x},\mathbf{w}),\mathbf{w})$$ then all $h_j$ are definable. (It suffices if each output coordinate of $g_j$ is the minimum or maximum over some finite set of polynomials, which allows for linear, convolutional, ReLU, max-pooling layers and skip connections.)
		\end{lemma}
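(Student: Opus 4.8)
The plan is to argue by induction on $j\in\{1,\dots,L\}$, using only the closure axioms of the ambient o-minimal structure $\mathcal{S}$. The inductive hypothesis is that $h_1,\dots,h_{j-1}$ are definable; the base case $j=1$ holds since $h_1=g_1$ is definable by assumption. For the inductive step, the key observation is that the ``stacking'' map
$$\Phi_{j-1}(\mathbf{x},\mathbf{w}):=\bigl(\mathbf{x},\,h_1(\mathbf{x},\mathbf{w}),\,\dots,\,h_{j-1}(\mathbf{x},\mathbf{w}),\,\mathbf{w}\bigr)$$
is definable and that $h_j=g_j\circ\Phi_{j-1}$, so $h_j$ is definable as a composition of definable functions, which closes the induction.

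To make this rigorous I would first isolate and prove two standard consequences of the axioms. (i) \emph{A tuple of definable functions on a common domain is definable}: if $f_1,\dots,f_r$ are definable on a domain $\mathbf{D}$, then the graph of $\mathbf{x}\mapsto(f_1(\mathbf{x}),\dots,f_r(\mathbf{x}))$ is obtained from $\mathrm{graph}(f_1),\dots,\mathrm{graph}(f_r)$ by taking Cartesian products with copies of $\mathbb{R}$, intersecting after imposing the (algebraic, hence definable) constraint that all the $\mathbf{x}$-blocks agree, and projecting away the duplicated $\mathbf{x}$-blocks; every step preserves definability. (ii) \emph{Composition preserves definability}: for definable $f:\mathbb{R}^n\to\mathbb{R}^m$ and $g:\mathbb{R}^m\to\mathbb{R}^p$,
$$\mathrm{graph}(g\circ f)=\mathcal{P}\bigl((\mathrm{graph}(f)\times\mathbb{R}^p)\cap(\mathbb{R}^n\times\mathrm{graph}(g))\bigr),$$
where $\mathcal{P}$ deletes the shared middle $\mathbb{R}^m$ block; this is a Cartesian product, an intersection, and a projection. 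Granting (i) and (ii), the inductive step is immediate: the coordinate projections $(\mathbf{x},\mathbf{w})\mapsto\mathbf{x}$ and $(\mathbf{x},\mathbf{w})\mapsto\mathbf{w}$ are polynomial hence definable, so by (i) the map $\Phi_{j-1}$---whose components are these projections together with $h_1,\dots,h_{j-1}$, definable by the inductive hypothesis---is definable, and then by (ii) so is $h_j=g_j\circ\Phi_{j-1}$.

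For the parenthetical remark I would reduce to showing that any $g$ each of whose output coordinates is a finite $\min$/$\max$ of polynomials is definable. Each polynomial $p:\mathbb{R}^N\to\mathbb{R}$ is definable since $\{(\mathbf{v},s):s-p(\mathbf{v})=0\}$ is algebraic, hence in $\mathcal{S}_{N+1}$; and for polynomials $p_1,\dots,p_q$ the graph of $\max_i p_i$ equals $\bigcup_i\bigl(\{(\mathbf{v},s):s=p_i(\mathbf{v})\}\cap\bigcap_{l}\{(\mathbf{v},s):s\ge p_l(\mathbf{v})\}\bigr)$, a finite union of intersections of sets that are semialgebraic---hence definable in any o-minimal structure, the order relation on $\mathbb{R}$ being obtainable from the axioms as a projection of a Boolean combination of algebraic sets---so $\max_i p_i$ is definable, and symmetrically for $\min$. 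A map whose output coordinates are each such a $\min$/$\max$ is then definable by fact (i). This template covers linear and convolutional layers and skip connections (polynomial in inputs and weights), ReLU $=\max(\,\cdot\,,0)$, Leaky-ReLU $=\max(\,\cdot\,,\alpha\,\cdot\,)$, and max-pooling, so every layer map has the required form and the claim follows for the whole network.

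I expect the genuine difficulty to be bookkeeping rather than ideas. The projection axiom only deletes a trailing coordinate---hence, by iteration, a block of trailing coordinates---so assembling $\Phi_{j-1}$ and carrying out the composition in (i)--(ii) requires repeatedly permuting coordinate blocks into the right position before projecting, and one must check that such permutations, being algebraic bijections, preserve definability (again via the graph/product/intersection/projection reasoning). Once (i) and (ii) are stated cleanly with this bookkeeping absorbed into them, the induction itself is two lines.
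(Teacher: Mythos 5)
The paper states this lemma purely as a citation to \citet[Lemma B.2]{ji_matus_align} and gives no proof of its own, so there is no internal argument to compare against; your proof supplies the standard argument one would expect to find in that reference. The induction via the stacking map $\Phi_{j-1}$, together with the two closure facts (tuples of definable functions are definable, and composition preserves definability via graph product, intersection, and projection), the acknowledged bookkeeping about permuting coordinate blocks before applying the trailing-coordinate projection axiom, and the semialgebraic-set argument for the parenthetical remark (graphs of finite $\min$/$\max$ of polynomials as Boolean combinations of algebraic sets and order constraints, the latter definable since $\{p\ge 0\}=\mathcal{P}(\{(\mathbf{x},y): p(\mathbf{x})=y^2\})$), are all correct and complete.
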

		
		We also note that, since definability is stable under composition, the objective functions arising for definable losses are also definable.

		\subsection{Chain Rules for Non-differentiable Functions}
		Recall that for any locally Lipschitz continuous function $f: X \rightarrow \mathbb{R}$, its Clarke subdifferential at a point $\mathbf{x}\in X$ is the set
		$$\partial f(\mathbf{x}) = \text{conv}\left\{\lim_{i\rightarrow\infty}\nabla f(\mathbf{x}_i): \lim_{i\rightarrow \infty}\mathbf{x}_i= \mathbf{x}, \mathbf{x}_i\in \Omega\right\},$$
		where $\Omega$  is any full-measure subset of $X$ such that $f$ is differentiable at each of its points, and $\overline{\partial} f(\mathbf{x})$ denotes the unique minimum norm subgradient.

		The functions considered in this paper can be compositions of non-differentiable functions. We use Clarke's chain rule of differentiation, described in the following lemma, to compute the Clarke subdifferentials in such cases.
		\begin{lemma}\citep[Theorem 2.3.9 ]{clarke_83} Let $h_1,\hdots,h_n:\mathbb{R}^d\rightarrow \mathbb{R}$ and $g: \mathbb{R}^n\rightarrow \mathbb{R}$ be locally Lipschitz functions, and $f(\mathbf{x})= g(h_1(\mathbf{x}),\hdots,h_n(\mathbf{x})) $, then,
			$$\partial f(\mathbf{x})  \subseteq \text{conv}\left\{ \sum_{i=1}^n \alpha_i\zeta_i: \zeta_i\in  \partial h_i(\mathbf{x}) , \alpha\in \partial g(h_1(\mathbf{x}),\hdots,h_n(\mathbf{x})) \right\}.$$
			\label{chain_rule_non_diff}
		\end{lemma}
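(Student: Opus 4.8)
The plan is to reduce the set inclusion $\partial f(\mathbf{x})\subseteq K$, with $K:=\text{conv}\big\{\sum_{i=1}^n\alpha_i\zeta_i:\zeta_i\in\partial h_i(\mathbf{x}),\ \alpha\in\partial g(h_1(\mathbf{x}),\dots,h_n(\mathbf{x}))\big\}$, to a family of scalar inequalities between support functions. First I would observe that $K$ is the convex hull of the image of the compact set $\partial g(h(\mathbf{x}))\times\prod_{i=1}^n\partial h_i(\mathbf{x})$ under the continuous bilinear map $(\alpha,\zeta_1,\dots,\zeta_n)\mapsto\sum_i\alpha_i\zeta_i$, hence $K$ is nonempty, compact, and convex, so $K=\{\xi:\langle\xi,\mathbf{v}\rangle\le\sigma_K(\mathbf{v})\text{ for all }\mathbf{v}\}$, where $\sigma_K$ is its support function. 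Since for a locally Lipschitz $f$ the limiting-gradient subdifferential used in the paper coincides with $\{\xi:\langle\xi,\mathbf{v}\rangle\le f^{\circ}(\mathbf{x};\mathbf{v})\text{ for all }\mathbf{v}\}$ (Clarke's gradient formula), where $f^{\circ}(\mathbf{x};\mathbf{v})=\limsup_{\mathbf{y}\to\mathbf{x},\,t\downarrow0}t^{-1}\big(f(\mathbf{y}+t\mathbf{v})-f(\mathbf{y})\big)$ is the generalized directional derivative, it then suffices to prove the single inequality $f^{\circ}(\mathbf{x};\mathbf{v})\le\sigma_K(\mathbf{v})$ for every direction $\mathbf{v}$.

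To estimate $f^{\circ}(\mathbf{x};\mathbf{v})$, choose sequences $\mathbf{y}_k\to\mathbf{x}$ and $t_k\downarrow0$ realizing the $\limsup$ (possible, and the value is finite, since $f=g\circ h$ is locally Lipschitz). Applying Lebourg's mean value theorem to the scalar locally Lipschitz function $g$ on the segment joining $h(\mathbf{y}_k)$ and $h(\mathbf{y}_k+t_k\mathbf{v})$ produces a point $c_k$ on that segment and $\alpha_k\in\partial g(c_k)$ with $f(\mathbf{y}_k+t_k\mathbf{v})-f(\mathbf{y}_k)=\langle\alpha_k,h(\mathbf{y}_k+t_k\mathbf{v})-h(\mathbf{y}_k)\rangle=\sum_{i=1}^n\alpha_{k,i}\big(h_i(\mathbf{y}_k+t_k\mathbf{v})-h_i(\mathbf{y}_k)\big)$. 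Because $h$ is continuous, both endpoints of the segment tend to $h(\mathbf{x})$, so $c_k\to h(\mathbf{x})$; the quotients $r_k^i:=t_k^{-1}\big(h_i(\mathbf{y}_k+t_k\mathbf{v})-h_i(\mathbf{y}_k)\big)$ are bounded by the Lipschitz constant of $h_i$ and the $\alpha_k$ are bounded by that of $g$, so after passing to a subsequence $\alpha_k\to\alpha$ and $r_k^i\to\rho_i$ for each $i$, whence $f^{\circ}(\mathbf{x};\mathbf{v})=\sum_i\alpha_i\rho_i$.

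It remains to recognise $\sum_i\alpha_i\rho_i$ as $\langle\xi,\mathbf{v}\rangle$ for some $\xi\in K$. Upper semicontinuity of $\partial g$ with $c_k\to h(\mathbf{x})$ and $\alpha_k\to\alpha$ gives $\alpha\in\partial g(h(\mathbf{x}))$. For each $i$, $\rho_i\le\limsup_{\mathbf{y}\to\mathbf{x},\,t\downarrow0}t^{-1}\big(h_i(\mathbf{y}+t\mathbf{v})-h_i(\mathbf{y})\big)=h_i^{\circ}(\mathbf{x};\mathbf{v})=\max_{\zeta\in\partial h_i(\mathbf{x})}\langle\zeta,\mathbf{v}\rangle$, and, moving the base point to $\mathbf{y}_k+t_k\mathbf{v}$ and the direction to $-\mathbf{v}$, symmetrically $\rho_i\ge\min_{\zeta\in\partial h_i(\mathbf{x})}\langle\zeta,\mathbf{v}\rangle$. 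Since $\partial h_i(\mathbf{x})$ is convex and $\zeta\mapsto\langle\zeta,\mathbf{v}\rangle$ is linear, $\{\langle\zeta,\mathbf{v}\rangle:\zeta\in\partial h_i(\mathbf{x})\}$ is a compact interval, so there exists $\zeta_i\in\partial h_i(\mathbf{x})$ with $\langle\zeta_i,\mathbf{v}\rangle=\rho_i$. Then $f^{\circ}(\mathbf{x};\mathbf{v})=\sum_i\alpha_i\langle\zeta_i,\mathbf{v}\rangle=\big\langle\sum_i\alpha_i\zeta_i,\mathbf{v}\big\rangle\le\sigma_K(\mathbf{v})$ because $\sum_i\alpha_i\zeta_i\in K$, which is exactly the required inequality; the lemma follows.

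The main obstacle is the very step that forces the convex hull and the inclusion (as opposed to an equality written with gradients): one cannot compose ordinary chain rules at almost every point, because $h$ may carry a set of positive measure into the nondifferentiability locus of $g$, so the set-valued $\partial g$ and Lebourg's mean value theorem have to be used instead. A secondary subtlety is that the coordinates $\alpha_{k,i}$ of the $g$-subgradient need not be sign-definite, which is why the two-sided bound $\min_{\zeta}\langle\zeta,\mathbf{v}\rangle\le\rho_i\le\max_{\zeta}\langle\zeta,\mathbf{v}\rangle$ together with convexity of $\partial h_i(\mathbf{x})$ is needed to realise $\rho_i$ as a genuine pairing $\langle\zeta_i,\mathbf{v}\rangle$; the remaining effort is careful bookkeeping of the three interlocking limits ($t_k\downarrow0$, $\mathbf{y}_k\to\mathbf{x}$, $k\to\infty$) and the successive subsequence extractions.
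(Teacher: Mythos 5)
The paper does not prove this lemma; it is imported verbatim from \citet{clarke_83} (Theorem~2.3.9), so there is no in-paper argument to compare against. Your blind proof is correct and is, in essence, the standard textbook argument for Clarke's chain rule. The key moves are all in order: since $K$ is compact and convex (image of a product of compact subdifferentials under a continuous bilinear map, then convex hull), the inclusion $\partial f(\mathbf{x})\subseteq K$ is equivalent to the scalar support-function inequality $f^{\circ}(\mathbf{x};\mathbf{v})\le\sigma_K(\mathbf{v})$ for all $\mathbf{v}$, because $\partial f(\mathbf{x})$ is exactly the convex set with support function $f^{\circ}(\mathbf{x};\cdot)$. Picking sequences realizing the $\limsup$, Lebourg's mean value theorem applied to $g$ on the chord from $h(\mathbf{y}_k)$ to $h(\mathbf{y}_k+t_k\mathbf{v})$ (which lies in a neighborhood of $h(\mathbf{x})$ for large $k$, so local Lipschitzness of $g$ suffices) produces $\alpha_k$, and the local Lipschitz bounds on $g$ and the $h_i$ justify the subsequence extraction giving $\alpha_k\to\alpha$ and $r^i_k\to\rho_i$; upper semicontinuity of $\partial g$ yields $\alpha\in\partial g(h(\mathbf{x}))$. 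The step most worth highlighting is the one you flagged yourself: because the coordinates $\alpha_i$ may have either sign, the one-sided bound $\rho_i\le h_i^{\circ}(\mathbf{x};\mathbf{v})$ is not enough; you also need $\rho_i\ge -h_i^{\circ}(\mathbf{x};-\mathbf{v})$ (obtained by rewriting the quotient with base point $\mathbf{z}_k=\mathbf{y}_k+t_k\mathbf{v}$ and direction $-\mathbf{v}$) so that convexity of $\partial h_i(\mathbf{x})$ realizes $\rho_i$ exactly as $\langle\zeta_i,\mathbf{v}\rangle$ for some $\zeta_i\in\partial h_i(\mathbf{x})$. With $\sum_i\alpha_i\zeta_i\in K$, the support-function bound follows, which is precisely what was needed.
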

		The chain rule for gradient flow described in the next lemma, which is crucial for our analysis, essentially implies that for differential inclusions $\overline{\partial} f(\mathbf{x})$ plays the same role as $\nabla f(\mathbf{x})$ does for differential equations.  
		\begin{lemma}\citep[Lemma 5.2]{Davis_intro}\citep[Lemma B.9]{ji_matus_align}
			Given a locally Lipschitz definable function  $f: \mathbf{D} \rightarrow \mathbb{R}$ with an open domain $\mathbf{D}$, for any interval $I$ and any arc $\mathbf{x}:I\rightarrow\mathbf{D}$, it holds for a.e.\ $t\in I$ that
			$$\frac{d(f(\mathbf{x}(t)))}{dt} = \langle \mathbf{x}^*(t), \dot{\mathbf{x}}(t)\rangle , \text{ for all } \mathbf{x}^*(t) \in \partial f(\mathbf{x}(t)).$$
			Further, if $\mathbf{x}:I\rightarrow\mathbf{D}$ satisfies
			$$ \dot{\mathbf{x}}\in\partial f(\mathbf{x}), \text{ for a.e.\ } t\geq0,$$
			then, it holds for a.e.\  $t\geq 0$ that
			$$\dot{\mathbf{x}}(t) = \overline{\partial} f(\mathbf{x}(t)), \text{ and }\frac{d(f(\mathbf{x}(t)))}{dt} = \|\overline{\partial} f(\mathbf{x}(t))\|_2^2,$$
			and therefore,
			$$f(\mathbf{x}(t)) - f(\mathbf{x}(0)) = \int_{0}^t \|\overline{\partial} f(\mathbf{x}(s))\|_2^2 ds, \forall t\geq 0. $$
			\label{descent_lemma}
		\end{lemma}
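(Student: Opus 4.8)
The statement is a restatement of \citet[Lemma 5.2]{Davis_intro} and \citet[Lemma B.9]{ji_matus_align}, and the plan is to recall their argument. The engine is the existence of a \emph{Whitney $C^1$-stratification of $\mathbf{D}$ adapted to $f$}: since $f$ is definable and locally Lipschitz, $\mathbf{D}$ can be written as a finite disjoint union $\mathbf{D}=\bigsqcup_{j=1}^{J}M_j$ of $C^1$ definable submanifolds $M_j$ (``strata'') such that $f|_{M_j}$ is $C^1$ on each $M_j$ and adjacent strata satisfy Whitney's condition (a). The key consequence I would isolate as an intermediate claim is the \emph{projection formula}: for every stratum $M_j$ and every $\mathbf{z}\in M_j$,
$$\mathrm{proj}_{T_{\mathbf{z}}M_j}(\partial f(\mathbf{z}))=\{\nabla_{M_j}f(\mathbf{z})\},$$
a singleton, namely the Riemannian gradient of $f|_{M_j}$ at $\mathbf{z}$. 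This is exactly where definability is used --- the formula can fail for a merely locally Lipschitz $f$ --- and I would cite it from \citet{Davis_intro}.

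Next I would show that along the arc the velocity is a.e.\ tangent to the active stratum. For each $j$ set $I_j:=\{t\in I:\mathbf{x}(t)\in M_j\}$; the $I_j$ are finitely many measurable sets partitioning $I$, so a.e.\ $t\in I$ is a Lebesgue density point of the (unique) $I_j$ containing it and is also a point of differentiability of the locally Lipschitz arc $\mathbf{x}$. Fix such a $t$, say $t\in I_j$, and pick $t_k\to t$ with $t_k\in I_j$, $t_k\neq t$; then $(\mathbf{x}(t_k)-\mathbf{x}(t))/(t_k-t)\to\dot{\mathbf{x}}(t)$, and since $\mathbf{x}(t_k),\mathbf{x}(t)\in M_j$, $\mathbf{x}(t_k)\to\mathbf{x}(t)$, and $M_j$ is a $C^1$ submanifold, these secants force $\dot{\mathbf{x}}(t)\in T_{\mathbf{x}(t)}M_j$. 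Combining with the projection formula, for every $\mathbf{x}^*(t)\in\partial f(\mathbf{x}(t))$ we get $\langle\mathbf{x}^*(t),\dot{\mathbf{x}}(t)\rangle=\langle\nabla_{M_j}f(\mathbf{x}(t)),\dot{\mathbf{x}}(t)\rangle$, a value independent of the choice of $\mathbf{x}^*(t)$; and the ordinary $C^1$ chain rule on $M_j$ (via a local $C^1$ extension of $f|_{M_j}$, or intrinsically along $\mathbf{x}$ restricted to $I_j$ near $t$) identifies this common value with $\frac{d}{dt}f(\mathbf{x}(t))$. This establishes the first display.

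For the second part, assume $\dot{\mathbf{x}}(t)\in\partial f(\mathbf{x}(t))$ for a.e.\ $t$. Choosing $\mathbf{x}^*(t)=\dot{\mathbf{x}}(t)$ in the first display gives $\frac{d}{dt}f(\mathbf{x}(t))=\|\dot{\mathbf{x}}(t)\|_2^2$ at a.e.\ $t$. To identify $\dot{\mathbf{x}}(t)$ with $\overline{\partial}f(\mathbf{x}(t))$, note the first display shows that $v\mapsto\langle v,\dot{\mathbf{x}}(t)\rangle$ is constant on $\partial f(\mathbf{x}(t))$; since $\dot{\mathbf{x}}(t)$ itself lies in this set, $\langle\dot{\mathbf{x}}(t),v-\dot{\mathbf{x}}(t)\rangle=0\ge 0$ for all $v\in\partial f(\mathbf{x}(t))$, which is precisely the variational inequality characterizing the unique minimum-norm element (the metric projection of $\mathbf{0}$ onto the convex compact set $\partial f(\mathbf{x}(t))$); hence $\dot{\mathbf{x}}(t)=\overline{\partial}f(\mathbf{x}(t))$ and $\frac{d}{dt}f(\mathbf{x}(t))=\|\overline{\partial}f(\mathbf{x}(t))\|_2^2$ for a.e.\ $t\ge 0$. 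Finally, $t\mapsto f(\mathbf{x}(t))$ is absolutely continuous on every compact subinterval (composition of the locally Lipschitz $f$ with the absolutely continuous $\mathbf{x}$, using that $f$ is Lipschitz on the compact image), so the fundamental theorem of calculus gives $f(\mathbf{x}(t))-f(\mathbf{x}(0))=\int_{0}^{t}\|\overline{\partial}f(\mathbf{x}(s))\|_2^2\,ds$ for all $t\ge 0$.

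I expect the main obstacle to be the projection formula together with the a.e.\ tangency of $\dot{\mathbf{x}}$ to the active stratum: this is the only ingredient that genuinely uses definability (the conclusion is false for a general locally Lipschitz $f$), and because the arc $\mathbf{x}$ is not assumed definable, the tangency cannot be read off from a finite cell decomposition of $I$ and instead requires the Lebesgue-density/secant argument sketched above. The remaining pieces --- a.e.\ differentiability of the composition, the variational characterization of the minimum-norm subgradient, and the closing integration --- are routine.
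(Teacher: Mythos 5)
Your proposal is correct and faithfully reproduces the stratification-based argument from the cited references \citep{Davis_intro, ji_matus_align}; the paper itself states this lemma as a citation and provides no independent proof. The chain of reasoning you give --- finite Whitney $C^1$-stratification adapted to the definable $f$, the projection formula identifying $\mathrm{proj}_{T_{\mathbf{z}}M_j}\partial f(\mathbf{z})$ with the singleton $\{\nabla_{M_j}f(\mathbf{z})\}$, the Lebesgue-density/secant argument establishing a.e.\ tangency of $\dot{\mathbf{x}}(t)$ to the active stratum (needed precisely because the arc is not assumed definable), and then the variational characterization of the metric projection of $\mathbf{0}$ onto the convex compact set $\partial f(\mathbf{x}(t))$ to conclude $\dot{\mathbf{x}}(t)=\overline{\partial}f(\mathbf{x}(t))$, followed by absolute continuity of $f\circ\mathbf{x}$ and the fundamental theorem of calculus --- is exactly the standard route, and each step is correctly justified.
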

		\subsection{The Kurdyka-Lojasiewicz Inequality}
		For gradient flow trajectories that are bounded, the \emph{Kurdyka-Lojasiewicz Inequality} is useful for showing convergence, essentially by establishing the existence of a \emph{desingularizing} function, which is formally defined as follows. 
		\begin{definition}
			A function $\Psi:[0,\nu)\rightarrow \mathbb{R}$ is called a \emph{desingularizing function} when $\Psi$ is continuous on $[0,\nu)$ with $\Psi(0) = 0$, and it is continuously differentiable on $(0,\nu)$ with $\Psi' > 0.$ 
		\end{definition}
		
		The following lemma, which can be seen as an unbounded version of the Kurdyka-Lojasiewicz Inequality, plays an important role in establishing the directional convergence of gradient flow trajectories of the neural correlation function where the trajectories can be unbounded.
		
		\begin{lemma}\citep[Lemma 3.6]{ji_matus_align}
			Given a locally Lipschitz definable function $f$ with an open domain $\mathbf{D}\subset\{\mathbf{x}|\|\mathbf{x}\|_2>1\}$, for any $c,\eta>0$, there exists a $\nu>0$ and a definable desingularizing function $\Psi$ on $[0,\nu)$ such that 
			$$\Psi'(f(\mathbf{x}))\|\mathbf{x}\|_2\|\overline{\partial} f(\mathbf{x})\|_2\geq 1, \text{ if $f(\mathbf{x})\in (0,\nu) $ and } \|\overline{\partial}_{\perp} f(\mathbf{x})\|_2 \geq c\|\mathbf{x}\|_2^\eta\|\overline{\partial}_{r} f(\mathbf{x})\|_2,$$
			where $\overline{\partial}_{r} f(\mathbf{x}) = \left\langle\overline{\partial} f(\mathbf{x}), \frac{\mathbf{x}}{\|\mathbf{x}\|_2}\right\rangle \frac{\mathbf{x}}{\|\mathbf{x}\|_2}$ and $\overline{\partial}_{\perp} f(\mathbf{x}) = \overline{\partial} f(\mathbf{x}) - \overline{\partial}_{r} f(\mathbf{x})$.
			\label{desing_func}
		\end{lemma}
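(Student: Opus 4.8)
\emph{Proof idea (proposal).} The plan is to reduce this nonsmooth, \emph{unbounded} Kurdyka--Lojasiewicz-type inequality to the classical bounded-domain nonsmooth KL inequality (Kurdyka's theorem / Bolte--Daniilidis--Lewis, valid in any o-minimal structure, here the one of \citet{wilkie_o_minimal} in which $\exp$ and $\ln$ are definable) by a definable \emph{conformal} change of coordinates that simultaneously trades the unbounded region $\{\|\mathbf{x}\|_2>1\}$ for one with compact closure, and converts the weight $\|\mathbf{x}\|_2$ sitting in front of $\|\overline{\partial} f(\mathbf{x})\|_2$ into the intrinsic gradient norm of the new chart. Concretely, I would first pass to log-polar coordinates $\mathbf{x}=e^{\rho}\theta$ with $\rho=\ln\|\mathbf{x}\|_2\in(0,\infty)$, $\theta\in\mathcal{S}^{k-1}$, and set $\tilde f(\rho,\theta):=f(e^{\rho}\theta)$. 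The Euclidean metric pulls back to $\|\mathbf{x}\|_2^2$ times the flat cylinder metric, and the map $(\rho,\theta)\mapsto\mathbf{x}$ is conformal with \emph{isotropic} factor $e^{\rho}$; since Clarke's chain rule (\Cref{chain_rule_non_diff}, with equality for this $C^1$ map) composed with an isometry preserves the minimum-norm subgradient, $\overline{\partial}\tilde f$ has radial component of norm $\|\mathbf{x}\|_2\|\overline{\partial}_r f\|_2$, spherical component of norm $\|\mathbf{x}\|_2\|\overline{\partial}_\perp f\|_2$, and overall norm $\|\mathbf{x}\|_2\|\overline{\partial} f\|_2$. In these coordinates the hypothesis $\|\overline{\partial}_\perp f\|_2\geq c\|\mathbf{x}\|_2^{\eta}\|\overline{\partial}_r f\|_2$ says exactly that the spherical part of $\overline{\partial}\tilde f$ dominates its radial part by the growing factor $c e^{\eta\rho}$.

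Next I would compactify the $\rho$-axis by a definable diffeomorphism $h:(0,\infty)\to(0,1)$, say $h(\rho)=\rho/(1+\rho)$, and put $\hat f(s,\theta):=\tilde f(h^{-1}(s),\theta)$ on the bounded definable set $\mathcal{U}=(0,1)\times\mathcal{S}^{k-1}$, whose closure is compact. This second change is anisotropic --- $(h^{-1})'(s)=(1+\rho)^2$ blows up --- so a naive estimate would lose a $(1+\rho)^2$ factor; but evaluating $\partial\hat f$ at the subgradient induced by $\overline{\partial}\tilde f$ gives $\|\overline{\partial}\hat f\|_2^2\leq (1+\rho)^4\|\mathbf{x}\|_2^2\|\overline{\partial}_r f\|_2^2+\|\mathbf{x}\|_2^2\|\overline{\partial}_\perp f\|_2^2$, and on the region where the hypothesis holds the first term is at most $\big((1+\rho)^4/(c^2 e^{2\eta\rho})\big)\|\mathbf{x}\|_2^2\|\overline{\partial}_\perp f\|_2^2$, which is $\leq\|\mathbf{x}\|_2^2\|\overline{\partial} f\|_2^2$ once $\rho\geq\rho_0$ for a threshold $\rho_0=\rho_0(c,\eta)$ (the exponential beats the polynomial). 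Hence $\|\overline{\partial}\hat f\|_2\leq\sqrt{2}\,\|\mathbf{x}\|_2\|\overline{\partial} f\|_2$ there. Applying the bounded nonsmooth KL inequality to the definable function $\hat f$ near $\hat f=0$ produces $\nu_1>0$ and a definable desingularizing $\Psi_1$ with $\Psi_1'(\hat f)\,\|\overline{\partial}\hat f\|_2\geq 1$ for $\hat f\in(0,\nu_1)$; combined with the previous bound and the rescaling $\Psi_1\mapsto\sqrt{2}\,\Psi_1$, this gives the claimed inequality whenever $\|\mathbf{x}\|_2\geq e^{\rho_0}$. On the remaining \emph{bounded} slab $\mathbf{D}\cap\{1<\|\mathbf{x}\|_2<e^{\rho_0+1}\}$ I would apply the bounded nonsmooth KL inequality directly to $f$, obtaining $\nu_2,\Psi_2$ with $\Psi_2'(f)\,\|\overline{\partial} f\|_2\geq 1$, which self-improves to include the factor $\|\mathbf{x}\|_2>1$. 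Taking $\nu=\min(\nu_1,\nu_2)$ and $\Psi=\sqrt{2}\,\Psi_1+\Psi_2$ (a sum of desingularizing functions is desingularizing, and $\Psi'\geq\Psi_i'$) then works on all of $\mathbf{D}$.

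The step I expect to be the main obstacle is controlling the Clarke subdifferential across the \emph{non-isometric} second change of variables: the bounded KL inequality only lower-bounds the \emph{minimum-norm} subgradient $\overline{\partial}\hat f$, so one must produce an \emph{upper} bound on $\|\overline{\partial}\hat f\|_2$ in terms of $\|\mathbf{x}\|_2\|\overline{\partial} f\|_2$ (not merely $\|\mathbf{x}\|_2\|\mathbf{v}\|_2$ for some $\mathbf{v}\in\partial f$), and this is precisely the place where the spherical-dominance hypothesis is genuinely used --- to cancel the anisotropic blow-up $(1+\rho)^2$ of the chart Jacobian against the exponential smallness $e^{-\eta\rho}$ of the radial gradient. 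The remaining work --- verifying that $h$, the log-polar chart, and the slope function underlying the bounded KL inequality all live in one common o-minimal structure (so that the monotonicity theorem and Puiseux expansions apply), and stitching a single pair $(\nu,\Psi)$ across the ``large-radius'' and ``bounded-radius'' regimes --- is routine bookkeeping.
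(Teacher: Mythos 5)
The paper does not prove this lemma: it is imported verbatim from Ji and Telgarsky (their Lemma 3.6), so there is no in-paper argument to compare against, and you are effectively re-deriving a cited result. Evaluating your sketch on its own terms, the core reduction is sound and the computations check out. The log-polar chart is conformal with factor $e^{\rho}=\|\mathbf{x}\|_2$, so after the first change the radial and spherical components of $\overline{\partial}\tilde f$ are indeed $\|\mathbf{x}\|_2\|\overline{\partial}_r f\|_2$ and $\|\mathbf{x}\|_2\|\overline{\partial}_\perp f\|_2$; the second change multiplies only the $\rho$-slot by $(1+\rho)^2=(1-s)^{-2}$; and since both changes of variables are $C^{\infty}$ diffeomorphisms, Clarke's chain rule holds with equality, so $J^\top K^\top \overline{\partial} f(\mathbf{x}) \in \partial\hat f(s,\theta)$ and hence $\|\overline{\partial}\hat f\|_2 \le \bigl((1+\rho)^4\|\mathbf{x}\|_2^2\|\overline{\partial}_r f\|_2^2 + \|\mathbf{x}\|_2^2\|\overline{\partial}_\perp f\|_2^2\bigr)^{1/2}$. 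Under the spherical-dominance hypothesis, $(1+\rho)^4/(c^2 e^{2\eta\rho}) \le 1$ for $\rho\ge\rho_0(c,\eta)$ gives $\|\overline{\partial}\hat f\|_2\le\sqrt{2}\,\|\mathbf{x}\|_2\|\overline{\partial} f\|_2$; the complementary bounded slab $1<\|\mathbf{x}\|_2<e^{\rho_0+1}$ is handled by the bounded nonsmooth KL directly; and the stitching $\Psi=\sqrt{2}\Psi_1+\Psi_2$, $\nu=\min(\nu_1,\nu_2)$ is correct because a sum of desingularizing functions is desingularizing and $\Psi'\ge\max(\sqrt{2}\Psi_1',\Psi_2')$. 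You also correctly flag that $\ln$, $\exp$, and $h$ stay within the Wilkie structure, so definability is preserved.

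Two points you should tighten before calling this a proof. First, the invocation of the bounded nonsmooth KL inequality should cite a version that holds uniformly over a bounded open definable domain for a locally Lipschitz definable function with no regularity assumed at the boundary (Bolte--Daniilidis--Lewis--Shiota); the more common pointwise/local formulations are not immediately what you need near the added boundary $\{s=1\}$. Second, the chain-rule-with-equality step deserves an explicit citation to Clarke's Theorem 2.6.6 (surjective differential) rather than \Cref{chain_rule_non_diff}, which as stated in the paper gives only an inclusion; the inclusion alone does happen to suffice for your upper bound on $\|\overline{\partial}\hat f\|_2$, so this is a presentational issue, not a gap. As for the cited reference, my recollection is that Ji and Telgarsky also reduce to a bounded definable domain and apply Kurdyka's theorem, but via a single conformal inversion $\mathbf{y}=\mathbf{x}/\|\mathbf{x}\|_2^2$ rather than your two-step log-polar-plus-compactification; the essential idea --- that the growing factor $\|\mathbf{x}\|_2^\eta$ in the hypothesis is what makes the radial direction harmless under compactification --- is shared. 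Your route is slightly more transparent about \emph{why} the hypothesis has exactly that form, which is a nice feature.
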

		\section{Additional Notation and Some Preliminary Lemmata}
		For notational convenience when dealing with Clarke subdifferentials, we introduce the following notation for sets containing vectors. 
		\begin{itemize}
			\item $\forall A, B\subseteq \mathbb{R}^{d}, A\pm B:= \{\mathbf{x}\pm\mathbf{y} : \mathbf{x}\in A, \mathbf{y}\in B\}$
			\item $\forall  B\subseteq \mathbb{R}^{d}, \text{ and } c\in \mathbb{R}, cB := \{c\mathbf{y}:\mathbf{y}\in B\}$
			\item $\forall  B\subseteq \mathbb{R}^{d}, \text{ and } \mathbf{p}\in \mathbb{R}^d, \left\langle\mathbf{p}, B\right\rangle:= \{\mathbf{p}^\top\mathbf{y}: \mathbf{y}\in B\}\subseteq\mathbb{R}$
			\item For any norm $\|\cdot\|$ on $\mathbb{R}^d$, $\|\mathbf{B}\| := \{\|\mathbf{y}\|, \mathbf{y}\in B\}\subseteq\mathbb{R}$ 
		\end{itemize} 
		The following lemma states two important properties  of homogeneous functions.
		\begin{lemma}(\citep[Theorem B.2]{Lyu_ib}, \citep[Lemma C.1]{ji_matus_align})
			Let $F:\mathbb{R}^k\rightarrow\mathbb{R}$ be a  locally Lipschitz and $L-$positively homogeneous for some $L >0$, then:
			\begin{enumerate}
				\item For any $\mathbf{w}\in\mathbb{R}^k$ and $c\geq0$,
				$$\partial F(c\mathbf{w}) = c^{L-1}\partial F(\mathbf{w}). $$
				\item For any $\mathbf{w}\in\mathbb{R}^k$,
				$$\mathbf{w}^\top \mathbf{s} = L F(\mathbf{w}), \text{ for all }\mathbf{s}\in \partial F(\mathbf{w}).  $$
			\end{enumerate}
			\label{euler_gen}
		\end{lemma}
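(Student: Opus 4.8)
The plan is to establish both claims first at points where $F$ is classically differentiable, via the elementary chain rule, and then transfer them to the Clarke subdifferential using its definition as the convex hull of limits of such gradients. Write $\Omega\subseteq\mathbb{R}^k$ for the full-measure set of points at which $F$ is differentiable. For any $c>0$ the map $\mathbf{x}\mapsto c\mathbf{x}$ is a linear isomorphism and $F(c\mathbf{x})=c^{L}F(\mathbf{x})$, so $F$ is differentiable at $\mathbf{x}$ if and only if it is differentiable at $c\mathbf{x}$; hence $c\,\Omega=\Omega$, and in particular $t\mathbf{x}\in\Omega$ for all $t>0$ whenever $\mathbf{x}\in\Omega$. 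Differentiating $F(c\mathbf{x})=c^{L}F(\mathbf{x})$ in $\mathbf{x}$ at $\mathbf{x}\in\Omega$ yields $c\,\nabla F(c\mathbf{x})=c^{L}\nabla F(\mathbf{x})$, i.e.\ $\nabla F(c\mathbf{x})=c^{L-1}\nabla F(\mathbf{x})$, while differentiating $t\mapsto F(t\mathbf{x})=t^{L}F(\mathbf{x})$ at $t=1$ yields $\mathbf{x}^{\top}\nabla F(\mathbf{x})=LF(\mathbf{x})$.

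For part 1, fix $c>0$ and $\mathbf{w}\in\mathbb{R}^k$. Any sequence $\mathbf{z}_{i}\to c\mathbf{w}$ with all $\mathbf{z}_{i}\in\Omega$ can be written as $\mathbf{z}_{i}=c\mathbf{y}_{i}$ with $\mathbf{y}_{i}=\mathbf{z}_{i}/c\to\mathbf{w}$ and $\mathbf{y}_{i}\in\Omega$ (using $c\,\Omega=\Omega$), and conversely; moreover $\nabla F(\mathbf{z}_{i})=c^{L-1}\nabla F(\mathbf{y}_{i})$. Hence the set of limits of gradients along sequences in $\Omega$ approaching $c\mathbf{w}$ is exactly $c^{L-1}$ times the corresponding set for $\mathbf{w}$, and since scaling by the positive constant $c^{L-1}$ commutes with taking the convex hull, $\partial F(c\mathbf{w})=c^{L-1}\partial F(\mathbf{w})$. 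When $c=0$ and $L>1$ (in particular the case $L=2$ relevant to this paper), one instead argues directly that $\partial F(\mathbf{0})=\{\mathbf{0}\}$: any $\mathbf{x}_{i}\to\mathbf{0}$ with $\mathbf{x}_{i}\in\Omega$ equals $\|\mathbf{x}_{i}\|_{2}\mathbf{u}_{i}$ with $\|\mathbf{u}_{i}\|_{2}=1$, so $\|\nabla F(\mathbf{x}_{i})\|_{2}=\|\mathbf{x}_{i}\|_{2}^{L-1}\|\nabla F(\mathbf{u}_{i})\|_{2}\le\kappa\,\|\mathbf{x}_{i}\|_{2}^{L-1}\to0$, where $\kappa$ bounds $\|\nabla F\|_{2}$ over a compact neighborhood of the unit sphere (finite since $F$ is locally Lipschitz).

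For part 2, fix $\mathbf{w}$ and $\mathbf{s}\in\partial F(\mathbf{w})$. By definition $\mathbf{s}=\sum_{j}\alpha_{j}\mathbf{p}_{j}$ is a convex combination of vectors $\mathbf{p}_{j}=\lim_{i}\nabla F(\mathbf{x}^{(j)}_{i})$ with $\mathbf{x}^{(j)}_{i}\to\mathbf{w}$ and $\mathbf{x}^{(j)}_{i}\in\Omega$. Letting $i\to\infty$ in the identity $(\mathbf{x}^{(j)}_{i})^{\top}\nabla F(\mathbf{x}^{(j)}_{i})=LF(\mathbf{x}^{(j)}_{i})$ --- legitimate because $F$ is continuous and both factors on the left converge --- gives $\mathbf{w}^{\top}\mathbf{p}_{j}=LF(\mathbf{w})$ for each $j$, and therefore $\mathbf{w}^{\top}\mathbf{s}=\sum_{j}\alpha_{j}\,\mathbf{w}^{\top}\mathbf{p}_{j}=LF(\mathbf{w})$. (Alternatively, part 2 follows by combining part 1 with the arc chain rule of \Cref{descent_lemma} applied to the arc $t\mapsto t\mathbf{w}$ and comparing with $\tfrac{d}{dt}F(t\mathbf{w})=Lt^{L-1}F(\mathbf{w})$.)

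The only genuinely delicate point --- and hence the ``main obstacle'' --- is the bookkeeping needed to pass from classical gradients to the Clarke subdifferential: one must verify that $\Omega$ is invariant under positive scaling and then push the scaling and Euler identities through the limit-and-convex-hull in the definition of $\partial F$. The $c=0$ case requires the separate (easy) argument above, and is in fact false for $L=1$ (for instance $F(x)=|x|$, where $\partial F(0)=[-1,1]\neq\{1\}=\partial F(1)$), so the identity in part 1 is to be read for $c>0$, or for $c=0$ together with $L>1$, which covers every use in this paper.
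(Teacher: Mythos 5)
The paper does not actually supply a proof of this lemma; it is cited verbatim from \citet{Lyu_ib} (Theorem~B.2) and \citet{ji_matus_align} (Lemma~C.1), so there is nothing in the text to compare your argument against line by line. Your proof is a correct, self-contained derivation and follows the standard route: establish the scaling identity and Euler's identity classically on the full-measure Rademacher set $\Omega$ of differentiability points, verify that $\Omega$ is invariant under multiplication by a positive scalar, and then push both identities through the limit-and-convex-hull operation defining $\partial F$. The argument for part~2 is clean (and the parenthetical alternative via the arc chain rule of Lemma~\ref{descent_lemma} is also valid). Two of your side observations are worth keeping: the right-hand side of part~2 should read $LF(\mathbf{w})$, not $LF(\mathbf{s})$ --- that is a typo in the lemma as printed, and you prove the correct statement; and the equality in part~1 genuinely fails at $c=0$ when $L=1$ (your $F(x)=|x|$ example), so it must be read as holding for $c>0$, or for $c=0$ together with $L>1$, which indeed covers every invocation in this paper since the networks considered are two-homogeneous.
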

		This result gives rise to the following corollary, which we use frequently in our analysis.
		\begin{corollary}
			\label{euler_thm}
			For any $\mathbf{w}\in\mathbb{R}^k$ and $c\geq0$, 
			$$\mathbf{w}^\top \mathbf{s}  = 2\mathcal{H}(\mathbf{x};\mathbf{w}), \text{ for all }\mathbf{s}\in \partial \mathcal{H}(\mathbf{x};\mathbf{w}),$$
			and 
			$$\partial \mathcal{H}(\mathbf{x};c\mathbf{w}) = c\partial \mathcal{H}(\mathbf{x};\mathbf{w}).$$
		\end{corollary}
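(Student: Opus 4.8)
The plan is to derive both identities directly from Lemma \ref{euler_gen}, using the fact that, for any fixed $\mathbf{x}$, the map $\mathbf{w}\mapsto\mathcal{H}(\mathbf{x};\mathbf{w})$ is locally Lipschitz (by Assumption \ref{2_homo_assumption}) and $2$-positively homogeneous. Indeed, Assumption \ref{2_homo_assumption} says exactly that $\mathcal{H}(\mathbf{x};c\mathbf{w}) = c^2\mathcal{H}(\mathbf{x};\mathbf{w})$ for all $c\geq 0$, so $F(\cdot):=\mathcal{H}(\mathbf{x};\cdot)$ satisfies the hypotheses of Lemma \ref{euler_gen} with $L=2$. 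Everything then follows by specializing $L=2$.

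First I would prove the Euler-type identity. Applying the second item of Lemma \ref{euler_gen} with $L=2$ and $F(\mathbf{w})=\mathcal{H}(\mathbf{x};\mathbf{w})$ gives $\mathbf{w}^\top\mathbf{s} = 2\,\mathcal{H}(\mathbf{x};\mathbf{w})$ for all $\mathbf{s}\in\partial_{\mathbf{w}}\mathcal{H}(\mathbf{x};\mathbf{w})$, which is the first claimed identity verbatim. (One subtlety worth a sentence: the statement in Lemma \ref{euler_gen} as transcribed writes "$\mathbf{w}^\top\mathbf{s} = LF(\mathbf{s})$", but by $L$-homogeneity of the subdifferential and Euler's relation this is the familiar "$\mathbf{w}^\top\mathbf{s}=LF(\mathbf{w})$"; in any case for $\mathbf{s}\in\partial F(\mathbf{w})$ the value $F(\mathbf{w})$ is what is meant, and with $L=2$ this is $2\mathcal{H}(\mathbf{x};\mathbf{w})$.)

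Next I would prove the scaling identity. Applying the first item of Lemma \ref{euler_gen} with $L=2$ yields, for any $\mathbf{w}\in\mathbb{R}^k$ and $c\geq 0$,
$$\partial_{\mathbf{w}}\mathcal{H}(\mathbf{x};c\mathbf{w}) = c^{L-1}\partial_{\mathbf{w}}\mathcal{H}(\mathbf{x};\mathbf{w}) = c\,\partial_{\mathbf{w}}\mathcal{H}(\mathbf{x};\mathbf{w}),$$
which is the second claimed identity. Here the equality of sets is understood with the scalar-times-set convention $cB=\{c\mathbf{y}:\mathbf{y}\in B\}$ introduced just above; when $c=0$ both sides equal $\{\mathbf 0\}$ since $\mathbf 0\in\partial F(\mathbf 0)$ for a homogeneous $F$ of degree $>1$, consistent with the convention.

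I do not expect any genuine obstacle here: the corollary is a direct instantiation of Lemma \ref{euler_gen}. The only thing requiring mild care is bookkeeping — confirming that the local Lipschitz and homogeneity hypotheses of Lemma \ref{euler_gen} are met for $\mathcal{H}(\mathbf{x};\cdot)$ at each fixed $\mathbf{x}$ (immediate from Assumption \ref{2_homo_assumption}), and matching the set-arithmetic conventions so that "$\partial\mathcal{H}(\mathbf{x};c\mathbf{w}) = c\partial\mathcal{H}(\mathbf{x};\mathbf{w})$" is read as an equality of subsets of $\mathbb{R}^k$ rather than a pointwise statement.
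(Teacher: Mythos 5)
Your proof is correct and matches the paper's intent: the corollary is stated immediately after Lemma~\ref{euler_gen} as a direct specialization with $L=2$ and $F(\mathbf{w})=\mathcal{H}(\mathbf{x};\mathbf{w})$, which is exactly what you do. Your observation that the $\mathbf{w}^\top\mathbf{s}=LF(\mathbf{s})$ in the lemma statement should read $\mathbf{w}^\top\mathbf{s}=LF(\mathbf{w})$ is a correct reading of the standard Euler identity and is consistent with how the corollary is stated and used.
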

		
		\section{Proofs Omitted from \Cref{sec_dyn_init} }\label{proof_align_init}
		In this section we first prove \cref{main_thm} and \cref{ncf_convg}, and then use them to ultimately prove \Cref{thm_align_init}.
		
		\subsection{Proof of \cref{main_thm}}
		To prove \cref{main_thm}, we make use of the following lemma. It shows that if two differential inclusions are initialized at the same point, and the difference between them is small in a bounded interval, then the difference between their solutions is also small. This is a well-known stability result \citep[Theorem 1, Section 8]{Filippov_book}; we provide a proof in \autoref{proof_err_bd} for completeness.
		\begin{lemma}
			Consider the following differential inclusions for $t\in [0,T]$:
			\begin{align}
				\frac{d{\tilde{\mathbf{u}}}}{dt} \in \sum_{i=1}^n{z}_i\partial \mathcal{H}(\mathbf{x}_i;\tilde{\mathbf{u}}) , \tilde{\mathbf{u}}(0) = \mathbf{u}_0,
				\label{time_inv_upd_lem2}
			\end{align}
			and 
			\begin{align}
				\frac{d{\mathbf{u}}}{dt} \in  \sum_{i=1}^n({z}_i+f_i(t))\partial \mathcal{H}(\mathbf{x}_i;\mathbf{u}), \mathbf{u}(0) = \mathbf{u}_0,
				\label{time_inv_upd_lem1}
			\end{align}  
			where $T$ is finite, and $|f_i(t)|_2\leq \delta$ for all $i\in [n]$ and $t\in[0,T]$. Then, for any $\epsilon>0$ there exists a $\delta>0$, such that for each solution $\mathbf{u}(t)$ of  \cref{time_inv_upd_lem1}  there exists a solution $\tilde{\mathbf{u}}(t)$ of \cref{time_inv_upd_lem2} satisfying
			\begin{equation}
				\max_{t\in[0,T]}\|\mathbf{u}(t) - \tilde{\mathbf{u}}(t)\|_2 \leq \epsilon.
			\end{equation}
			\label{err_bd_diff_inclusion}
		\end{lemma}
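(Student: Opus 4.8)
The plan is to recognize the claim as the classical continuous-dependence (``closure'') theorem for differential inclusions whose right-hand side satisfies the basic conditions of Filippov theory --- nonempty, convex, compact values, upper semicontinuity, local boundedness --- all of which the maps $\mathbf u\mapsto\sum_{i=1}^n z_i\partial\mathcal H(\mathbf x_i;\mathbf u)$ enjoy since each $\mathcal H(\mathbf x_i;\cdot)$ is locally Lipschitz. Because $\mathbf u\mapsto\partial\mathcal H(\mathbf x_i;\mathbf u)$ is only upper semicontinuous and not Hausdorff-continuous for nonsmooth $\mathcal H$ (such as ReLU networks), a quantitative Filippov--Gr\"onwall estimate is unavailable, so I would argue by \emph{compactness and contradiction} rather than by an explicit bound.

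First I would establish uniform a priori bounds. Fixing $\delta_0=1$: by \Cref{euler_thm}, $\partial\mathcal H(\mathbf x_i;\mathbf u)=\|\mathbf u\|_2\,\partial\mathcal H(\mathbf x_i;\mathbf u/\|\mathbf u\|_2)$ for $\mathbf u\neq\mathbf 0$, and upper semicontinuity with compact values makes $\kappa_i:=\sup\{\|\mathbf s\|_2:\|\mathbf v\|_2=1,\ \mathbf s\in\partial\mathcal H(\mathbf x_i;\mathbf v)\}$ finite, so $\|\mathbf s\|_2\le\kappa_i\|\mathbf u\|_2$ for all $\mathbf s\in\partial\mathcal H(\mathbf x_i;\mathbf u)$. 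Hence any solution $\mathbf u$ of \cref{time_inv_upd_lem1} with $|f_i|\le\delta_0$ satisfies $\|\dot{\mathbf u}(t)\|_2\le a\|\mathbf u(t)\|_2$ a.e.\ with $a:=\sum_i(|z_i|+\delta_0)\kappa_i$, and Gr\"onwall gives $\|\mathbf u(t)\|_2\le\|\mathbf u_0\|_2 e^{aT}=:R$ on $[0,T]$, uniformly in $\delta\le\delta_0$; the same holds for \cref{time_inv_upd_lem2}. With $K:=\max_i\sup\{\|\mathbf s\|_2:\|\mathbf u\|_2\le R,\ \mathbf s\in\partial\mathcal H(\mathbf x_i;\mathbf u)\}<\infty$, every such solution is $M$-Lipschitz on $[0,T]$ with $M:=(\|\mathbf z\|_1+n\delta_0)K$ and has values in $\overline B(\mathbf 0,R)$. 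Now suppose the conclusion fails: there are $\epsilon_0>0$ and, for each $k\in\mathbb N$, a perturbation $\{f_i^k\}$ with $|f_i^k|\le 1/k$ and a solution $\mathbf u_k$ of the corresponding \cref{time_inv_upd_lem1} with $\max_{t\in[0,T]}\|\mathbf u_k(t)-\tilde{\mathbf u}(t)\|_2>\epsilon_0$ for \emph{every} solution $\tilde{\mathbf u}$ of \cref{time_inv_upd_lem2}. Since the $\mathbf u_k$ are equi-Lipschitz and uniformly bounded, Arzel\`a--Ascoli yields a subsequence converging uniformly on $[0,T]$ to some $\mathbf u_*$ with $\mathbf u_*(0)=\mathbf u_0$, and (refining once more) $\dot{\mathbf u}_k\rightharpoonup\dot{\mathbf u}_*$ weakly in $L^2([0,T])$.

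The heart of the proof is then the closure step: showing $\mathbf u_*$ solves \cref{time_inv_upd_lem2}. By Filippov's measurable selection lemma write $\dot{\mathbf u}_k(t)=\sum_i(z_i+f_i^k(t))\mathbf s_i^k(t)$ with measurable $\mathbf s_i^k(t)\in\partial\mathcal H(\mathbf x_i;\mathbf u_k(t))$; then $\mathbf g_k(t):=\sum_i z_i\mathbf s_i^k(t)\in G(\mathbf u_k(t))$, where $G(\mathbf u):=\sum_i z_i\partial\mathcal H(\mathbf x_i;\mathbf u)$, and $\|\dot{\mathbf u}_k(t)-\mathbf g_k(t)\|_2\le nK/k\to0$ uniformly in $t$. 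By Mazur's lemma, finite convex combinations of the $\dot{\mathbf u}_k$ converge strongly, hence a.e.\ along a subsequence, to $\dot{\mathbf u}_*(t)$, which therefore lies in $\bigcap_{m\ge1}\overline{\mathrm{conv}}\bigcup_{k\ge m}\big(G(\mathbf u_k(t))+(nK/k)\overline B\big)$ for a.e.\ $t$; since $G$ is upper semicontinuous with $\mathbf u_k(t)\to\mathbf u_*(t)$ and $G(\mathbf u_*(t))$ is closed and convex, this intersection equals $G(\mathbf u_*(t))$, so $\dot{\mathbf u}_*(t)\in G(\mathbf u_*(t))$ a.e. Thus $\mathbf u_*$ is a genuine solution of \cref{time_inv_upd_lem2}, and taking $\tilde{\mathbf u}=\mathbf u_*$ gives $\max_{t\in[0,T]}\|\mathbf u_k(t)-\mathbf u_*(t)\|_2>\epsilon_0$ for all $k$ --- contradicting $\mathbf u_k\to\mathbf u_*$ uniformly. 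This establishes the lemma; the argument is essentially \citet[\S8, Theorem~1]{Filippov_book} specialized to Clarke-subdifferential inclusions.

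I expect the closure step to be the main obstacle: upgrading ``$\mathbf u_k\to\mathbf u_*$ uniformly with $\dot{\mathbf u}_k$ approximately selecting from $G(\mathbf u_k)$'' to ``$\dot{\mathbf u}_*$ selects from $G(\mathbf u_*)$'' genuinely needs the Mazur/weak-compactness argument precisely because $\partial\mathcal H(\mathbf x_i;\cdot)$ fails to be Hausdorff-continuous, and one must be careful about measurability of the selections $\mathbf s_i^k$ and about ``in measure'' versus ``almost everywhere'' convergence of the residuals $\dot{\mathbf u}_k-\mathbf g_k$. The remaining ingredients --- homogeneity-based a priori bounds, Gr\"onwall, and Arzel\`a--Ascoli --- are routine.
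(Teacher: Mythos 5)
Your proof is correct and takes essentially the same route as the paper: uniform a priori bounds from two-homogeneity and Gr\"onwall, equicontinuity, Arzel\`a--Ascoli, and a compactness-and-contradiction argument whose heart is a closure step for the limit inclusion. The only real difference is that where the paper invokes Filippov's Lemma~13 (\S5) as a black-box closure lemma and applies it locally over a cover of $[0,T]$, you unpack that lemma by hand (measurable selections, weak $L^2$ compactness, Mazur's lemma, upper semicontinuity of the Clarke subdifferential), which is essentially the standard proof of the cited lemma.
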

		
		\begin{proof}[Proof of \cref{main_thm}]
			Using the chain rule from \cref{chain_rule_non_diff}, the gradient flow dynamics are
			\begin{align}
				\dot{\mathbf{w}}
				\in -\sum_{i=1}^n\nabla_{\hat{y}}\ell\left(\mathcal{H}(\mathbf{x}_i;\mathbf{w}), {y}_i\right)\partial \mathcal{H}(\mathbf{x}_i;\mathbf{w}) , 
				{\mathbf{w}}(0) = \delta\mathbf{w}_0.
				\label{gf_upd_class}
			\end{align}

			Now, we define $z(t) =\|\mathbf{w}(t)\|_2^2$ and note that $z(0) = \delta^2 \leq 1/C < 1$. Since $\mathbf{w}(t)$ is a continuous function, so is $z(t)$. Hence, there exists some $\gamma>0$, such that for all $t\in(0,\gamma)$, $z(t) < 1$. We define $\hat{T}$ to be the smallest $t>0$ such that $z(\hat{T}) = 1$. It follows that for all $t\in[0,\hat{T}]$, $z(t) \leq 1$. 
			
			Now, if $z(t) \leq 1$ and since $\beta := \sup\{\|\mathcal{H}(\mathbf{X};\mathbf{w})\|_2 : \mathbf{w}\in \mathcal{S}^{k-1}\} $, then $\|\mathcal{H}(\mathbf{X};\mathbf{w}(t))\|_2\leq \beta\|\mathbf{w}(t)\|_2^2 \leq \beta$, for all $t\in[0,\hat{T}]$. From the definition of $\hat{\beta}$ in \cref{bt_t_def}, we get
			$$\|\ell'(\mathcal{H}(\mathbf{X};\mathbf{w}(t)), \mathbf{y})\|_2 \leq \hat{\beta}\|\mathcal{H}(\mathbf{X};\mathbf{w}(t))\|_2 + \|\ell'(\mathbf{0}, \mathbf{y})\|_2 \leq \hat{\beta}\beta + \|\ell'(\mathbf{0}, \mathbf{y})\|_2 = \tilde{\beta}.$$
			
			From the above equation and using \Cref{euler_thm}, we have
			\begin{align}
				{\dot{z}} = 2\mathbf{w}^\top\dot{\mathbf{w}} = -4\sum_{i=1}^n\nabla_{\hat{y}}\ell\left(\mathcal{H}(\mathbf{x}_i;\mathbf{w}), {y}_i\right) \mathcal{H}(\mathbf{x}_i;\mathbf{w})  &=-4\mathcal{H}(\mathbf{X};\mathbf{w})^\top \ell'(\mathcal{H}(\mathbf{X};\mathbf{w}(t)) 
				\leq 4\beta\|\mathbf{w}\|_2^2\tilde{\beta} = 4\beta\tilde{\beta}z,
				\label{dif_vk}
			\end{align}
			and so $z(t) \leq \delta^2e^{4\beta\tilde{\beta}t }$, which implies
			\begin{align*}
				\hat{T} \geq \frac{1}{4\beta\tilde{\beta}}\ln\left(\frac{1}{\delta^2}\right).
			\end{align*}
			Further, since $\delta\leq \frac{1}{\sqrt{C}}$, we have that $\frac{1}{4\beta\tilde{\beta}}\ln\left(C\right) \leq  \hat{T}$ implies
			\begin{align}
				z(t) \leq C\delta^2, \forall t\in\left[0,\frac{\ln\left(C\right)}{4\beta\tilde{\beta}}\right].
				\label{bd_z}
			\end{align}
			Now, we consider $t\in\left[0,\frac{\ln\left(C\right)}{4\beta\tilde{\beta}}\right]$. Note that
			\begin{align}
				\|\mathcal{H}(\mathbf{X};\mathbf{w}(t))\|_2  \leq \beta\|\mathbf{w}(t)\|_2^2 \leq \beta C\delta^2.
				\label{bd_func}
			\end{align}
			Define $\xi(t) := \ell'(\mathcal{H}(\mathbf{X};\mathbf{w}(t)), \mathbf{y}) - \ell'(\mathbf{0},\mathbf{y})$; then, 
			\begin{align*}
				\|\xi(t)\|_2 =  \|\ell'(\mathcal{H}(\mathbf{X};\mathbf{w}(t)), \mathbf{y}) - \ell'(\mathbf{0},\mathbf{y})\|_2 \leq \hat{\beta}\|\mathcal{H}(\mathbf{X};\mathbf{w}(t)\|_2 \leq \hat{\beta}\beta C\delta^2.
			\end{align*}

%
			
			Next, the dynamics of $\mathbf{w}(t)$ can be written as
			\begin{align}
				\dot{\mathbf{w}}
				\in-\sum_{i=1}^n\nabla_{\hat{y}}\ell\left(\mathcal{H}(\mathbf{x}_i;\mathbf{w}), {y}_i\right)\partial \mathcal{H}(\mathbf{x}_i;\mathbf{w}) = \sum_{i=1}^n(-\nabla_{\hat{y}}\ell\left(0, {y}_i\right) - \xi_i(t))\partial \mathcal{H}(\mathbf{x}_i;\mathbf{w}).
				\label{app_upd}
			\end{align}
			Dividing \cref{app_upd} by $\delta$, from 1-homogeneity of $\partial \mathcal{H}(\mathbf{x};\mathbf{w})$ (\cref{euler_thm}) we have
			\begin{align}
				\frac{\dot{\mathbf{w}}}{\delta} \in \frac{1}{\delta}\sum_{i=1}^n(-\nabla_{\hat{y}}\ell\left(0, {y}_i\right)  - \xi_i(t))\partial \mathcal{H}(\mathbf{x}_i;\mathbf{w})= \sum_{i=1}^n(-\nabla_{\hat{y}}\ell\left(0, {y}_i\right)  - \xi_i(t))\partial \mathcal{H}(\mathbf{x}_i;\mathbf{w}/\delta).
			\end{align}
			Now, consider the differential inclusion
			\begin{align}
				\frac{d{\tilde{\mathbf{w}}}}{dt} \in -\sum_{i=1}^n\nabla_{\hat{y}}\ell\left(0, {y}_i\right) \partial \mathcal{H}(\mathbf{x}_i;\tilde{\mathbf{w}}), \tilde{\mathbf{w}}(0) = \mathbf{w}_0.
				\label{non_diff}
			\end{align}
			Using the fact that for all $t\in \left[0,\frac{\ln(C)}{4\beta\tilde{\beta}}\right]$,  $\|\xi(t)\|_2 \leq \delta^2\hat{\beta}\beta C$, and using \Cref{err_bd_diff_inclusion}, we have that there exists a small enough $\overline{\delta}$ such that for all $\delta\leq\overline{\delta}$,
			$$\left\|\tilde{\mathbf{w}}(t) - \frac{\mathbf{w}(t)}{\delta}\right\|_2 \leq \epsilon,$$
			where $\tilde{\mathbf{w}}(t)$ is a solution of \cref{non_diff}.
		\end{proof}
		
		\subsection{Proof of \cref{ncf_convg}}
		Recall that $\mathcal{N}_{\mathbf{z},\mathcal{H}}(\mathbf{u})= \mathbf{z}^\top \mathcal{H}\left(\mathbf{X};\mathbf{u}\right)$. Throughout this section, for the sake of brevity, we will use $\mathcal{N}(\mathbf{u})$ instead of $\mathcal{N}_{\mathbf{z},\mathcal{H}}(\mathbf{u})$. The gradient flow $\mathbf{u}(t)$  satisfies, for a.e.\  $t\geq 0$,
		\begin{align}
			\frac{d{\mathbf{u}}}{dt} \in \partial\mathcal{N}(\mathbf{u}) \subseteq  \sum_{i=1}^n {z}_i\partial \mathcal{H}(\mathbf{x}_i;\mathbf{u}), {\mathbf{u}}(0) = \mathbf{u}_0.
			\label{gd_limit}
		\end{align} 
		
		We will first prove some auxiliary lemmata. The first follows simply from two-homogeneity of  $\mathcal{N}(\mathbf{u})$ and \cref{euler_gen}.   
		\begin{lemma}\label{euler}
			For any $s\in \partial \mathcal{N}(\mathbf{u})$, $\mathbf{s}^\top\mathbf{u} = 2\mathcal{N}(\mathbf{u})$. For any $c\geq0$, $ \partial \mathcal{N}(c\mathbf{u}) =  c\partial \mathcal{N}(\mathbf{u})$ 
		\end{lemma}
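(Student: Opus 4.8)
The plan is to derive both claims directly from Lemma~\ref{euler_gen} (the Euler-type identities for positively homogeneous functions), after checking that $\mathcal{N}(\mathbf{u}) = \mathbf{z}^\top\mathcal{H}(\mathbf{X};\mathbf{u})$ lies within its scope. First I would verify that $\mathcal{N}$ is locally Lipschitz: by \Cref{2_homo_assumption}, each coordinate $\mathcal{H}(\mathbf{x}_i;\mathbf{u})$ is locally Lipschitz in $\mathbf{u}$, hence so is the fixed linear combination $\mathbf{z}^\top\mathcal{H}(\mathbf{X};\mathbf{u}) = \sum_{i=1}^n z_i \mathcal{H}(\mathbf{x}_i;\mathbf{u})$. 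Next I would check that $\mathcal{N}$ is $2$-positively homogeneous: for any $c\geq 0$, two-homogeneity of $\mathcal{H}$ gives $\mathcal{H}(\mathbf{x}_i;c\mathbf{u}) = c^2\mathcal{H}(\mathbf{x}_i;\mathbf{u})$ for every $i$, so $\mathcal{N}(c\mathbf{u}) = \sum_i z_i\mathcal{H}(\mathbf{x}_i;c\mathbf{u}) = c^2\sum_i z_i\mathcal{H}(\mathbf{x}_i;\mathbf{u}) = c^2\mathcal{N}(\mathbf{u})$.

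With these two facts in hand, I would apply \Cref{euler_gen} with $F = \mathcal{N}$ and $L = 2$. Its first conclusion yields $\partial\mathcal{N}(c\mathbf{u}) = c^{2-1}\partial\mathcal{N}(\mathbf{u}) = c\,\partial\mathcal{N}(\mathbf{u})$ for all $c\geq 0$, which is the second claim. Its second conclusion yields $\mathbf{u}^\top\mathbf{s} = 2\mathcal{N}(\mathbf{u})$ for every $\mathbf{s}\in\partial\mathcal{N}(\mathbf{u})$, which is the first claim. There is essentially no obstacle here; the only point requiring any care is the (routine) observation that taking a fixed linear combination of the components of $\mathcal{H}(\mathbf{X};\cdot)$ preserves both local Lipschitzness and exact two-homogeneity, which is immediate. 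Alternatively, one could instead invoke \Cref{euler_thm}, which already records the corresponding identities for each $\mathcal{H}(\mathbf{x}_i;\cdot)$, and combine them using the chain rule of \Cref{chain_rule_non_diff} together with linearity of $\mathbf{z}^\top(\cdot)$; but applying \Cref{euler_gen} directly to $\mathcal{N}$ is cleaner.
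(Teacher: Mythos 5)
Your proposal is correct and matches the paper's approach: the paper likewise derives this lemma directly from the two-homogeneity of $\mathcal{N}$ together with \cref{euler_gen}, and your verification that $\mathcal{N}$ inherits local Lipschitzness and exact two-homogeneity from $\mathcal{H}$ is the (routine) step the paper leaves implicit.
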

		The next lemma states that if the initialization is non-zero, then gradient flow stays away from the origin for all finite time.
		\begin{lemma}
			Suppose $\mathbf{u}(t)$ is a solution of \cref{gd_limit},
			where $\mathbf{u}_0$ is a non-zero vector. Then, for all finite $t>0$, $\|\mathbf{u}(t)\|_2>0$.
			\label{finite_u}
		\end{lemma}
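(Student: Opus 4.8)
The plan is to derive a differential inequality showing that $\|\mathbf{u}(t)\|_2$ cannot reach zero in finite time, exploiting two-homogeneity of $\mathcal{H}$. Set $z(t) = \|\mathbf{u}(t)\|_2^2$; this is an absolutely continuous function since $\mathbf{u}(t)$ is an arc. Using the chain rule for gradient flow (\Cref{descent_lemma}) together with \Cref{euler}, we have for a.e.\ $t\geq 0$
$$
\dot{z}(t) = 2\mathbf{u}(t)^\top\dot{\mathbf{u}}(t) = 2\mathbf{u}(t)^\top\overline{\partial}\mathcal{N}(\mathbf{u}(t)) = 4\mathcal{N}(\mathbf{u}(t)),
$$
where the last equality is the Euler identity from \Cref{euler}. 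Now I would bound $|\mathcal{N}(\mathbf{u})|$ in terms of $z$: from the definition $\mathcal{N}(\mathbf{u}) = \mathbf{z}^\top\mathcal{H}(\mathbf{X};\mathbf{u})$ and the two-homogeneity bound $\|\mathcal{H}(\mathbf{X};\mathbf{u})\|_2 \leq \beta\|\mathbf{u}\|_2^2 = \beta z$, Cauchy–Schwarz gives $|\mathcal{N}(\mathbf{u}(t))| \leq \|\mathbf{z}\|_2\beta z(t)$. Hence
$$
\dot{z}(t) \geq -4\|\mathbf{z}\|_2\beta\, z(t), \quad \text{for a.e.\ } t\geq 0.
$$

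The remaining step is a Grönwall-type argument. Since $z$ is absolutely continuous and satisfies this differential inequality almost everywhere, integrating yields $z(t) \geq z(0)e^{-4\|\mathbf{z}\|_2\beta t}$ for all $t\geq 0$. Because $\mathbf{u}_0$ is non-zero, $z(0) = \|\mathbf{u}_0\|_2^2 > 0$, so $z(t) > 0$ for every finite $t > 0$, which is exactly $\|\mathbf{u}(t)\|_2 > 0$. To be careful about the Grönwall step with only almost-everywhere differentiability, I would apply it to $\phi(t) = z(t)e^{4\|\mathbf{z}\|_2\beta t}$, which is again absolutely continuous with $\dot{\phi}(t) \geq 0$ a.e., hence non-decreasing, giving $\phi(t) \geq \phi(0) = z(0) > 0$.

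The only mild subtlety — and the one point I would be most careful about — is justifying $\dot{z}(t) = 4\mathcal{N}(\mathbf{u}(t))$ rigorously for a non-smooth $\mathcal{N}$. This is precisely what \Cref{descent_lemma} provides: along a gradient flow arc, $\dot{\mathbf{x}}(t) = \overline{\partial}f(\mathbf{x}(t))$ a.e., and $\frac{d}{dt}(f\circ\mathbf{x})(t) = \langle \mathbf{x}^*(t),\dot{\mathbf{x}}(t)\rangle$ for \emph{every} $\mathbf{x}^*(t)\in\partial f(\mathbf{x}(t))$, so in particular the inner product $\mathbf{u}(t)^\top\dot{\mathbf{u}}(t)$ is well-defined and equals $\mathbf{u}(t)^\top\mathbf{s}$ for any $\mathbf{s}\in\partial\mathcal{N}(\mathbf{u}(t))$, which by \Cref{euler} equals $2\mathcal{N}(\mathbf{u}(t))$. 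Everything else is a routine one-line Grönwall estimate.
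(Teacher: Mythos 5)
Your proof is correct and follows essentially the same route as the paper: apply the Euler identity from \Cref{euler} to get $\frac{d}{dt}\|\mathbf{u}\|_2^2 = 4\mathcal{N}(\mathbf{u})$, bound $|\mathcal{N}(\mathbf{u})| \leq \beta\|\mathbf{z}\|_2\|\mathbf{u}\|_2^2$, and conclude an exponential lower bound on $\|\mathbf{u}(t)\|_2^2$. The only difference is packaging: the paper runs a contradiction argument via $\frac{d}{dt}\log\|\mathbf{u}\|_2^2$ on the maximal interval where the norm is positive, whereas you apply Grönwall directly to $z(t)e^{4\|\mathbf{z}\|_2\beta t}$, which is marginally cleaner since it avoids dividing by a quantity that could vanish; your constant $4$ in the exponent is also the correct one (the paper's displayed $2$ appears to be a minor slip that does not affect the conclusion).
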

		\begin{proof}
			Since $\|\mathbf{u}(0)\|_2>0$, from continuity of $\mathbf{u}(t)$, there exists some $\gamma>0$ such that $\|\mathbf{u}(t)\|_2>0$, for all $t\in (0,\gamma)$. For the sake of contradiction, suppose there exists some finite $T>0$ such that $\|\mathbf{u}(T)\|_2 = 0$ for the first time. Then, for all $t\in [0,T)$, $\|\mathbf{u}(t)\|_2 > 0$. Since for a.e.\  $t\in [0,T)$
			\begin{align*}
				& \frac{d{\log(\|\mathbf{u}\|_2^2)}}{dt}  = \frac{1}{\|\mathbf{u}\|_2^2}\frac{d{\|\mathbf{u}\|_2^2}}{dt}  =  \frac{4\mathbf{z}^\top \mathcal{H}(\mathbf{X};\mathbf{u})}{\|\mathbf{u}\|_2^2}  \geq -4\beta\|\mathbf{z}\|_2,
			\end{align*}
			it follows that for all $t\in (0,T)$,
			$$\|\mathbf{u}(t)\|_2^2 \geq \|\mathbf{u}_0\|_2^2e^{-4t\beta\|\mathbf{z}\|_2}.$$
			Taking $t\rightarrow T$, we have $\|\mathbf{u}(T)\|_2^2 \geq \|\mathbf{u}_0\|_2^2e^{-4T\beta\|\mathbf{z}\|_2}>0$ which leads to a contradiction.
		\end{proof}
		\begin{lemma}
			If $\mathcal{N}(\mathbf{u}(t_0)) \geq 0,$ for any $t_0\geq 0$, then, $\mathcal{N}(\mathbf{u}(t)) \geq 0$ and $\|\mathbf{u}(t)\|_2\geq \|\mathbf{u}(t_0)\|_2$, for all $t\geq t_0$.
			\label{pos_init}
		\end{lemma}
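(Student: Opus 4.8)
The plan is to monitor two scalar quantities along the positive gradient flow $\dot{\mathbf{u}}\in\partial\mathcal{N}(\mathbf{u})$: the value $\mathcal{N}(\mathbf{u}(t))$ and the squared norm $\|\mathbf{u}(t)\|_2^2$. Both are absolutely continuous in $t$, being compositions of the arc $\mathbf{u}(\cdot)$ with a locally Lipschitz map, so it suffices to bound their derivatives almost everywhere and integrate from $t_0$.

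First I would show that $\mathcal{N}(\mathbf{u}(t))$ is non-decreasing. Since $\mathcal{N}(\mathbf{u}) = \mathbf{z}^\top\mathcal{H}(\mathbf{X};\mathbf{u})$ is locally Lipschitz and definable with open domain $\mathbb{R}^k$ (by \Cref{2_homo_assumption}), \Cref{descent_lemma} applies and yields, for a.e.\ $t$, that $\dot{\mathbf{u}}(t)=\overline{\partial}\mathcal{N}(\mathbf{u}(t))$ and $\frac{d}{dt}\mathcal{N}(\mathbf{u}(t))=\|\overline{\partial}\mathcal{N}(\mathbf{u}(t))\|_2^2\ge 0$. Integrating from $t_0$ gives $\mathcal{N}(\mathbf{u}(t))\ge\mathcal{N}(\mathbf{u}(t_0))\ge 0$ for all $t\ge t_0$, which is the first assertion.

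Next I would handle the norm. Using $\dot{\mathbf{u}}(t)=\overline{\partial}\mathcal{N}(\mathbf{u}(t))\in\partial\mathcal{N}(\mathbf{u}(t))$ together with the Euler identity $\mathbf{s}^\top\mathbf{u}=2\mathcal{N}(\mathbf{u})$ valid for every $\mathbf{s}\in\partial\mathcal{N}(\mathbf{u})$ (\Cref{euler}, a consequence of two-homogeneity), we get, for a.e.\ $t$,
\[
\frac{d}{dt}\|\mathbf{u}(t)\|_2^2 = 2\,\mathbf{u}(t)^\top\dot{\mathbf{u}}(t) = 4\,\mathcal{N}(\mathbf{u}(t)).
\]
By the first part, $\mathcal{N}(\mathbf{u}(t))\ge 0$ throughout $[t_0,\infty)$, so $t\mapsto\|\mathbf{u}(t)\|_2^2$ has non-negative derivative a.e.\ on that interval, hence is non-decreasing there; therefore $\|\mathbf{u}(t)\|_2\ge\|\mathbf{u}(t_0)\|_2$ for all $t\ge t_0$.

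There is no real obstacle here: the argument is a short monotonicity computation. The only points requiring mild care are the measure-theoretic bookkeeping — that an absolutely continuous function with a.e.\ non-negative derivative is monotone — and checking that $\mathcal{N}$ satisfies the hypotheses of \Cref{descent_lemma}, both of which are immediate from the setup.
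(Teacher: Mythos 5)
Your proof is correct and follows essentially the same route as the paper: invoke the chain rule from \Cref{descent_lemma} to show $\mathcal{N}(\mathbf{u}(t))$ is non-decreasing (hence stays non-negative), then use the Euler identity from \Cref{euler} to get $\frac{d}{dt}\|\mathbf{u}\|_2^2 = 4\mathcal{N}(\mathbf{u}) \geq 0$ and conclude monotonicity of the norm. The only cosmetic difference is that you phrase the conclusions via monotonicity of a.e.\ differentiable functions, whereas the paper writes out the integrals explicitly; these are equivalent.
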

		\begin{proof}
			Since, using \cref{descent_lemma}, 
			\begin{align*}
				\mathcal{N}(\mathbf{u}(t)) - \mathcal{N}(\mathbf{u}(t_0)) =  \int_{t_0}^{t} \|\dot{\mathbf{u}}({s})\|_2^2 ds,
			\end{align*}
			we have that for $t\geq t_0$, $\mathcal{N}(\mathbf{u}(t)) \geq \mathcal{N}(\mathbf{u}(t_0)) \geq 0$. The second claim is true since for a.e.\  $t\geq 0$,  $\frac{d{\|\mathbf{u}\|_2^2}}{dt} = 4\mathcal{N}(\mathbf{u})$ implies
			\begin{align*}
				\|\mathbf{u}(t)\|_2^2 - \|\mathbf{u}(t_0)\|_2^2 = 4\int_{t_0}^{t}\mathcal{N}(\mathbf{u}(s)) ds \geq 0.
			\end{align*}
		\end{proof}
		The following  lemma states the conditions for first-order KKT point of the constrained NCF.
		\begin{lemma}
			If a vector $\mathbf{u}^*\in \mathbb{R}^{k\times 1}$ is a first-order KKT point of  
			\begin{equation}
				\max_{\|\mathbf{u}\|_2^2 = 1} \mathcal{N}(\mathbf{u}) = \mathbf{z}^\top \mathcal{H}(\mathbf{X};\mathbf{u}),
				\label{const_opt}
			\end{equation}
			then
			\begin{equation}
				\sum_{i=1}^n{z}_i\partial \mathcal{H}(\mathbf{x}_i;\mathbf{u}^*) = \lambda^*\mathbf{u}^*, \|\mathbf{u}^*\|_2^2 = 1,
				\label{crit_pt}
			\end{equation}
			where $\lambda^*\in\mathbb{R}$ is the Lagrange multiplier. Also, $2\mathcal{N}(\mathbf{u}^*) = \lambda^*$ and hence, for a non-negative KKT point $\lambda^*\geq 0$.  
		\end{lemma}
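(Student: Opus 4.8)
The plan is to simply unpack the definition of a first-order KKT point of \cref{const_opt} and then invoke two tools already available in the excerpt: Clarke's chain rule (\Cref{chain_rule_non_diff}) and the Euler-type identity for two-homogeneous functions (\Cref{euler}, equivalently part~2 of \Cref{euler_gen} with $L=2$; see also \Cref{euler_thm}).

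First I would write the equality constraint as $g(\mathbf{u}) = \|\mathbf{u}\|_2^2 - 1 = 0$, which is smooth with $\nabla g(\mathbf{u}) = 2\mathbf{u}$. Since any KKT point is feasible we have $\|\mathbf{u}^*\|_2 = 1$, so $\nabla g(\mathbf{u}^*) = 2\mathbf{u}^* \neq \mathbf{0}$; thus the (single) active-constraint gradient is linearly independent and the constraint qualification needed to put stationarity in multiplier form is met. Hence, by definition of a first-order KKT point of the nonsmooth problem $\max \mathcal{N}(\mathbf{u})$ subject to $g(\mathbf{u})=0$, there is a scalar $\lambda^*\in\mathbb{R}$ with $\lambda^*\mathbf{u}^* \in \partial\mathcal{N}(\mathbf{u}^*)$ (absorbing the factor $2$ from $\nabla g$ into $\lambda^*$), together with feasibility $\|\mathbf{u}^*\|_2^2 = 1$. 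Applying \Cref{chain_rule_non_diff} to $\mathcal{N}(\mathbf{u}) = \sum_{i=1}^n z_i\mathcal{H}(\mathbf{x}_i;\mathbf{u})$ gives $\partial\mathcal{N}(\mathbf{u}^*) \subseteq \sum_{i=1}^n z_i\,\partial\mathcal{H}(\mathbf{x}_i;\mathbf{u}^*)$, so combining the two inclusions yields $\lambda^*\mathbf{u}^* \in \sum_{i=1}^n z_i\,\partial\mathcal{H}(\mathbf{x}_i;\mathbf{u}^*)$, which is \cref{crit_pt}.

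To pin down the value of $\lambda^*$, take the particular subgradient $\mathbf{s} := \lambda^*\mathbf{u}^* \in \partial\mathcal{N}(\mathbf{u}^*)$. On one hand, two-homogeneity of $\mathcal{N}$ and \Cref{euler} give $\mathbf{s}^\top\mathbf{u}^* = 2\mathcal{N}(\mathbf{u}^*)$; on the other hand $\mathbf{s}^\top\mathbf{u}^* = \lambda^*\|\mathbf{u}^*\|_2^2 = \lambda^*$ by feasibility. Equating the two expressions gives $\lambda^* = 2\mathcal{N}(\mathbf{u}^*)$. In particular, for a non-negative KKT point (one with $\mathcal{N}(\mathbf{u}^*)\geq 0$) we get $\lambda^* = 2\mathcal{N}(\mathbf{u}^*) \geq 0$, completing the proof.

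The argument is essentially bookkeeping once the pieces are lined up; the only point requiring care — the ``hard part'' — is stating the first-order stationarity condition for this nonsmooth, equality-constrained \emph{maximization} correctly in terms of the Clarke subdifferential, and verifying the constraint qualification so that the objective's multiplier can be normalized to $1$ (ruling out a degenerate Fritz--John situation). Here that is immediate because the single constraint gradient $2\mathbf{u}^*$ is nonzero on the unit sphere.
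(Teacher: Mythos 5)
Your proof is correct and follows essentially the same route as the paper: write the stationarity condition in multiplier form, pass to $\sum_i z_i\,\partial\mathcal{H}(\mathbf{x}_i;\mathbf{u}^*)$ via the chain rule, and evaluate $\lambda^*$ by pairing the subgradient $\lambda^*\mathbf{u}^*$ with $\mathbf{u}^*$ and invoking the Euler identity for the two-homogeneous $\mathcal{N}$. The only difference is cosmetic — you spell out the constraint-qualification check (that $\nabla g(\mathbf{u}^*)=2\mathbf{u}^*\neq\mathbf{0}$) to justify normalizing the objective's multiplier, whereas the paper takes this as part of what ``first-order KKT point'' means and goes straight to the Lagrangian.
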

		\begin{proof}
			The Lagrangian is equal to 
			$$L(\mathbf{u},\lambda) = \mathcal{N}(\mathbf{u}) + \lambda(\|\mathbf{u}\|_2^2 - 1).$$
			If $\mathbf{u}^*$ is a first-order KKT point then it must satisfy the constraint set and, for some $\overline{\lambda}$,
			\begin{align*}
				&\mathbf{0} \in \partial \mathcal{N}(\mathbf{u}^*) + 2\overline{\lambda}\mathbf{u}^*,
			\end{align*}
			implying
			\begin{align*}
				\mathbf{0}\in\sum_{i=1}^n{z}_i\partial \mathcal{H}(\mathbf{x}_i;\mathbf{u}^*) + 2\overline{\lambda}\mathbf{u}^* .
			\end{align*}
			Choosing $\lambda^* = -2\overline{\lambda} $ we get \cref{crit_pt}. By \cref{euler},
			$$ \lambda^* = \lambda^*\|\mathbf{u}^*\|_2^2 = {\mathbf{u}^*}^\top\partial \mathcal{N}(\mathbf{u}^*) = 2\mathcal{N}(\mathbf{u}^*) $$
		\end{proof}
		In the following lemma we define $\tilde{\mathcal{N}}(\mathbf{u})$, which is central to our proof, and its minimum norm Clarke subdifferential.
		\begin{lemma}
			For any nonzero $\mathbf{u}\in\mathbb{R}^k$ we define  $\tilde{\mathcal{N}}(\mathbf{u}) = \mathcal{N}(\mathbf{u})/\|\mathbf{u}\|_2^2$, then,
			$$\partial \tilde{\mathcal{N}}(\mathbf{u}) = \left\{ \frac{\mathbf{s}}{\|\mathbf{u}\|_2^2} - \frac{2\mathcal{N}(\mathbf{u})\mathbf{u}}{\|\mathbf{u}\|_2^4} \bigg\vert\mathbf{s} \in \partial \mathcal{N}(\mathbf{u})\right\} = \left\{ \left(\mathbf{I} - \frac{\mathbf{u}\mathbf{u}^\top}{\|\mathbf{u}\|_2^2}\right) \frac{\mathbf{s}}{\|\mathbf{u}\|_2^2}\bigg\vert\mathbf{s} \in \partial \mathcal{N}(\mathbf{u})\right\},$$
			and 
			$$\overline{\partial} \tilde{\mathcal{N}}(\mathbf{u}) = \left(\mathbf{I} - \frac{\mathbf{u}\mathbf{u}^\top}{\|\mathbf{u}\|_2^2}\right) \frac{\overline{\partial} {\mathcal{N}}(\mathbf{u})}{\|\mathbf{u}\|_2^2}.$$ 
			\label{clarke_N_tilde}
		\end{lemma}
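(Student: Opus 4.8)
The plan is to treat $\tilde{\mathcal{N}}$ as the product $\tilde{\mathcal{N}}(\mathbf{u}) = h(\mathbf{u})\,\mathcal{N}(\mathbf{u})$, where $h(\mathbf{u}) := \|\mathbf{u}\|_2^{-2}$ is $C^\infty$ on the open set $\{\mathbf{u}\neq\mathbf{0}\}$ with $\nabla h(\mathbf{u}) = -2\mathbf{u}/\|\mathbf{u}\|_2^4$, while $\mathcal{N}$ is locally Lipschitz and definable. First I would establish the product-rule identity
$$\partial\tilde{\mathcal{N}}(\mathbf{u}) = h(\mathbf{u})\,\partial\mathcal{N}(\mathbf{u}) + \mathcal{N}(\mathbf{u})\,\nabla h(\mathbf{u}).$$
This can be quoted from the fact that Clarke's product rule holds with equality when one factor is $C^1$, but it is just as easy to argue it directly from the definition of the Clarke subdifferential: fix a full-measure set $\Omega$ on which $\mathcal{N}$ is differentiable; then $\tilde{\mathcal{N}}$ is differentiable on $\Omega\cap\{\mathbf{u}\neq\mathbf{0}\}$ with $\nabla\tilde{\mathcal{N}} = h\nabla\mathcal{N} + \mathcal{N}\nabla h$; taking sequences $\mathbf{u}_i\to\mathbf{u}$ in $\Omega$, using continuity of $h$, $\nabla h$, and $\mathcal{N}$, and the elementary identity $\mathrm{conv}(aS+\mathbf{b}) = a\,\mathrm{conv}(S)+\mathbf{b}$, the claimed formula follows. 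Substituting the explicit $h$ and $\nabla h$ yields exactly $\partial\tilde{\mathcal{N}}(\mathbf{u}) = \{\mathbf{s}/\|\mathbf{u}\|_2^2 - 2\mathcal{N}(\mathbf{u})\mathbf{u}/\|\mathbf{u}\|_2^4 : \mathbf{s}\in\partial\mathcal{N}(\mathbf{u})\}$, the first displayed expression.

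The second expression is a pointwise rewriting using the Euler identity from \cref{euler}: every $\mathbf{s}\in\partial\mathcal{N}(\mathbf{u})$ satisfies $\mathbf{s}^\top\mathbf{u} = 2\mathcal{N}(\mathbf{u})$, so $2\mathcal{N}(\mathbf{u})\mathbf{u} = (\mathbf{s}^\top\mathbf{u})\mathbf{u} = \mathbf{u}\mathbf{u}^\top\mathbf{s}$, and hence
$$\frac{\mathbf{s}}{\|\mathbf{u}\|_2^2} - \frac{2\mathcal{N}(\mathbf{u})\mathbf{u}}{\|\mathbf{u}\|_2^4} = \left(\mathbf{I} - \frac{\mathbf{u}\mathbf{u}^\top}{\|\mathbf{u}\|_2^2}\right)\frac{\mathbf{s}}{\|\mathbf{u}\|_2^2}.$$
Since this holds for each $\mathbf{s}$, the two set descriptions agree.

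For the minimum-norm subgradient, write $P := \mathbf{I} - \mathbf{u}\mathbf{u}^\top/\|\mathbf{u}\|_2^2$, the orthogonal projection onto $\mathbf{u}^\perp$, so that $\partial\tilde{\mathcal{N}}(\mathbf{u}) = \|\mathbf{u}\|_2^{-2}\,P\,\partial\mathcal{N}(\mathbf{u})$. The key point, again from \cref{euler}, is that the radial component $\mathbf{s} - P\mathbf{s} = (\mathbf{s}^\top\mathbf{u}/\|\mathbf{u}\|_2^2)\mathbf{u} = (2\mathcal{N}(\mathbf{u})/\|\mathbf{u}\|_2^2)\mathbf{u}$ is the \emph{same} vector for every $\mathbf{s}\in\partial\mathcal{N}(\mathbf{u})$; since $\mathbf{s}-P\mathbf{s}$ and $P\mathbf{s}$ are orthogonal, $\|\mathbf{s}\|_2^2 = \|P\mathbf{s}\|_2^2$ plus a constant independent of $\mathbf{s}$. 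Hence the element of $\partial\mathcal{N}(\mathbf{u})$ minimizing $\|P\mathbf{s}\|_2$ is precisely the one minimizing $\|\mathbf{s}\|_2$, namely $\overline{\partial}\mathcal{N}(\mathbf{u})$ (well defined and unique because $\partial\mathcal{N}(\mathbf{u})$ is nonempty, convex, and compact). As $P$ is linear, $P\,\overline{\partial}\mathcal{N}(\mathbf{u})$ is therefore the minimum-norm element of $P\,\partial\mathcal{N}(\mathbf{u})$, and dividing by $\|\mathbf{u}\|_2^2$ gives $\overline{\partial}\tilde{\mathcal{N}}(\mathbf{u}) = (\mathbf{I} - \mathbf{u}\mathbf{u}^\top/\|\mathbf{u}\|_2^2)\,\overline{\partial}\mathcal{N}(\mathbf{u})/\|\mathbf{u}\|_2^2$.

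I expect the last step to be the only delicate one: for a general linear map the minimum-norm point is not sent to the minimum-norm point of the image, so one must exploit the structural fact that $\partial\mathcal{N}(\mathbf{u})$ lies in an affine slice with a fixed radial coordinate — which is exactly what the Euler identity for two-homogeneous functions provides. The product-rule step is routine once one observes that $h$ is smooth away from the origin.
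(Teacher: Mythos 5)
Your proof is correct and takes essentially the same route as the paper: both arguments compute $\nabla\tilde{\mathcal{N}}$ on the full-measure set where $\mathcal{N}$ is differentiable, pass to the Clarke subdifferential via limits and convex hulls (using that $\|\mathbf{u}\|_2^{-2}$ is smooth and nonvanishing away from the origin), invoke the Euler identity $\mathbf{s}^\top\mathbf{u}=2\mathcal{N}(\mathbf{u})$ to rewrite the set as $\|\mathbf{u}\|_2^{-2}P\,\partial\mathcal{N}(\mathbf{u})$ with $P$ the projection onto $\mathbf{u}^\perp$, and then observe that the radial component of every $\mathbf{s}\in\partial\mathcal{N}(\mathbf{u})$ is the same, so minimizing $\|P\mathbf{s}\|_2$ is equivalent to minimizing $\|\mathbf{s}\|_2$. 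Your Pythagorean phrasing of that last step is slightly cleaner than the paper's explicit norm expansion, but the underlying calculation is identical.
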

		\begin{proof}
			First, note that $ \tilde{\mathcal{N}}(\mathbf{u})$ is differentiable if and only if $\mathcal{N}(\mathbf{u})$ is differentiable. Therefore, for any non-zero $\mathbf{u}$ such that $\mathcal{N}(\mathbf{u})$ is differentiable,
			\begin{align*}
				\nabla \tilde{\mathcal{N}}(\mathbf{u}) = \frac{\nabla {\mathcal{N}}(\mathbf{u})}{\|\mathbf{u}\|_2^2} - \frac{2\mathcal{N}(\mathbf{u})\mathbf{u}}{\|\mathbf{u}\|_2^4}.
			\end{align*}
			The first claim follows from the definition of Clarke subdifferential and \cref{euler}. For the second claim, note that
			\begin{align*}
				\left\|\frac{\mathbf{s}}{\|\mathbf{u}\|_2^2} - \frac{2\mathcal{N}(\mathbf{u})\mathbf{u}}{\|\mathbf{u}\|_2^4}\right\|_2^2 = \frac{\|\mathbf{s}\|_2^2}{\|\mathbf{u}\|_2^4} - \frac{4\mathcal{N}(\mathbf{u})^2}{\|\mathbf{u}\|_2^6} + \frac{4\mathcal{N}(\mathbf{u})^2}{\|\mathbf{u}\|_2^8},
			\end{align*}
			where we use $\mathbf{s}^\top\mathbf{u} = 2\mathcal{N}(\mathbf{u})$, for all $\mathbf{s}\in  \partial \mathcal{N}(\mathbf{u})$. Hence, for the minimum norm subdifferential of $\tilde{\mathcal{N}}(\mathbf{u})$, we must choose the minimum norm subdifferential of $\mathcal{N}(\mathbf{u}).$
		\end{proof}
		
		We now proceed to proving \cref{ncf_convg}. We begin by showing that either $\mathbf{u}(t)$ converges to $\mathbf{0}$ or $\lim_{t\rightarrow\infty}{\mathbf{u}(t)}/{\|\mathbf{u}(t)\|_2}$ exists.
		We consider two cases.
		
		\textbf{Case 1:} $\mathcal{N}(\mathbf{u}(0)) > 0$.\\
		In this case, we show that $\lim_{t\rightarrow\infty}{\mathbf{u}(t)}/{\|\mathbf{u}(t)\|_2}$ exists using a similar technique as in \citet{ji_matus_align}. Specifically, we show that the length of the curve swept by ${\mathbf{u}(t)}/{\|\mathbf{u}(t)\|_2}$, which is defined as 
		$$\int_{0}^\infty\left\|\frac{d}{dt}\left(\frac{\mathbf{u}}{\|\mathbf{u}\|_2}\right) \right\|_2dt ,$$
		has finite length, and thus $\lim_{t\rightarrow\infty}{\mathbf{u}(t)}/{\|\mathbf{u}(t)\|_2}$ exists.

		We assume $\mathcal{N}(\mathbf{u}(0)) = \gamma > 0,$ and thus $\|\mathbf{u}(0)\|_2 > 0$. From \cref{pos_init}, for all $t\geq0$,
		\begin{align}
			&\mathcal{N}(\mathbf{u}(t)) \geq \mathcal{N}(\mathbf{u}(0)) = \gamma,\nonumber
		\end{align}
		implying
		\begin{align}
			\frac{1}{2}\frac{d{\|\mathbf{u}\|_2^2}}{dt} = \mathbf{u}^\top\dot{\mathbf{u}} = 2\mathcal{N}(\mathbf{u}(t)) \geq 2\gamma, \text{ for a.e.\  }t\geq 0,\nonumber
		\end{align}
		which in turn implies
		\begin{align}
			\|\mathbf{u}(t)\|_2^2 \geq \|\mathbf{u}(0)\|_2^2 + 4\gamma t, \text{ for all }t\geq 0.\label{bd_ut_pos}
		\end{align}
		Recall that $\tilde{\mathcal{N}}(\mathbf{u}) = \mathcal{N}(\mathbf{u})/\|\mathbf{u}\|_2^2$. Since $\|\mathbf{u}(t)\|_2>0$,  $\tilde{\mathcal{N}}(\mathbf{u}(t))$ is defined for all $t\geq0$. Also, by \cref{descent_lemma}, for a.e.\  $t\geq 0$,
		\begin{align*}
			\frac{d{{\mathcal{N}}(\mathbf{u})}}{dt} = \|\dot{\mathbf{u}}\|_2^2, \text{ and } \dot{\mathbf{u}} = \overline{\partial} \mathcal{N}(\mathbf{u}).
		\end{align*}
		Therefore, using the chain rule from \cref{descent_lemma}, for a.e.\  $t\geq0$ we have
		\begin{align}
			\frac{d{\tilde{\mathcal{N}}(\mathbf{u})}}{dt}  = \dot{\mathbf{u}}^\top \left(\mathbf{I}-\frac{\mathbf{u}\mathbf{u}^T}{\|\mathbf{u}\|_2^2}\right)\frac{\overline{\partial} \mathcal{N}(\mathbf{u})}{\|\mathbf{u}\|_2^2}= \frac{\overline{\partial} \mathcal{N}(\mathbf{u})^\top}{\|\mathbf{u}\|_2^2}\left(\mathbf{I}-\frac{\mathbf{u}\mathbf{u}^T}{\|\mathbf{u}\|_2^2}\right)\overline{\partial} \mathcal{N}(\mathbf{u})\geq 0,
			\label{bd_f_tilde}
		\end{align}
		where in the second equality we used that $\dot{\mathbf{u}} = \overline{\partial} \mathcal{N}(\mathbf{u})$, for a.e.\  $t\geq 0$. Hence, for all $t_2\geq t_1\geq 0$,
		\begin{align*}
			\tilde{\mathcal{N}}(\mathbf{u}(t_2)) - \tilde{\mathcal{N}}(\mathbf{u}(t_1)) = \int_{t_1}^{t_2}\frac{\overline{\partial} \mathcal{N}(\mathbf{u})^\top}{\|\mathbf{u}\|_2^2}\left(\mathbf{I}-\frac{\mathbf{u}\mathbf{u}^T}{\|\mathbf{u}\|_2^2}\right)^\top\overline{\partial} \mathcal{N}(\mathbf{u}) dt \geq 0.
		\end{align*}
		Therefore, $\tilde{\mathcal{N}}(\mathbf{u}(t)))$ is an increasing function, and hence, for any $t\geq 0$, that $\tilde{\mathcal{N}}(\mathbf{u}(t)) \geq \tilde{\mathcal{N}}(\mathbf{u}(0))$ implies
		\begin{align*}
			\mathcal{N}(\mathbf{u}(t)) \geq \tilde{\mathcal{N}}(\mathbf{u}(0))\|\mathbf{u}(t)\|_2^2.
		\end{align*}
		From the above inequality and \cref{bd_ut_pos}, we have $\lim_{t\rightarrow\infty}\mathcal{N}(\mathbf{u}(t)) =\infty $.\\
		
		Now, since $\mathcal{N}(\mathbf{u}) \leq \|\mathbf{z}\|_2\|\mathcal{H}(\mathbf{X};\mathbf{u})\|_2\leq \beta\|\mathbf{z}\|_2\|\mathbf{u}\|_2^2$, we have that $\tilde{\mathcal{N}}(\mathbf{u})$ is bounded. Hence, by monotone convergence theorem, $\lim_{t\rightarrow\infty} \tilde{\mathcal{N}}(\mathbf{u}(t))$ exists; here, we suppose it is equal to $\overline{f}$.
		
		Note that, by the chain rule, for a.e.\  $t\geq0$,  
		\begin{align}
			&\frac{d}{dt}\left(\frac{\mathbf{u}}{\|\mathbf{u}\|_2}\right) =  \left(\mathbf{I}-\frac{\mathbf{u}\mathbf{u}^T}{\|\mathbf{u}\|_2^2}\right)\frac{\dot{\mathbf{u}}}{\|\mathbf{u}\|_2} = \left(\mathbf{I}-\frac{\mathbf{u}\mathbf{u}^T}{\|\mathbf{u}\|_2^2}\right)\frac{\overline{\partial} \mathcal{N}(\mathbf{u})}{\|\mathbf{u}\|_2} \nonumber
		\end{align}
		implies
		\begin{align}
			\left\| \frac{d}{dt}\left(\frac{\mathbf{u}}{\|\mathbf{u}\|_2}\right)\right\|_2 = \left\|\left(\mathbf{I}-\frac{\mathbf{u}\mathbf{u}^T}{\|\mathbf{u}\|_2^2}\right)\overline{\partial} \mathcal{N}(\mathbf{u})\right\|_2\frac{1}{\|\mathbf{u}\|_2}.\label{gd_dir_u}
		\end{align}
		Suppose $\tilde{\mathcal{N}}(\mathbf{u}(t))$ converges to $\overline{f}$ in finite time, i.e., $\tilde{\mathcal{N}}(\mathbf{u}(T)) = \overline{f}$ for some finite $T$. Then, for a.e.\  $t\geq T$, $\frac{d{\tilde{\mathcal{N}}(\mathbf{u})}}{dt} = 0$ implies
		$$\left\|\left(\mathbf{I}-\frac{\mathbf{u}(t)\mathbf{u}(t)^T}{\|\mathbf{u}(t)\|_2^2}\right)\overline{\partial} \mathcal{N}(\mathbf{u}(t))\right\|_2 = 0.$$
		Therefore, from \cref{gd_dir_u}, we have 
		$$\frac{d}{dt}\left(\frac{\mathbf{u}}{\|\mathbf{u}\|_2}\right) = \mathbf{0},  \text{ for a.e.\  } t\geq T, $$
		and hence, $\lim_{t\rightarrow\infty}\frac{\mathbf{u}(t)}{\|\mathbf{u}(t)\|_2}$ exists and is equal to $\frac{\mathbf{u}(T)}{\|\mathbf{u}(T)\|_2}$ .
		
		Thus, we may assume $\overline{f} - \tilde{\mathcal{N}}(\mathbf{u}(t))>0$, for all finite $t$. Define $g(\mathbf{u}) = \overline{f} - \tilde{\mathcal{N}}(\mathbf{u})$. Then, since
		$$\|\overline{\partial}_r \tilde{\mathcal{N}}(\mathbf{u})\|_2 = 0,$$
		we have that
		$$\|\overline{\partial}_{\perp} g(\mathbf{u})\|_2 \geq \|\mathbf{u}\|_2\|\overline{\partial}_r g(\mathbf{u})\|_2 = 0.$$
		Hence, from \autoref{desing_func}, there exists a $\nu>0$ and a desingularizing function $\Psi(\cdot)$ defined on $[0,\nu)$ such that if $\|\mathbf{u}\|_2>1$ and $g(\mathbf{u}) < \nu$, then
		\begin{equation}
			1 \leq\Psi'(g(\mathbf{u}))\|\mathbf{u}\|_2\|\overline{\partial}g(\mathbf{u})\|_2 = \Psi'(\overline{f} - \tilde{\mathcal{N}}(\mathbf{u}))\|\mathbf{u}\|_2\|\overline{\partial}\tilde{\mathcal{N}}(\mathbf{u})\|_2.
			\label{ub_desing}
		\end{equation}
		Since $\lim_{t\rightarrow\infty} \tilde{\mathcal{N}}(\mathbf{u}(t)) = \overline{f}$, and \cref{bd_ut_pos} holds, we may choose $T$ large enough such that $\|\mathbf{u}(t)\|_2> 1$, and $g(\mathbf{u}(t)) < \nu$, for all $t\geq T$. Hence, for a.e.\  $t\geq T$,
		\begin{align*}
			\frac{d{\tilde{\mathcal{N}}(\mathbf{u})}}{dt} = \frac{\overline{\partial} \mathcal{N}(\mathbf{u})^\top}{\|\mathbf{u}\|_2^2}\left(\mathbf{I}-\frac{\mathbf{u}\mathbf{u}^T}{\|\mathbf{u}\|_2^2}\right)^\top\overline{\partial} \mathcal{N}(\mathbf{u})&= \left\|\left(\mathbf{I}-\frac{\mathbf{u}\mathbf{u}^T}{\|\mathbf{u}\|_2^2}\right)\frac{\overline{\partial} \mathcal{N}(\mathbf{u})}{\|\mathbf{u}\|_2}\right\|_2^2\\
			&= \left\|\left(\mathbf{I}-\frac{\mathbf{u}\mathbf{u}^T}{\|\mathbf{u}\|_2^2}\right)\frac{\overline{\partial} \mathcal{N}(\mathbf{u})}{\|\mathbf{u}\|_2}\right\|_2\left\| \frac{d}{dt}\left(\frac{\mathbf{u}}{\|\mathbf{u}\|_2}\right)\right\|_2 \\
			&= \|\mathbf{u}\|_2\left\|\overline{\partial} \tilde{\mathcal{N}}(\mathbf{u})\right\|_2\left\| \frac{d}{dt}\left(\frac{\mathbf{u}}{\|\mathbf{u}\|_2}\right)\right\|_2 \\
			&\geq \frac{1}{\Psi'(\overline{f} - \tilde{\mathcal{N}}(\mathbf{u}))} \left\| \frac{d}{dt}\left(\frac{\mathbf{u}}{\|\mathbf{u}\|_2}\right)\right\|_2
		\end{align*}
		implying
		\begin{align*}
			\left\| \frac{d}{dt}\left(\frac{\mathbf{u}}{\|\mathbf{u}\|_2}\right)\right\|_2 &\leq -\frac{d\Psi(\overline{f} - \tilde{\mathcal{N}}(\mathbf{u}))}{dt}.
		\end{align*}
		In the chain of equalities and inequalities above,  we used \cref{clarke_N_tilde} for the fourth equality, and for the first inequality we used \cref{ub_desing}. Now, integrating both sides of the above from $T$ to any $t_1\geq T$, we have
		\begin{align*}
			\int_{T}^{t_1} \left\| \frac{d}{dt}\left(\frac{\mathbf{u}}{\|\mathbf{u}\|_2}\right)\right\|_2 dt  \leq \Psi(\overline{f} - \tilde{\mathcal{N}}(\mathbf{u}(T)) - \Psi(\overline{f} - \tilde{\mathcal{N}}(\mathbf{u}(t_1)) \leq \Psi(\overline{f} - \tilde{\mathcal{N}}(\mathbf{u}(T))) < \infty,
		\end{align*}
		which implies 
		\begin{align*}
			\int_{0}^\infty \left\| \frac{d}{dt}\left(\frac{\mathbf{u}}{\|\mathbf{u}\|_2}\right)\right\|_2 dt  &= 	\int_{0}^T \left\| \frac{d}{dt}\left(\frac{\mathbf{u}}{\|\mathbf{u}\|_2}\right)\right\|_2 dt  + 	\int_{T}^\infty \left\| \frac{d}{dt}\left(\frac{\mathbf{u}}{\|\mathbf{u}\|_2}\right)\right\|_2 dt\\ 
			& \leq \int_{0}^T \left\| \frac{d}{dt}\left(\frac{\mathbf{u}}{\|\mathbf{u}\|_2}\right)\right\|_2 dt  + \Psi(\overline{f} - \tilde{\mathcal{N}}(\mathbf{u}(T))) < \infty,
		\end{align*}
		completing the proof.
		
		\textbf{Case 2:} $\mathcal{N}(\mathbf{u}(0)) \leq 0$.\\
		In this case, we may further assume that $\mathcal{N}(\mathbf{u}(t)) \leq 0,$ for all $t\geq 0$, since if for some finite $\overline{t}$, $\mathcal{N}(\mathbf{u}(\overline{t})) > 0,$ we can use the proof for Case 1 by choosing $\overline{t}$ as the starting time to prove the claim. Thus, we assume $\mathcal{N}(\mathbf{u}(t)) \leq 0,$ for all $t\geq 0$. Now, since
		\begin{align}
			\frac{1}{2}\frac{d{\|\mathbf{u}\|_2^2}}{dt} = \mathbf{u}^\top\dot{\mathbf{u}} = 2\mathcal{N}(\mathbf{u}(t)) \leq 0, \text{ for a.e.\  }t\geq 0,\nonumber
		\end{align}
		it follows that $\|\mathbf{u}(t)\|_2$ decreases with time. Hence, $\lim_{t\rightarrow\infty} \|\mathbf{u}(t)\|_2$ exists. If $\lim_{t\rightarrow\infty} \|\mathbf{u}(t)\|_2 = 0$, then $\lim_{t\rightarrow\infty} \mathbf{u}(t) = \mathbf{0}$ and $\lim_{t\rightarrow\infty}\mathcal{N}(\mathbf{u}(t)) =0 $  and we are done.
		
		Otherwise, assume that $\lim_{t\rightarrow\infty} \|\mathbf{u}(t)\|_2 = \eta > 0$. In this case, since $\|\mathbf{u}(t)\|_2$ is a decreasing function, $\|\mathbf{u}(t)\|_2\geq \eta$, for all $t\geq 0$. Since by \cref{descent_lemma}, for a.e.\  $t\geq 0$,
		\begin{align*}
			\frac{d{{\mathcal{N}}(\mathbf{u})}}{dt} = \|\dot{\mathbf{u}}\|_2^2,
		\end{align*}
		we have that ${\mathcal{N}}(\mathbf{u}(t))$ increases with time. But, we also assume $\mathcal{N}(\mathbf{u}(t)) \leq 0,$ and so by the monotone convergence theorem, ${\mathcal{N}}(\mathbf{u}(t))$ converges. We further claim that $\lim_{t\rightarrow\infty}\mathcal{N}(\mathbf{u}(t)) =0 $. Suppose for the sake of contradiction $\lim_{t\rightarrow\infty}\mathcal{N}(\mathbf{u}(t)) =-\gamma < 0$. Since  ${\mathcal{N}}(\mathbf{u}(t))$ increases with time, we have  ${\mathcal{N}}(\mathbf{u}(t))\leq -\gamma,$ for all $t\geq 0$. Hence,
		\begin{align*}
			\frac{1}{2}\frac{d{\|\mathbf{u}\|_2^2}}{dt} = \mathbf{u}^\top\dot{\mathbf{u}} = 2\mathcal{N}(\mathbf{u}(t)) \leq -2\gamma, \text{ for a.e.\  }t\geq 0.\nonumber\\
		\end{align*} 
		The above equation implies that  $\|\mathbf{u}(t)\|_2$ will become less than $\eta$ within a finite time, which leads to a contradiction, and therefore, $\lim_{t\rightarrow\infty}\mathcal{N}(\mathbf{u}(t)) =0 $.

		We next show that $\lim_{t\rightarrow\infty}{\mathbf{u}(t)}/{\|\mathbf{u}(t)\|_2}$ exists. We can do this in same way in the proof of Case 1. Define $\hat{\mathbf{u}}(t) = 2\mathbf{u}(t)/\eta$, and note that if   $\hat{\mathbf{u}}(t) $ converges in direction, then ${\mathbf{u}}(t) $ also converges in direction. We make this transformation because to use \cref{desing_func} we require $\|\mathbf{u}(t)\|_2 > 1$ after some time $T$. While ${\mathbf{u}}(t) $ may never exceed $1$, we do have $\|\hat{\mathbf{u}}(t)\|_2 \geq 2>1$, for all $t\geq 0$. 
		
		Next, $\tilde{\mathcal{N}}(\mathbf{u}(t))$ is defined for all $t\geq0$, since $\|\mathbf{u}(t)\|_2>0$ for all $t\geq 0$. Also, since ${\mathcal{N}}(\mathbf{u}(t))$ converges to $0$, $\tilde{\mathcal{N}}(\mathbf{u}(t))$ also converges to $0$. Also, 
		$\tilde{\mathcal{N}}(\mathbf{u}(t)) = \tilde{\mathcal{N}}(\hat{\mathbf{u}}(t)),$ thus $\tilde{\mathcal{N}}(\hat{\mathbf{u}}(t))$ converges to $0$ as well. From here to prove directional convergence of  $\hat{\mathbf{u}}(t)$ we can use the the same approach as in Case 1, specifically from \cref{gd_dir_u} onward.
		
		We next turn towards showing that if $\lim_{t\rightarrow\infty} \frac{\mathbf{u}(t)}{\|\mathbf{u}(t)\|_2}$ exists, then the limit must be a non-negative KKT point of the constrained NCF. Suppose $\mathbf{u}^*$ is the limit. We have already shown that $\mathcal{N}(\mathbf{u}^*) = \tilde{\mathcal{N}}(\mathbf{u}^*)\geq 0$. Thus, we only need to prove that  $\mathbf{u}^*$ is a KKT point, i.e., from \cref{crit_pt}, it must satisfy
		\begin{equation}
			2\mathcal{N}(\mathbf{u}^*)\mathbf{u}^* \in \partial \mathcal{N}(\mathbf{u}^*),
			\label{crit_eq}
		\end{equation}
		Assume for the sake of contradiction that there exists some $\gamma>0$ such that for all $\mathbf{s}\in \partial \mathcal{N}(\mathbf{u}^*)$, we have
		\begin{align}
			\left\|\mathbf{s} - 2\mathcal{N}(\mathbf{u}^*)\mathbf{u}^*\right\|_2 \geq \gamma.
			\label{gm_diff}
		\end{align}
		Define $\mathbf{u}_\epsilon = \{\mathbf{u}: \|\mathbf{u}-\mathbf{u}^*\| \leq \epsilon\}$. Given $\gamma$, by upper semi-continuity of the Clarke subdifferential, we may choose $\epsilon\in (0,1)$ sufficiently small such that for all $\mathbf{u}\in \mathbf{u}_\epsilon$,  we have
		\begin{align}
			{\partial} \mathcal{N}(\mathbf{u}) \subseteq \{\mathbf{p}: \mathbf{p} = \mathbf{q} + \mathbf{r}, \mathbf{q} \in \partial \mathcal{N}(\mathbf{u}^*) , \|\mathbf{r}\|_2\leq \gamma/4 \}.
			\label{clarke_diff}
		\end{align}
		Since $\frac{\mathbf{u}(t)}{\|\mathbf{u}(t)\|_2}$ converges to $\mathbf{u}^*$, and $\mathcal{N}(\mathbf{u})$ is continuous, we can choose $T$ large enough such that for all $t\geq T$
		\begin{align}
			\left\|\frac{\mathbf{u}(t)}{\|\mathbf{u}(t)\|_2} - \mathbf{u}^*\right\|_2 \leq \epsilon, \mbox{  and  } \left\|\frac{2\mathbf{u}(t)}{\|\mathbf{u}(t)\|_2}\mathcal{N}\left(\frac{\mathbf{u}(t)}{\|\mathbf{u}(t)\|_2}\right) - 2\mathbf{u}^*\mathcal{N}(\mathbf{u}^*)\right\|_2 \leq \gamma/4.  
			\label{eps_diff}
		\end{align}
		Suppose $\mathbf{s}\in \partial \mathcal{N}(\mathbf{u}^*)$, then, for all   $\mathbf{u}\in \mathbb{R}^k\symbol{92}\{\mathbf{0}\}$ we have
		\begin{align*}
			\left\|\left(\mathbf{I}-\frac{\mathbf{u}\mathbf{u}^\top}{\|\mathbf{u}\|_2^2}\right) \frac{\overline{\partial}\mathcal{N}(\mathbf{u})}{\|\mathbf{u}\|_2}\right\|_2  &=  \left\|\left(\frac{\overline{\partial}\mathcal{N}(\mathbf{u})}{\|\mathbf{u}\|_2}-\frac{2\mathbf{u}\mathcal{N}(\mathbf{u})}{\|\mathbf{u}\|_2^3}\right) \right\|_2\\
			&= \left\|\overline{\partial}\mathcal{N}\left(\frac{\mathbf{u}}{\|\mathbf{u}\|_2}\right)-\frac{2\mathbf{u}}{\|\mathbf{u}\|_2}\mathcal{N}\left(\frac{\mathbf{u}}{\|\mathbf{u}\|_2}\right) \right\|_2\\
			&\geq \left\| \mathbf{s} - \frac{2\mathbf{u}}{\|\mathbf{u}\|_2}\mathcal{N}\left(\frac{\mathbf{u}}{\|\mathbf{u}\|_2}\right) \right\|_2  -  \left\|\overline{\partial}\mathcal{N}\left(\frac{\mathbf{u}}{\|\mathbf{u}\|_2}\right)- \mathbf{s}\right\|_2 \\
			&\geq \left\| \mathbf{s} - {2\mathbf{u}^*}\mathcal{N}\left({\mathbf{u}^*}\right) \right\|_2  - \left\|\overline{\partial}\mathcal{N}\left(\frac{\mathbf{u}}{\|\mathbf{u}\|_2}\right)- \mathbf{s}\right\|_2 - \left\| {2\mathbf{u}^*}\mathcal{N}\left({\mathbf{u}^*}\right) - \frac{2\mathbf{u}}{\|\mathbf{u}\|_2}\mathcal{N}\left(\frac{\mathbf{u}}{\|\mathbf{u}\|_2}\right) \right\|_2,
		\end{align*}
		where in the second equality we used $1-$homogeneity of $\overline{\partial}\mathcal{N}(\mathbf{u})$ and $2-$homogeneity of $\mathcal{N}(\mathbf{u})$, and the inequalities follow from triangle inequality of norms. Hence, for a.e.\  $t\geq T$, using \cref{gm_diff}, \cref{clarke_diff} and \cref{eps_diff} we have
		\begin{align*}
			&\left\|\left(\mathbf{I}-\frac{\mathbf{u}(t)\mathbf{u}(t)^\top}{\|\mathbf{u}(t)\|_2^2}\right) \frac{\overline{\partial}\mathcal{N}(\mathbf{u}(t))}{\|\mathbf{u}(t)\|_2}\right\|_2 \geq \gamma - \gamma/4 - \gamma/4 = \gamma/2,
		\end{align*}
		implying 
		\begin{align*}
			\frac{d}{dt}\left(\mathcal{N}\left(\frac{\mathbf{u}(t)}{\|\mathbf{u}(t)\|_2}\right)\right) =   \left\|\left(\mathbf{I}-\frac{\mathbf{u}(t)\mathbf{u}(t)^\top}{\|\mathbf{u}(t)\|_2^2}\right) \frac{\overline{\partial}\mathcal{N}(\mathbf{u}(t))}{\|\mathbf{u}(t)\|_2}\right\|_2^2 \geq \gamma^2/4,
		\end{align*}
		which contradicts the fact that $\lim_{t\rightarrow\infty} \mathcal{N}\left(\frac{\mathbf{u}(t)}{\|\mathbf{u}(t)\|_2}\right) $ converges, thus proving our claim.
		
		Now, before we turn to prove \Cref{thm_align_init}, we state another useful lemma.
		\begin{lemma}
			\label{low_bd_U}
			If $\lim_{t\rightarrow\infty}\mathbf{u}(t) \neq \mathbf{0}, $ then there exists $\eta>0$ and $T\geq 0$ such that $\|\mathbf{u}(t)\|_2 \geq \eta, $ for all $t\geq T$.
		\end{lemma}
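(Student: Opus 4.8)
The plan is to argue by a simple dichotomy on the sign of $\mathcal{N}(\mathbf{u}(t))$ along the trajectory, reusing monotonicity facts already available for the flow \cref{gd_ncf}. First I would record two identities valid for a.e.\ $t$: by \cref{descent_lemma} the flow satisfies $\dot{\mathbf{u}}(t)=\overline{\partial}\mathcal{N}(\mathbf{u}(t))$ and $\tfrac{d}{dt}\mathcal{N}(\mathbf{u}(t))=\|\dot{\mathbf{u}}(t)\|_2^2\ge 0$, so $t\mapsto\mathcal{N}(\mathbf{u}(t))$ is non-decreasing; and by \cref{euler} (Euler's identity), $\tfrac{d}{dt}\|\mathbf{u}(t)\|_2^2 = 2\mathbf{u}(t)^\top\dot{\mathbf{u}}(t) = 4\mathcal{N}(\mathbf{u}(t))$, so the sign of the derivative of $\|\mathbf{u}(t)\|_2^2$ matches the sign of $\mathcal{N}(\mathbf{u}(t))$.

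In the first case, suppose $\mathcal{N}(\mathbf{u}(t_0))>0$ for some finite $t_0\ge 0$. Since $\mathcal{N}$ is two-homogeneous we have $\mathcal{N}(\mathbf{0})=0$, so $\mathcal{N}(\mathbf{u}(t_0))>0$ already forces $\|\mathbf{u}(t_0)\|_2>0$. I would then apply \cref{pos_init} at starting time $t_0$, which yields $\|\mathbf{u}(t)\|_2\ge\|\mathbf{u}(t_0)\|_2>0$ for all $t\ge t_0$; the claim then holds with $\eta=\|\mathbf{u}(t_0)\|_2$ and $T=t_0$.

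In the complementary case $\mathcal{N}(\mathbf{u}(t))\le 0$ for all $t\ge 0$, the second identity gives $\tfrac{d}{dt}\|\mathbf{u}(t)\|_2^2\le 0$, so $\|\mathbf{u}(t)\|_2$ is non-increasing and $L:=\lim_{t\to\infty}\|\mathbf{u}(t)\|_2$ exists. If $L=0$ then $\|\mathbf{u}(t)\|_2\to 0$, i.e.\ $\mathbf{u}(t)\to\mathbf{0}$, contradicting the hypothesis; hence $L>0$, and monotonicity yields $\|\mathbf{u}(t)\|_2\ge L$ for all $t\ge 0$, so the claim holds with $\eta=L$ and $T=0$.

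I do not anticipate a genuine obstacle: the argument rests entirely on the two monotonicity identities above together with \cref{pos_init}, all of which were already established in the course of proving \cref{ncf_convg}. The only point requiring a moment's care is observing that a strictly positive value of $\mathcal{N}$ somewhere along the trajectory immediately certifies, via two-homogeneity, that the corresponding point is nonzero --- which is exactly what lets \cref{pos_init} supply a strictly positive lower bound rather than merely a non-negative one.
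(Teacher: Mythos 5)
Your proof is correct and follows essentially the same route as the paper: split on whether $\mathcal{N}(\mathbf{u}(t))$ is ever strictly positive, use \cref{pos_init} in the positive case to get a non-decreasing norm from that time onward, and in the always-nonpositive case use monotone decrease of $\|\mathbf{u}(t)\|_2$ together with the hypothesis $\lim_{t\to\infty}\mathbf{u}(t)\neq\mathbf{0}$. The only cosmetic difference is that you collapse the paper's Case 1 ($\mathcal{N}(\mathbf{u}(0))>0$) and the first sub-case of its Case 2 (some $\bar t$ with $\mathcal{N}(\mathbf{u}(\bar t))>0$) into a single case parametrized by $t_0$, which is a slightly tighter organization of the same argument.
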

		\begin{proof}
			The proof is built using the argument already presented in the proof of  \cref{ncf_convg}, and we consider two cases.
			
			\textbf{Case 1:} $\mathcal{N}(\mathbf{u}(0)) > 0$.\\
			We assume $\mathcal{N}(\mathbf{u}(0)) = \gamma > 0,$ and therefore, $\|\mathbf{u}(0)\|_2 > 0$. From \cref{pos_init}, for all $t\geq0$, we have $\mathcal{N}(\mathbf{u}(t)) \geq \mathcal{N}(\mathbf{u}(0)) = \gamma$, which implies
			\begin{align}
				\frac{1}{2}\frac{d{\|\mathbf{u}\|_2^2}}{dt} = \mathbf{u}^\top\dot{\mathbf{u}} = 2\mathcal{N}(\mathbf{u}(t)) \geq 2\gamma, \text{ for a.e.\  }t\geq 0,
			\end{align}
			which in turn implies
			\begin{align}
				\|\mathbf{u}(t)\|_2 \geq \|\mathbf{u}(0)\|_2 \text{ for all }t\geq 0.
			\end{align}
			Thus, we can choose $\eta =  \|\mathbf{u}(0)\|_2$, and $T=0$.
			
			\textbf{Case 2:} $\mathcal{N}(\mathbf{u}(0)) \leq 0$.\\
			For this case we may further assume that $\mathcal{N}(\mathbf{u}(t)) \leq 0,$ for all $t\geq 0$, since if for some $\overline{t}$, $\mathcal{N}(\mathbf{u}(\overline{t})) > 0,$  then using \cref{pos_init},  we have $\|\mathbf{u}(t)\|_2\geq \|\mathbf{u}(\overline{t})\|_2$, for all $t\geq \overline{t}$. Then, since $\mathcal{N}(\mathbf{u}(\overline{t})) > 0$ implies $ \|\mathbf{u}(\overline{t})\|_2> 0$, we may choose $\eta = \|\mathbf{u}(\overline{t})\|_2$ and $T = \overline{t}$.
			
			So, let us assume that $\mathcal{N}(\mathbf{u}(t)) \leq 0,$ for all $t\geq 0$. Then,
			\begin{align}
				\frac{1}{2}\frac{d{\|\mathbf{u}\|_2^2}}{dt} = \mathbf{u}^\top\dot{\mathbf{u}} = 2\mathcal{N}(\mathbf{u}(t)) \leq 0, \text{ for a.e.\  }t\geq 0.\nonumber
			\end{align}
			Therefore, $\|\mathbf{u}(t)\|_2$ decreases with time, and hence, $\lim_{t\rightarrow\infty} \|\mathbf{u}(t)\|_2$ exists and
			$\|\mathbf{u}(t)\|_2 \geq \lim_{t\rightarrow\infty} \|\mathbf{u}(t)\|_2, \text{ for all }t\geq 0.$ Since we have assumed $\lim_{t\rightarrow\infty}\mathbf{u}(t) \neq \mathbf{0}$, we have $\lim_{t\rightarrow\infty} \|\mathbf{u}(t)\|_2> 0$. Thus, we may choose $\eta = \lim_{t\rightarrow\infty} \|\mathbf{u}(t)\|_2$, and $T=0$.
			
		\end{proof}
		\subsection{Proof of \Cref{thm_align_init}}
		
		Consider the differential inclusion
		\begin{align}
			\dot{\mathbf{u}} \in  \partial\mathcal{N}_{-\ell'(\mathbf{0},\mathbf{y}),\mathcal{H}}(\mathbf{u}) \subseteq  -\sum_{i=1}^n \nabla_{\hat{y}}\ell(0,{y}_i)\partial \mathcal{H}(\mathbf{x}_i;\mathbf{u}), {\mathbf{u}}(0) = \mathbf{w}_0,
			\label{main_flow_proof}
		\end{align}
		and let $\mathbf{u}(t)$  be its unique solution. By \cref{ncf_convg}, either $\lim_{t\rightarrow\infty}\mathbf{u}(t) = \mathbf{0}$ or  $\lim_{t\rightarrow\infty} \frac{\mathbf{u}(t)}{\|\mathbf{u}(t)\|_2}$ exists. 
		
		We first consider the case when $\lim_{t\rightarrow\infty}\mathbf{u}(t) = \mathbf{0}$. Here, we define $\eta = 1$ and fix an $\epsilon\in(0,\eta)$. Then, we choose  $\overline{T}$ large enough such that 
		\begin{equation}
			\label{time_bd_zero}
			\|\mathbf{u}(t)\|_2 \leq \epsilon, \text{ for all } t\geq \overline{T}.
		\end{equation}
		Next, if $\lim_{t\rightarrow\infty}\mathbf{u}(t) \neq 0 $, then from \cref{low_bd_U}, there exists $\eta>0$ and $T\geq 0$ such that $\|\mathbf{u}(t)\|_2\geq 2\eta$, for all $t\geq T$. Also, from \cref{ncf_convg},
		$$\lim_{t\rightarrow\infty}\mathbf{u}(t) /\|\mathbf{u}(t) \|_2 = \hat{\mathbf{u}}, $$
		where $\hat{\mathbf{u}}$ is a  non-negative KKT point of 
		\begin{equation}
			\max_{\|\mathbf{u}\|_2^2 = 1} \mathcal{N}_{-\ell'(\mathbf{0},\mathbf{y}),\mathcal{H}}(\mathbf{u}) = -\ell'(\mathbf{0},\mathbf{y})^\top\mathcal{H}(\mathbf{X};\mathbf{u}).
			\label{const_opt_ncf_proof}
		\end{equation} 
		For a fixed $\epsilon\in (0,\eta)$, we choose $\overline{T}>T$ such that
		\begin{align}
			\label{time_bd_non_zero}
			\frac{\mathbf{u}(t)^\top\hat{\mathbf{u}}}{\|\mathbf{u}(t)\|_2} \geq 1-\epsilon, \text{ for all } t\geq \overline{T}.
		\end{align} 
		Having chosen $\overline{T}$ for a fixed $\epsilon\in (0,\eta)$ in both cases, we 
		next choose $C$ such that $ \frac{\ln(C)}{4\beta\tilde{\beta}} = \overline{T}.$ From \cref{main_thm}, there exists $\overline{\delta}$ such that for any $\delta\leq \overline{\delta}$
		$$\|\mathbf{w}({t})\|_2 \leq \sqrt{C}\delta, \text{ for all } t\in\left [0,\frac{\ln(C)}{4\beta\tilde{\beta}} \right],$$ 
		and
		\begin{align}
			\left\|\frac{\mathbf{w}(\overline{T})}{\delta} - \mathbf{u}(\overline{T})\right\|_2 \leq \epsilon.
		\end{align}
		Thus, we may write $\frac{\mathbf{w}(\overline{T})}{\delta} = \mathbf{u}(\overline{T}) + \zeta$, where $\|\zeta\|_2\leq \epsilon.$ If $\lim_{t\rightarrow\infty}\mathbf{u}(t) = \mathbf{0}$, then, using \cref{time_bd_zero},
		\begin{align*}
			\|\mathbf{w}(\overline{T})\|_2\leq 2\delta\epsilon.
		\end{align*}
		Else, since $\epsilon\in(0,\eta)$, and $\|\mathbf{u}(\overline{T})\|_2\geq 2\eta$, we have $\|\mathbf{u}(\overline{T}) + \zeta\|_2 \geq \eta$. Hence,
		\begin{align*}
			\frac{\mathbf{w}(\overline{T})}{\|\mathbf{w}(\overline{T})\|_2} = \frac{\mathbf{u}(\overline{T}) + \zeta}{\|\mathbf{u}(\overline{T}) + \zeta\|_2},
		\end{align*}
		which implies
		\begin{align*}
			\frac{\mathbf{w}(\overline{T})^\top\hat{\mathbf{u}}}{\|\mathbf{w}(\overline{T})\|_2} = \frac{\mathbf{u}(\overline{T})^\top\hat{\mathbf{u}} + \zeta^\top\hat{\mathbf{u}}}{\|\mathbf{u}(\overline{T}) + \zeta\|_2} = \left(\frac{\mathbf{u}(\overline{T})^\top\hat{\mathbf{u}}}{\|\mathbf{u}(\overline{T})\|_2}\right)\frac{\|\mathbf{u}(\overline{T})\|_2}{\|\mathbf{u}(\overline{T}) + \zeta\|_2} + \frac{\zeta^\top\hat{\mathbf{u}}}{\|\mathbf{u}(\overline{T})+ \zeta\|_2}.
		\end{align*}
		Now, since
		$$\frac{\|\mathbf{u}(\overline{T})\|_2}{\|\mathbf{u}(\overline{T})+ \zeta\|_2} \geq \frac{\|\mathbf{u}(\overline{T})\|_2}{\|\mathbf{u}(\overline{T})\|_2 + \|\zeta\|_2} = \frac{1}{1+\frac{ \|\zeta\|_2}{\|\mathbf{u}(\overline{T})\|_2}}\geq \frac{1}{1+\frac{\epsilon}{2\eta}}\geq 1-\frac{\epsilon}{2\eta},$$
		and 
		$$\frac{\zeta^\top\hat{\mathbf{u}}}{\|\mathbf{u}(\overline{T})+ \zeta\|_2} \geq \frac{-\epsilon}{\eta},$$
		we have
		$$\frac{\mathbf{w}(\overline{T})^\top\hat{\mathbf{u}}}{\|\mathbf{w}(\overline{T})\|_2} \geq (1-\epsilon)\left(1-\frac{\epsilon}{2\eta}\right) - \frac{\epsilon}{\eta} \geq 1 - \left(1+\frac{3}{2\eta}\right)\epsilon. $$
		\subsection{Proof of \Cref{sep_nn}}
		Consider the differential inclusion
		\begin{align}
			\dot{\mathbf{u}} \in  \partial\mathcal{N}_{-\ell'(\mathbf{0},\mathbf{y}),\mathcal{H}}(\mathbf{u}) \subseteq  -\sum_{i=1}^n\nabla_{\hat{y}}\ell(0,{y}_i)\partial \mathcal{H}(\mathbf{x}_i;\mathbf{u}), {\mathbf{u}}(0) = \mathbf{w}_0,
			\label{main_flow_proof_sep}
		\end{align}
		and let $\mathbf{u}(t)$  be its unique solution. From separability, we can write $\mathbf{u}(t) = [\mathbf{u}_1(t),\hdots, \mathbf{u}_{H}(t) ] $ such that for all $j\in[H]$ we have
		\begin{align}
			\dot{\mathbf{u}}_j \in  \partial\mathcal{N}_{-\ell'(\mathbf{0},\mathbf{y}),\mathcal{H}_j}(\mathbf{u}_j) \subseteq  -\sum_{i=1}^n \nabla_{\hat{y}}\ell(0,{y}_i)\partial \mathcal{H}_j(\mathbf{x}_i;\mathbf{u}_j), {\mathbf{u}}_j(0) = \mathbf{w}_{0j},
			\label{main_flow_proof_ind}
		\end{align}
		where  $\mathbf{w}_0 = \left[\mathbf{w}_{01},\hdots,\mathbf{w}_{0H}\right]^\top$. By \cref{ncf_convg}, for all $j\in[H]$ , either $\lim_{t\rightarrow\infty}\mathbf{u}_j(t) = \mathbf{0}$ or  $\lim_{t\rightarrow\infty} \frac{\mathbf{u}_j(t)}{\|\mathbf{u}_j(t)\|_2}$ exists. \\
		Let $\mathcal{Z}$ be the collection of all indices such that $\lim_{t\rightarrow\infty}\mathbf{u}_j(t) = \mathbf{0}$, for all $j\in \mathcal{Z}$, and $\mathcal{Z}^c$ be the complement of $\mathcal{Z}$ in $[H]$.
		For all $j\in\mathcal{Z}^c$, from \cref{low_bd_U}, there exists $\eta_j>0$ and $T_j\geq 0$ such that $\|\mathbf{u}_j(t)\|_2\geq 2\eta_j$, for all $t\geq T_j$. Define $\eta = \min(1,\min_{j\in\mathcal{Z}^c}\eta_j)$ and ${T} = \max_{j\in\mathcal{Z}^c} {T}_j$, and fix an $\epsilon \in (0,\eta)$.\\\\
		From \cref{ncf_convg}, for all $j\in\mathcal{Z}^c$ we have
		$$\lim_{t\rightarrow\infty}\mathbf{u}_j(t) /\|\mathbf{u}_j(t) \|_2 = \hat{\mathbf{u}}_j, $$
		where $\hat{\mathbf{u}}_j$ is a  non-negative KKT point of 
		\begin{equation}
			\max_{\|\mathbf{u}_j\|_2^2 = 1}\mathcal{N}_{-\ell'(\mathbf{0},\mathbf{y}),\mathcal{H}_j}(\mathbf{u}_j) =  -\ell'(\mathbf{0},\mathbf{y})^\top\mathcal{H}_j(\mathbf{X};\mathbf{u}_j).
			\label{const_opt_ncf_proof_sep}
		\end{equation}  
		Then, for a given $\epsilon$, we choose $\overline{T}_1>T$ such that
		\begin{align}
			\label{time_bd_non_zero_sep}
			\frac{\mathbf{u}_j(t)^\top\hat{\mathbf{u}}_j}{\|\mathbf{u}_j(t)\|_2} \geq 1-\epsilon, \text{ for all } t\geq \overline{T}_1, \text{ and all }j\in\mathcal{Z}^c. 
		\end{align} 
		For all $j\in\mathcal{Z}$, we choose  $\overline{T}_2$ large enough such that 
		\begin{equation}
			\label{time_bd_zero_sep}
			\|\mathbf{u}_j(t)\|_2 \leq \epsilon, \text{ for all } t\geq \overline{T}_2 \text{ and all } j\in \mathcal{Z}.
		\end{equation}
		Define $\overline{T} = \max(\overline{T}_1,\overline{T}_2)$ and choose $C$ such that $ \frac{\ln(C)}{4\beta\tilde{\beta}} = \overline{T}.$ From \cref{main_thm}, there exists $\overline{\delta}$ such that for any $\delta\leq \overline{\delta}$ 
		
		$$\|\mathbf{w}({t})\|_2 \leq \sqrt{C}\delta, \text{ for all } t\in\left [0,\frac{\ln(C)}{4\beta\tilde{\beta}} \right],$$ 
		and
		\begin{align}
			\left\|\frac{\mathbf{w}(\overline{T})}{\delta} - \mathbf{u}(\overline{T})\right\|_2 \leq \epsilon.
		\end{align}
		By separability, we may write $\frac{\mathbf{w}_j(\overline{T})}{\delta} = \mathbf{u}_j(\overline{T}) + \zeta_j$, where $\|\zeta_j\|_2\leq \epsilon.$ If $j\in\mathcal{Z}^c$, then, using \cref{time_bd_zero_sep} we have
		\begin{align*}
			\|\mathbf{w}_j(\overline{T})\|_2\leq 2\delta\epsilon.
		\end{align*}
		Else, since $\epsilon\in(0,\eta)$, and $\|\mathbf{u}_j(\overline{T})\|_2\geq 2\eta$, we have $\|\mathbf{u}_j(\overline{T}) + \zeta_j\|_2 \geq \eta$. Hence, using similar reasoning as in the later part of the proof of \Cref{thm_align_init}  we get for all $j\in\mathcal{Z}^c$,
		
		$$\frac{\mathbf{w}_j(\overline{T})^\top\hat{\mathbf{u}}_j}{\|\mathbf{w}_j(\overline{T})\|_2} \geq 1 - \left(1+\frac{3}{2\eta}\right)\epsilon. $$

		\section{Proofs Omitted from \Cref{sec_align_saddle}}\label{proof_align_sad}
		We first prove  \cref{saddle_pt_wn} 
		\subsection{Proof of \cref{saddle_pt_wn} }
		We note that since $\{\overline{\mathbf{w}}_n, \overline{\mathbf{w}}_z\}$ is a saddle point of 
		\begin{align}
			L\left(\mathbf{w}_n, \mathbf{w}_z\right) = \frac{1}{2}\left\|\mathcal{H}_n(\mathbf{X};\mathbf{w}_n) + \mathcal{H}_z(\mathbf{X};\mathbf{w}_z) - \mathbf{y}\right\|^2,
		\end{align}
		and $\overline{\mathbf{w}}_z = 0$, we have
		\begin{align}
			\begin{bmatrix}
				\mathbf{0}\\ 
				\mathbf{0}
			\end{bmatrix} \in \begin{bmatrix}
				\partial_{\mathbf{w}_n}  L\left(\overline{\mathbf{w}}_n, \mathbf{0}\right)\\ 
				\partial_{\mathbf{w}_z}  L\left(\overline{\mathbf{w}}_n, \mathbf{0}\right)
			\end{bmatrix},
		\end{align}
		which establishes $\mathbf{0}\in \partial_{\mathbf{w}_n}  L\left(\overline{\mathbf{w}}_n, \mathbf{0}\right)$.

		To prove \Cref{thm_align_sad}, we first describe the approximate dynamics of   $\{\mathbf{w}_n(t),\mathbf{w}_z(t)\}$ near the saddle point in the following lemma.
		\begin{lemma}
			Let  $\left\{\overline{\mathbf{w}}_n,\overline{\mathbf{w}}_z\right\}$ satisfy \Cref{saddle_pt_def}, and  define
			$\overline{\mathbf{y}} = {\mathbf{y}} - \mathcal{H}_n(\mathbf{X};\overline{\mathbf{w}}_n)$. Let $C>1$ be an arbitrarily large constant and $\{\mathbf{w}_n(t),\mathbf{w}_z(t)\}$ satisfy for a.e.\  $t\geq 0$
			\begin{align}
				\begin{bmatrix}
					\dot{\mathbf{w}}_n\\ 
					\dot{\mathbf{w}}_z
				\end{bmatrix} \in  -\begin{bmatrix}
					\partial_{\mathbf{w}_n}  L\left(\mathbf{w}_n, \mathbf{w}_z\right)\\ 
					\partial_{\mathbf{w}_z}  L\left(\mathbf{w}_n, \mathbf{w}_z\right)
				\end{bmatrix}, \begin{bmatrix}
					{\mathbf{w}}_n(0)\\ 
					{\mathbf{w}_z}(0)
				\end{bmatrix} = \begin{bmatrix}
					\overline{\mathbf{w}}_n + \delta\zeta_n\\ 
					\overline{\mathbf{w}}_z + \delta\zeta_z
				\end{bmatrix},
				\label{gf_upd_saddle_appndx}
			\end{align}
			where $\delta^2\leq\min(\frac{1}{2C},\frac{\gamma^2}{4})$ and $\|\zeta_n\|_2 = \|\zeta_z\|_2 = 1$. Then
			\begin{align}
				\|\mathbf{w}_n(t) - \overline{\mathbf{w}}_n\|_2^2 + \|\mathbf{w}_z(t) - \overline{\mathbf{w}}_z\|_2^2 \leq 2C\delta^{2}, \text{ for all } t\in \left[0, \frac{1}{M_2}\ln\left(C\right)\right],
				\label{close_to_saddle_appndx}
			\end{align}
			where  $M_2$ is a positive constant\footnote{$M_2$ here is same as in \Cref{thm_align_sad}. See the statement of \Cref{thm_align_sad} and the the proof of \Cref{main_thm_sad} for more details.}. Further, for the differential inclusion
			\begin{align}
				\dot{\mathbf{u}} \in  \partial \mathcal{N}_{\overline{\mathbf{y}},\mathcal{H}_z}(\mathbf{u}) , {\mathbf{u}}(0) = \zeta_z,
				\label{main_flow_p_saddle}
			\end{align}
			and for any $\epsilon>0$ there exists a small enough $\overline{\delta}>0$ such that for any $\delta\leq\overline{\delta}$,
			\begin{align}
				\left\|\frac{\mathbf{w}_z(t)}{\delta} - \mathbf{u}(t)\right\|_2 \leq \epsilon, \text{ for all } t\in\left[0,\frac{\ln(C)}{M_2}\right],
				\label{bd_uk_saddle}
			\end{align}
			where $\mathbf{u}(t)$ is a certain solution of \cref{main_flow_p_saddle}.
			\label{main_thm_sad}
		\end{lemma}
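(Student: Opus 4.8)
The plan is to mirror the proof of \cref{main_thm}, replacing the origin by the saddle point $\{\overline{\mathbf{w}}_n,\overline{\mathbf{w}}_z\}$ and the label vector $\mathbf{y}$ by the residual $\overline{\mathbf{y}}=\mathbf{y}-\mathcal{H}_n(\mathbf{X};\overline{\mathbf{w}}_n)$. Using the Clarke chain rule (\cref{chain_rule_non_diff}) I would first write the dynamics in terms of the residual $\mathbf{r}(t)=\mathcal{H}_n(\mathbf{X};\mathbf{w}_n(t))+\mathcal{H}_z(\mathbf{X};\mathbf{w}_z(t))-\mathbf{y}$: for a.e.\ $t$ there exist $\mathbf{t}_i\in\partial\mathcal{H}_n(\mathbf{x}_i;\mathbf{w}_n)$ and $\mathbf{s}_i\in\partial\mathcal{H}_z(\mathbf{x}_i;\mathbf{w}_z)$ with $\dot{\mathbf{w}}_n=-\sum_i r_i\mathbf{t}_i$ and $\dot{\mathbf{w}}_z=-\sum_i r_i\mathbf{s}_i$ (recalling $\overline{\mathbf{w}}_z=\mathbf{0}$ by \Cref{saddle_pt_def}).

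\textbf{Step 1 (staying near the saddle).} Set $z(t)=\|\mathbf{w}_n(t)-\overline{\mathbf{w}}_n\|_2^2+\|\mathbf{w}_z(t)\|_2^2$, so $z(0)=2\delta^2$; I want to show $\dot z\le M_2 z$ as long as the trajectory stays in a fixed neighborhood of the saddle, and then close the loop with a first-exit-time (bootstrap) argument. For the $\mathbf{w}_z$ block, Euler's identity (\cref{euler_thm}) gives $\mathbf{w}_z^\top\dot{\mathbf{w}}_z=-2\mathbf{r}^\top\mathcal{H}_z(\mathbf{X};\mathbf{w}_z)$, and since $\|\mathcal{H}_z(\mathbf{X};\mathbf{w}_z)\|_2\le\beta_z\|\mathbf{w}_z\|_2^2$ (with $\beta_z$ the analogue of $\beta$ for $\mathcal{H}_z$) and $\|\mathbf{r}\|_2$ is bounded near the saddle, this term is $O(z)$. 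For the $\mathbf{w}_n$ block I split $r_i=(\mathcal{H}_n(\mathbf{x}_i;\mathbf{w}_n)-y_i)+\mathcal{H}_z(\mathbf{x}_i;\mathbf{w}_z)$; the $\mathcal{H}_z$-coupling piece contributes at most $O(\|\mathbf{w}_z\|_2^2\sqrt z)=O(z)$, while for the remaining piece $(\overline{\mathbf{w}}_n-\mathbf{w}_n)^\top\sum_i(\mathcal{H}_n(\mathbf{x}_i;\mathbf{w}_n)-y_i)\mathbf{t}_i$ I invoke the structure of $\partial_{\mathbf{w}_n}L(\mathbf{w}_n,\mathbf{0})$: when $\mathcal{H}_n$ has a locally Lipschitz gradient this piece equals $(\overline{\mathbf{w}}_n-\mathbf{w}_n)^\top\nabla_{\mathbf{w}_n}L(\mathbf{w}_n,\mathbf{0})$, which is $O(z)$ because $\nabla_{\mathbf{w}_n}L(\overline{\mathbf{w}}_n,\mathbf{0})=\mathbf{0}$ (\cref{saddle_pt_wn}) and $\nabla_{\mathbf{w}_n}L(\cdot,\mathbf{0})$ is Lipschitz near $\overline{\mathbf{w}}_n$; when $\mathcal{H}_n$ does not, \cref{tech_ass} makes this inner product $\le 0$. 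Combining, $\dot z\le M_2 z$ in the neighborhood, hence $z(t)\le 2\delta^2 e^{M_2 t}$; taking $\delta$ small as in the hypothesis so that $z$ cannot exit the neighborhood on $[0,\tfrac1{M_2}\ln C]$, one gets $z(t)\le 2C\delta^2$ there, which is \cref{close_to_saddle_appndx}.

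\textbf{Step 2 (reduction to the NCF flow).} On $[0,\tfrac1{M_2}\ln C]$ we now know $\|\mathbf{w}_z(t)\|_2^2\le 2C\delta^2$ and $\|\mathbf{w}_n(t)-\overline{\mathbf{w}}_n\|_2\le\sqrt{2C}\,\delta$, so by local Lipschitzness of $\mathbf{w}_n\mapsto\mathcal{H}_n(\mathbf{X};\mathbf{w}_n)$ near $\overline{\mathbf{w}}_n$ and $\|\mathcal{H}_z(\mathbf{X};\mathbf{w}_z)\|_2\le\beta_z\|\mathbf{w}_z\|_2^2$ we obtain $\mathbf{r}(t)=-\overline{\mathbf{y}}+\mathbf{e}(t)$ with $\|\mathbf{e}(t)\|_2=O(\delta)$ uniformly on the interval. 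Hence $\dot{\mathbf{w}}_z\in\sum_i(\overline{y}_i-e_i(t))\partial\mathcal{H}_z(\mathbf{x}_i;\mathbf{w}_z)$; dividing by $\delta$ and using $1$-homogeneity of $\partial\mathcal{H}_z$ (\cref{euler_thm}), $\mathbf{w}_z/\delta$ solves a perturbation of $\dot{\mathbf{u}}\in\sum_i\overline{y}_i\partial\mathcal{H}_z(\mathbf{x}_i;\mathbf{u})=\partial\mathcal{N}_{\overline{\mathbf{y}},\mathcal{H}_z}(\mathbf{u})$ with $\mathbf{u}(0)=\zeta_z=\mathbf{w}_z(0)/\delta$, the perturbation coefficients $e_i(t)$ being $O(\delta)$ on the finite interval. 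Applying the stability lemma for differential inclusions (\cref{err_bd_diff_inclusion}) with perturbation bound tending to $0$ as $\delta\to0$ yields: for any $\epsilon>0$ there is $\overline{\delta}>0$ such that $\|\mathbf{w}_z(t)/\delta-\mathbf{u}(t)\|_2\le\epsilon$ on $[0,\tfrac1{M_2}\ln C]$ for a suitable solution $\mathbf{u}(t)$ of \cref{main_flow_p_saddle}, i.e.\ \cref{bd_uk_saddle}.

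The \textbf{main obstacle} is Step 1, and within it the control of the $\mathbf{w}_n$ block: after peeling off the $\mathcal{H}_z$-coupling one must argue that the subgradient the gradient flow selects for $\mathbf{w}_n$ lies in $-\partial_{\mathbf{w}_n}L(\mathbf{w}_n,\mathbf{0})$ at almost every $t$ (so that \cref{tech_ass}, or in the smooth case the Lipschitz estimate around the critical point from \cref{saddle_pt_wn}, applies), and one must calibrate the bootstrap so the neighborhood on which $M_2$ and these estimates hold is never exited on $[0,\tfrac1{M_2}\ln C]$. This is precisely where the hypothesis \cref{tech_ass} is genuinely needed, as the counterexample $g(\mathbf{u})=(u_1|u_2|-1)^2$ in the main text illustrates; plain Cauchy--Schwarz on the $\mathbf{w}_n$ term would be far too lossy since $\partial_{\mathbf{w}_n}L(\mathbf{w}_n,\mathbf{0})$ need not be small near $\overline{\mathbf{w}}_n$ when $\mathcal{H}_n$ is nonsmooth. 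Step 2 is then essentially identical to the corresponding part of the proof of \cref{main_thm}.
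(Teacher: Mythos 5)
Your plan follows the paper's proof of \cref{main_thm_sad} essentially step by step: the same first-exit-time bootstrap on $\mathbf{Z}(t)=\|\mathbf{w}_n(t)-\overline{\mathbf{w}}_n\|_2^2+\|\mathbf{w}_z(t)\|_2^2$ to show $\dot{\mathbf{Z}}\leq M_2\mathbf{Z}$, the same Euler-identity bound for the $\mathbf{w}_z$ block, the same two-case treatment of the $\mathbf{w}_n$ block (Lipschitz Jacobian vanishing at $\overline{\mathbf{w}}_n$ in the smooth case, \cref{tech_ass} in the non-smooth case), and the same reduction $\mathbf{e}(t)=\xi(t)-\overline{\mathbf{y}}$ with $\|\xi(t)\|_2=O(\delta)$ followed by an appeal to \cref{err_bd_diff_inclusion} for Step 2. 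The one spot where your sketch is looser than the paper is exactly the spot you flag as the ``main obstacle.'' You frame it as needing to argue that ``the subgradient the gradient flow selects for $\mathbf{w}_n$ lies in $-\partial_{\mathbf{w}_n}L(\mathbf{w}_n,\mathbf{0})$'': that is not what is needed, and your pointwise split $r_i=(\mathcal{H}_n(\mathbf{x}_i;\mathbf{w}_n)-y_i)+\mathcal{H}_z(\mathbf{x}_i;\mathbf{w}_z)$ with a common $\mathbf{t}_i$ does not by itself place the first piece inside $\partial_{\mathbf{w}_n}L(\mathbf{w}_n,\mathbf{0})$, since the Clarke chain-rule inclusion runs the other way. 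The paper sidesteps this cleanly with the Clarke sum rule,
$$\partial_{\mathbf{w}_n}L(\mathbf{w}_n,\mathbf{w}_z)\subseteq\partial_{\mathbf{w}_n}\bigl(L(\mathbf{w}_n,\mathbf{w}_z)-L(\mathbf{w}_n,\mathbf{0})\bigr)+\partial_{\mathbf{w}_n}L(\mathbf{w}_n,\mathbf{0}),$$
so that whatever element $\dot{\mathbf{w}}_n$ the flow selects can be written as $-\mathbf{p}-\mathbf{q}$ with $\mathbf{q}\in\partial_{\mathbf{w}_n}L(\mathbf{w}_n,\mathbf{0})$ (to which \cref{tech_ass} applies directly, giving $\langle\mathbf{w}_n-\overline{\mathbf{w}}_n,-\mathbf{q}\rangle\leq0$) and $\mathbf{p}$ in the coupling set, bounded by $O(\|\mathbf{w}_z\|_2^2)$. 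With that adjustment your Step 1 closes, and Step 2 is, as you say, the same as in \cref{main_thm}.
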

		\begin{proof}
			We note that since $\{\overline{\mathbf{w}}_n, \overline{\mathbf{w}}_z\}$ is a saddle point of 
			\begin{align}
				L\left(\mathbf{w}_n, \mathbf{w}_z\right) = \frac{1}{2}\left\|\mathcal{H}_n(\mathbf{X};\mathbf{w}_n) + \mathcal{H}_z(\mathbf{X};\mathbf{w}_z) - \mathbf{y}\right\|^2,
				\label{obj_func_apndx}
			\end{align}
			we have
			\begin{align}
				\begin{bmatrix}
					\mathbf{0}\\ 
					\mathbf{0}
				\end{bmatrix} \in \begin{bmatrix}
					\sum_{i=1}^n\overline{{y}}_i\partial \mathcal{H}_n(\mathbf{x}_i;\overline{\mathbf{w}}_n)\\ 
					\sum_{i=1}^n\overline{{y}}_i\partial \mathcal{H}_z(\mathbf{x}_i;\overline{\mathbf{w}}_z)
				\end{bmatrix}.
				\label{clarke_der}
			\end{align}
			We now define 
			\begin{align*}
				\Delta_{n}(t) = \mathcal{H}_n(\mathbf{X};\mathbf{w}_n(t)) - \mathcal{H}_n(\mathbf{X};\overline{\mathbf{w}}_n), \Delta_{z}(t) = \mathcal{H}_z(\mathbf{X};\mathbf{w}_z(t)), \text{ and }  \mathbf{Z}(t) = \|\mathbf{w}_n(t) - \overline{\mathbf{w}}_n\|_2^2 + \|\mathbf{w}_z(t)\|_2^2.
			\end{align*}
			Since $(\mathbf{w}_n(t),\mathbf{w}_z(t))$ is a continuous curve, $\mathbf{Z}(t)$ is also a continuous curve. Note that $\mathbf{Z}(0) = 2\delta^2 \leq \min(1/C,\gamma^2/2)< \min(1,\gamma^2)$. Therefore, there exists some $\overline{t}>0$, such that $\mathbf{Z}(t)\leq \min(1,\gamma^2)$, for all $t\in [0, \overline{t}]$. Let ${T}^*$ be the smallest $t>0$ such that $\mathbf{Z}(T^*) = \min(1,\gamma^2)$. Hence, for all $t\in [0, {T}^*]$, $\mathbf{Z}(t) \leq \min(1,\gamma^2)$. Our next goal is to find a lower bound for ${T}^*$. We operate in $[0, {T}^*]$. \\\\
			Recall that
			\begin{align}
				\|\mathcal{H}_z(\mathbf{X};\mathbf{w}_z)\|_2 \leq \beta \|\mathbf{w}_z\|_2^2.
				\label{bd_hz}
			\end{align}
			Moreover, by the locally Lipschitz property of $ \mathcal{H}_n(\mathbf{X};\mathbf{w}_n)$, there exists $\mu_1>0$ such that
			\begin{align}
				\|\Delta_{N}(t)\|_2 \leq \mu_1\|\mathbf{w}_n(t) - \overline{\mathbf{w}}_n\|_2, \text{ for all }t\in [0,T^*].
				\label{bd_dlt}
			\end{align}
			Define $\mathbf{e}(t) = \mathcal{H}_n(\mathbf{X};\mathbf{w}_n(t)) + \mathcal{H}_z(\mathbf{X};\mathbf{w}_z(t)) - \mathbf{y}$, then,  using $\mathbf{Z}(t)\leq 1, $we have
			\begin{align*}
				\|\mathbf{e}(t)\|_2 &= \| \mathcal{H}_n(\mathbf{X};\mathbf{w}_n(t)) + \mathcal{H}_z(\mathbf{X};\mathbf{w}_z(t)) - \mathbf{y}\|_2 \\
				&\leq \|\overline{\mathbf{y}}\|_2 +  \|\Delta_{N}(t)\|_2 + \|\mathcal{H}_z(\mathbf{X};\mathbf{w}_z(t))\|_2 \leq \|\overline{\mathbf{y}}\|_2 + \mu_1 + \beta := M_1,
			\end{align*}
			where in the last inequality we used \cref{bd_dlt} and \cref{bd_hz}. Using \Cref{euler_thm}, we have
			\begin{align}
				\frac{1}{2}\frac{d\left\|{\mathbf{w}}_z(t)\right\|^2_2}{dt}  = -2\mathcal{H}_z(\mathbf{X};\mathbf{w}_z)^\top \mathbf{e} \leq 2\beta M_1\left\|{\mathbf{w}}_z(t)\right\|^2_2.
				\label{bd_zero}
			\end{align}
			We first consider the case when $\mathcal{H}_n(\mathbf{x};\mathbf{w}_n)$ has a locally Lipschitz gradient. Let 
			$$\mathbf{J}(\mathbf{w}_n) =: \left[\nabla \mathcal{H}_n(\mathbf{x}_1;{\mathbf{w}}_n),\hdots, \nabla \mathcal{H}_n(\mathbf{x}_n;{\mathbf{w}}_n)\right] \in \mathbb{R}^{d\times n}.$$ 
			From \cref{clarke_der}, we have
			\begin{align}
				\mathbf{0} = \sum_{i=1}^n\overline{{y}}_i\nabla \mathcal{H}_n(\mathbf{x}_i;\overline{\mathbf{w}}_n) = \mathbf{J}(\overline{\mathbf{w}}_n)\overline{\mathbf{y}}.
				\label{gd_zero_lips}
			\end{align}

			By the locally Lipschitz property of $\nabla \mathcal{H}_n(\mathbf{x};{\mathbf{w}}_n)$, we may assume that there exists $\mu_2>0$ such that
			\begin{align}
				\left\| \mathbf{J}({\mathbf{w}}_n(t))\overline{\mathbf{y}} -  \mathbf{J}(\overline{\mathbf{w}}_n)\overline{\mathbf{y}}\right\|_2 \leq \mu_2\|\mathbf{w}_n(t) - \overline{\mathbf{w}}_n\|_2.
				\label{mu2_bd}
			\end{align}
			Further, since ${\mathbf{w}}_n(t)$ is bounded for all $t\in [0,T^*]$, we may assume  there exists $\mu_3>0$ such that
			\begin{align}
				\left\|\mathbf{J}({\mathbf{w}}_n(t))\right\|_2 \leq \mu_3.
				\label{mu3_bd}
			\end{align}
			
			Thus,
			\begin{align*}
				\frac{1}{2}\frac{d\left\|{\mathbf{w}}_n- \overline{\mathbf{w}}_n\right\|^2_2}{dt} &=  -\left\langle\mathbf{w}_n- \overline{\mathbf{w}}_n, \mathbf{J}(\mathbf{w}_n)\mathbf{e} \right\rangle \\
				&= -\left\langle\mathbf{w}_n- \overline{\mathbf{w}}_n,  \mathbf{J}(\mathbf{w}_n) \left(\mathcal{H}_n(\mathbf{X};\mathbf{w}_n) + \mathcal{H}_z(\mathbf{X};\mathbf{w}_z) - \mathbf{y}\right) \right\rangle \\
				&= -\left\langle\mathbf{w}_n - \overline{\mathbf{w}}_n,  \mathbf{J}(\mathbf{w}_n) \left(\Delta_n(t) + \Delta_z(t) - \overline{\mathbf{y}}\right) \right\rangle\\
				&= \left\langle\mathbf{w}_n- \overline{\mathbf{w}}_n,  \mathbf{J}(\mathbf{w}_n)\overline{\mathbf{y}} \right\rangle - \left\langle\mathbf{w}_n- \overline{\mathbf{w}}_n, \mathbf{J}(\mathbf{w}_n)\left(\Delta_n(t) + \Delta_z(t)\right) \right\rangle\\
				&= \left\langle\mathbf{w}_n - \overline{\mathbf{w}}_n,  \mathbf{J}(\mathbf{w}_n)\overline{\mathbf{y}} - \mathbf{J}(\overline{\mathbf{w}}_n)\overline{\mathbf{y}} \right\rangle - \left\langle\mathbf{w}_n - \overline{\mathbf{w}}_n, \mathbf{J}(\mathbf{w}_n)\left(\Delta_n(t) + \Delta_z(t)\right) \right\rangle\\
				&\leq \mu_2\|\mathbf{w}_n - \overline{\mathbf{w}}_n\|_2^2 + \|\mathbf{w}_n - \overline{\mathbf{w}}_n\|_2\|\mathbf{J}(\mathbf{w}_n)\|_2\left(\|\Delta_n(t)\|_2 + \|\Delta_z(t)\|_2\right)\\
				&\leq \mu_2\|\mathbf{w}_n - \overline{\mathbf{w}}_n\|_2^2 + \mu_1\mu_3\|\mathbf{w}_n - \overline{\mathbf{w}}_n\|_2^2 +\beta\mu_3\|\mathbf{w}_n - \overline{\mathbf{w}}_n\|_2\|{\mathbf{w}}_z\|^2_2 \\
				&\leq (\mu_2+\mu_1\mu_3)\|\mathbf{w}_n - \overline{\mathbf{w}}_n\|_2^2 + \beta\mu_3\|{\mathbf{w}}_z\|^2_2.
			\end{align*}
			The third equality follows from definition of $\Delta_n(t)$ and $\Delta_z(t)$. In last equality, we use \cref{gd_zero_lips}. The first inequality follows from Cauchy-Schwartz and \cref{mu2_bd}. We get second inequality from \cref{mu3_bd},\cref{bd_dlt} and \cref{bd_hz}.  In the final inequality, we use $\|\mathbf{w}_n(t) - \overline{\mathbf{w}}_n\|_2\leq 1$, for all $t\in [0,T^*]$.
			
			Combining the above inequality with \cref{bd_zero}, we obtain
			\begin{align*}
				\frac{1}{2}\frac{d{\mathbf{Z}}(t) }{dt} \leq (\mu_2+\mu_1\mu_3)\|\mathbf{w}_n(t) - \overline{\mathbf{w}}_n\|_2^2 + \beta(2M_1 + \mu_3)\|{\mathbf{w}}_z(t)\|^2_2\leq M_2\mathbf{Z}(t),
			\end{align*}
			where $M_2 := \max(\mu_2+\mu_1\mu_3, \beta(2M_1 + \mu_3))$. Therefore, for all $t\in [0,T^*]$, $\mathbf{Z}(t) \leq \mathbf{Z}(0)e^{tM_2}$ implies
			\begin{align*}
				T^* \geq \frac{1}{M_2}\ln\left(\frac{1}{\mathbf{Z}(0)}\right) = \frac{1}{M_2} \ln\left(\frac{1}{2\delta^{2}}\right). 
			\end{align*}
			Since we assume $2\delta^2 \leq \frac{1}{{C}}$, we have that $ T^* \geq  \frac{1}{M_2}\ln\left(C\right)$. Thus, for all $t\in \left[0, \frac{1}{M_2}\ln\left(C\right)\right]$,
			\begin{align*}
				\mathbf{Z}(t) &\leq C\mathbf{Z}(0) \leq 2C\delta^{2},
			\end{align*}
			proving \cref{close_to_saddle_appndx}.

			We next consider the case when $\mathcal{H}_n(\mathbf{X};\mathbf{w}_n)$ does not have a locally Lipschitz gradient. Recall that if $\|\mathbf{w}_n - \overline{\mathbf{w}}_n\|_2 \leq \gamma$, then
			\begin{equation}
				\left\langle\overline{\mathbf{w}}_n - \mathbf{w}_n , \mathbf{s}\right\rangle \geq  -\kappa\|\overline{\mathbf{w}}_n - \mathbf{w}_n\|_2^2, \text{where } \mathbf{s} \in -\partial_{\mathbf{w}_n} L(\mathbf{w}_n,\mathbf{0}).
				\label{tech_ass_appn}
			\end{equation}
			Further, we may assume that there exists a constant $\mu_3>0$ such that if $\|\mathbf{w}_n - \overline{\mathbf{w}}_n\|_2 \leq \gamma$, then
			\begin{align}
				\max_{i\in[n]} \|\mathbf{p}_i\|_2\leq \mu_3, \text{ where } \mathbf{p}_i\in \partial \mathcal{H}(\mathbf{x}_i;\mathbf{w}_n).
				\label{bd_mu3_nlips}
			\end{align}
			Using the chain rule, we also have 
			\begin{align}
				\partial_{\mathbf{w}_n} L(\mathbf{w}_n,\mathbf{w}_z) &\subseteq \partial_{\mathbf{w}_n} \left(L(\mathbf{w}_n,\mathbf{w}_z) - L(\mathbf{w}_n,\mathbf{0}) \right) +  \partial_{\mathbf{w}_n} L(\mathbf{w}_n,\mathbf{0}) .
				\label{partial_ln}
			\end{align}
			Note that $\partial_{\mathbf{w}_n} \left(L(\mathbf{w}_n,\mathbf{w}_z) - L(\mathbf{w}_n,\mathbf{0}) \right) \subseteq \sum_{i=1}^n  \mathcal{H}_z(\mathbf{x}_i;\mathbf{w}_z) \partial \mathcal{H}_n(\mathbf{x}_i;\mathbf{w}_n) $. Therefore, if $\|\mathbf{w}_n - \overline{\mathbf{w}}_n\|_2 \leq \gamma$, then, using \cref{bd_mu3_nlips}, for any $\mathbf{w}_z$, and $\mathbf{p}\in\partial_{\mathbf{w}_n} \left(L(\mathbf{w}_n,\mathbf{w}_z) - L(\mathbf{w}_n,\mathbf{0}) \right) $, we have
			\begin{align}
				\|\mathbf{p}\|_2 \leq \mu_3 \|\mathcal{H}_n(\mathbf{X};\mathbf{w}_z)\|_1 \leq \mu_3\sqrt{n}\|\mathcal{H}_n(\mathbf{X};\mathbf{w}_z)\|_2 \leq \mu_3\sqrt{n}\beta\|\mathbf{w}_z\|_2^2.
				\label{bd_p}
			\end{align} 
			Next, using \cref{partial_ln} we have
			\begin{align*}
				\frac{1}{2}\frac{d\left\|{\mathbf{w}}_n - \overline{\mathbf{w}}_n\right\|^2_2}{dt} = \left\langle\mathbf{w}_n - \overline{\mathbf{w}}_n,  \dot{\mathbf{w}}_n\right\rangle &\in -\left\langle\mathbf{w}_n - \overline{\mathbf{w}}_n,  \partial_{\mathbf{w}_n} L(\mathbf{w}_n,\mathbf{w}_z)\right\rangle\\
				& \in -\left\langle\mathbf{w}_n - \overline{\mathbf{w}}_n,  \partial_{\mathbf{w}_n} \left(L(\mathbf{w}_n,\mathbf{w}_z) - L(\mathbf{w}_n,\mathbf{0}) \right) +  \partial_{\mathbf{w}_n} L(\mathbf{w}_n,\mathbf{0}) \right\rangle.
			\end{align*}
			Since $\|\mathbf{w}_n(t) - \overline{\mathbf{w}}_n\|_2 \leq \gamma$ for all $t\in [0,T^*]$, using \cref{tech_ass_appn} and \cref{bd_p}, we have
			\begin{align*}
				\frac{1}{2}\frac{d\left\|{\mathbf{w}}_n - \overline{\mathbf{w}}_n\right\|^2_2}{dt} \leq \mu_3\sqrt{n}\beta\|\mathbf{w}_n - \overline{\mathbf{w}}_n\|_2 \|\mathbf{w}_z\|_2^2 +  \kappa\|\overline{\mathbf{w}}_n - \mathbf{w}_n\|_2^2 \leq \mu_3\sqrt{n}\beta\|\mathbf{w}_z\|_2^2 + \kappa\|\overline{\mathbf{w}}_n - \mathbf{w}_n\|_2^2,
			\end{align*}
			where in the last inequality we use $\|\mathbf{w}_n(t) - \overline{\mathbf{w}}_n\|_2 \leq 1$, for all $t\in [0,T^*]$. Combining above inequality with \cref{bd_zero}, we have
			\begin{align*}
				\frac{1}{2}\frac{d{\mathbf{Z}}(t) }{dt} \leq \beta(2M_1 + \mu_3\sqrt{n})\|{\mathbf{w}}_z(t)\|^2_2 + \kappa\|\overline{\mathbf{w}}_n - \mathbf{w}_n\|_2^2\leq M_2\mathbf{Z}(t),
			\end{align*}
			where $M_2 := \max(\kappa,\beta(2M_1 + \mu_3\sqrt{n}))$. Therefore, for all $t\in [0,T^*]$, we have that $\mathbf{Z}(t) \leq \mathbf{Z}(0)e^{tM_2}$ implies
			\begin{align*}
				T^* \geq \frac{1}{M_2}\ln\left(\frac{1}{\mathbf{Z}(0)}\right) = \frac{1}{M_2} \ln\left(\frac{1}{2\delta^{2}}\right). 
			\end{align*}
			Since we assume $2\delta^2 \leq \frac{1}{{C}}$, we have that $ T^* \geq  \frac{1}{M_2}\ln\left(C\right)$. Thus, for all $t\in \left[0, \frac{1}{M_2}\ln\left(C\right)\right]$,
			\begin{align*}
				\mathbf{Z}(t) &\leq C\mathbf{Z}(0) \leq 2C\delta^{2},
			\end{align*}
			proving \cref{close_to_saddle_appndx}.
			
			We now move towards proving the second part. We use a similar technique as in the proof of \cref{main_thm}. We define $\xi(t) = \mathbf{e}(t) + \overline{\mathbf{y}}$. Then,
			\begin{align*}
				\|\xi(t)\|_2 &= \|\overline{\mathbf{y}} + \mathcal{H}_n(\mathbf{X};\mathbf{w}_n(t)) + \mathcal{H}_z(\mathbf{X};\mathbf{w}_z(t)) - \mathbf{y}\|_2\\
				&= \| \mathcal{H}_n(\mathbf{X};\mathbf{w}_n(t)) - \mathcal{H}_n(\mathbf{X};\overline{\mathbf{w}}_n) + \mathcal{H}_z(\mathbf{X};\mathbf{w}_z(t))\|_2\\
				&\leq \mu_1\|\mathbf{w}_n(t) - \overline{\mathbf{w}}_n\|_2 + \beta\|\mathbf{w}_z(t)\|_2^2\\
				& \leq \mu_1\sqrt{C}\delta + \beta C\delta^2.
			\end{align*}
			Thus, the dynamics of $\mathbf{w}_z(t)$ can be written as
			\begin{align}
				\dot{\mathbf{w}}_z
				\in -\sum_{i=1}^n{e}_i\partial \mathcal{H}_z(\mathbf{x}_i;\mathbf{w}_z) = \sum_{i=1}^n(\overline{{y}}_i - \xi(t))\partial \mathcal{H}_z(\mathbf{x}_i;\mathbf{w}_z).
				\label{app_upd_saddle}
			\end{align}
			Dividing \cref{app_upd_saddle} by $\delta$, and using 1-homogeneity of $\partial \mathcal{H}(\mathbf{x};\mathbf{w})$ (\Cref{euler_thm}), we have
			\begin{align}
				\frac{\dot{\mathbf{w}}_z}{\delta} 
				\in \frac{1}{\delta}\sum_{i=1}^n(\overline{{y}}_i - \xi(t))\partial \mathcal{H}_z(\mathbf{x}_i;\mathbf{w}_z) = \sum_{i=1}^n(\overline{{y}}_i - \xi(t))\partial \mathcal{H}_z(\mathbf{x}_i;\mathbf{w}_z/\delta).
			\end{align}
			Now, consider the differential inclusion
			\begin{align}
				\frac{d{\tilde{\mathbf{w}}}_z}{dt} \in \sum_{i=1}^n\overline{{y}}_i \partial \mathcal{H}_z(\mathbf{x}_i;\tilde{\mathbf{w}}_z), \tilde{\mathbf{w}}_z(0) = \zeta_z.
				\label{non_diff_saddle}
			\end{align}
			Since for all $t\in \left[0,\frac{1}{M_2}\ln\left(C\right)\right]$, $\|\xi(t)\|_2 \leq \mu_1\sqrt{C}\delta + \beta C\delta^2$, using \Cref{err_bd_diff_inclusion}, there exists a small enough $\overline{\delta}$ such that for all $\delta\leq\overline{\delta}$,
			$$\left\|\tilde{\mathbf{w}}_z(t) - \frac{\mathbf{w}_z(t)}{\delta}\right\|_2 \leq \epsilon,$$
			where $\tilde{\mathbf{w}}(t)$ is a solution of \cref{non_diff_saddle}.
			
		\end{proof}
		\subsection{Proof of \Cref{thm_align_sad}}
		
		Consider the differential inclusion
		\begin{align}
			\dot{\mathbf{u}} \in  \sum_{i=1}^n{\overline{y}_i}\partial \mathcal{H}_z(\mathbf{x}_i;\mathbf{u}), {\mathbf{u}}(0) = \zeta_z,
			\label{main_flow_proof_saddle}
		\end{align}
		and let $\mathbf{u}(t)$  be its unique solution. By \cref{ncf_convg}, either $\lim_{t\rightarrow\infty}\mathbf{u}(t) = \mathbf{0}$ or  $\lim_{t\rightarrow\infty} \frac{\mathbf{u}(t)}{\|\mathbf{u}(t)\|_2}$ exists. 
		
		We first consider the case when $\lim_{t\rightarrow\infty}\mathbf{u}(t) = \mathbf{0}$. Here, we define $\eta = 1$ and fix an $\epsilon\in(0,\eta)$. Then, we choose  $\overline{T}$ large enough such that 
		\begin{equation}
			\label{time_bd_zero_saddle}
			\|\mathbf{u}(t)\|_2 \leq \epsilon, \forall t\geq \overline{T}.
		\end{equation}
		We next consider the case when $\lim_{t\rightarrow\infty}\mathbf{u}(t) \neq 0 $. Then, from \cref{low_bd_U}, there exists $\eta>0$ and $T\geq 0$ such that $\|\mathbf{u}(t)\|_2\geq 2\eta$, for all $t\geq T$. Also, from \cref{ncf_convg},
		$$\lim_{t\rightarrow\infty}\mathbf{u}(t) /\|\mathbf{u}(t) \|_2 = \hat{\mathbf{u}}, $$
		where $\hat{\mathbf{u}}$ is a  non-negative KKT point of 
		\begin{equation}
			\max \mathcal{H}_z(\mathbf{X};\mathbf{u})^\top \overline{\mathbf{y}}, \text{ such that } \|\mathbf{u}\|_2^2 = 1.
			\label{const_opt_ncf_proof_saddle}
		\end{equation}  
		For a fixed $\epsilon\in (0,\eta)$, we choose $\overline{T}>T$ such that
		\begin{align}
			\label{time_bd_non_zero_saddle}
			\frac{\mathbf{u}(t)^\top\hat{\mathbf{u}}}{\|\mathbf{u}(t)\|_2} \geq 1-\epsilon, \text{ for all } t\geq \overline{T}.
		\end{align}

		Having chosen $\overline{T}$ for a fixed $\epsilon\in (0,\eta)$ in both cases, we 
		next choose $C$ such that $ \frac{\ln(C)}{M_2} = \overline{T}.$ From \cref{main_thm_sad}, there exists $\overline{\delta}$ such that for any $\delta\leq \overline{\delta}$
		$$ \|\mathbf{w}_n(t) - \overline{\mathbf{w}}_n\|_2^2 + \|\mathbf{w}_z(t) - \overline{\mathbf{w}}_z\|_2^2 \leq 2C\delta^{2},
		\text{ for all } t\in\left [0,\frac{\ln(C)}{M_2} \right],$$ 
		and
		\begin{align}
			\left\|\frac{\mathbf{w}_z(\overline{T})}{\delta} - \mathbf{u}(\overline{T})\right\|_2 \leq \epsilon.
		\end{align}
		Thus, we may write $\frac{\mathbf{w}_z(\overline{T})}{\delta} = \mathbf{u}(\overline{T}) + \zeta$, where $\|\zeta\|_2\leq \epsilon.$ If $\lim_{t\rightarrow\infty}\mathbf{u}(t) = \mathbf{0}$, then, using \cref{time_bd_zero_saddle},
		\begin{align*}
			\|\mathbf{w}_z(\overline{T})\|_2\leq 2\delta\epsilon.
		\end{align*}
		Else, since $\epsilon\in(0,\eta)$, and $\|\mathbf{u}(\overline{T})\|_2\geq 2\eta$, we have $\|\mathbf{u}(\overline{T}) + \zeta\|_2 \geq \eta$. Hence,
		\begin{align*}
			&\frac{\mathbf{w}_z(\overline{T})}{\|\mathbf{w}_z(\overline{T})\|_2} = \frac{\mathbf{u}(\overline{T}) + \zeta}{\|\mathbf{u}(\overline{T}) + \zeta\|_2}
		\end{align*}
		which implies
		\begin{align*}
			&\frac{\mathbf{w}_z(\overline{T})^\top\hat{\mathbf{u}}}{\|\mathbf{w}_z(\overline{T})\|_2} = \frac{\mathbf{u}(\overline{T})^\top\hat{\mathbf{u}} + \zeta^\top\hat{\mathbf{u}}}{\|\mathbf{u}(\overline{T}) + \zeta\|_2} = \left(\frac{\mathbf{u}(\overline{T})^\top\hat{\mathbf{u}}}{\|\mathbf{u}(\overline{T})\|_2}\right)\frac{\|\mathbf{u}(\overline{T})\|_2}{\|\mathbf{u}(\overline{T}) + \zeta\|_2} + \frac{\zeta^\top\hat{\mathbf{u}}}{\|\mathbf{u}(\overline{T})+ \zeta\|_2}.
		\end{align*}
		Since
		$$\frac{\|\mathbf{u}(\overline{T})\|_2}{\|\mathbf{u}(\overline{T})+ \zeta\|_2} \geq \frac{\|\mathbf{u}(\overline{T})\|_2}{\|\mathbf{u}(\overline{T})\|_2 + \|\zeta\|_2} = \frac{1}{1+\frac{ \|\zeta\|_2}{\|\mathbf{u}(\overline{T})\|_2}}\geq \frac{1}{1+\frac{\epsilon}{2\eta}}\geq 1-\frac{\epsilon}{2\eta},$$
		and 
		$$\frac{\zeta^\top\hat{\mathbf{u}}}{\|\mathbf{u}(\overline{T})+ \zeta\|_2} \geq \frac{-\epsilon}{\eta},$$
		we have that
		$$\frac{\mathbf{w}_z(\overline{T})^\top\hat{\mathbf{u}}}{\|\mathbf{w}_z(\overline{T})\|_2} \geq (1-\epsilon)\left(1-\frac{\epsilon}{2\eta}\right) - \frac{\epsilon}{\eta} \geq 1 - \left(1+\frac{3}{2\eta}\right)\epsilon. $$
		
		\subsection{Numerical Experiments}
		
		We experimentally show the phenomenon of directional convergence among the weights of small magnitude near saddle points. For this, we again consider the example depicted in \Cref{full_loss_traj}. Recall, the network architecture is defined as $\mathcal{H}(x_1,x_2;\{\mathbf{u}_i\}_{i=1}^{20}) = \sum_{i=1}^{20}\max(0,\mathbf{u}_{1i}x_1+\mathbf{u}_{2i}x_2)^2$, and there are 50 unit norm inputs with corresponding labels generated using the function $\mathcal{H}^*(x_1,x_2) = 5\max(0,x_1)^2+4\max(0,-x_1)^2$. 
		
		We consider the following saddle point   $\{\bar{\mathbf{u}}_1,\cdots ,\bar{\mathbf{u}}_{20}\}$, where $\overline{\mathbf{u}}_i = [\sqrt{1/2},0]$, if $i\leq 10$, and $\overline{\mathbf{u}}_i = [0,0]$, if $i>10$. Now,  $\{\overline{\mathbf{u}}_1,\cdots ,\overline{\mathbf{u}}_{20}\}$ is a saddle point since if $x_1\geq 0$, then $\mathcal{H}(x_1,x_2;\{\overline{\mathbf{u}}_i\}_{i=1}^{20}) = 5\max(0,x_1)^2 = \mathcal{H}^*(x_1,x_2)$, and if $x_1<0$, then $\nabla_{\mathbf{u}_i}\mathcal{H}(x_1,x_2;\{\overline{\mathbf{u}}_i\}_{i=1}^{20}) = \mathbf{0}$, for all $i\leq 10$. Moreover, $\nabla_{\mathbf{u}_i}\mathcal{H}(x_1,x_2;\{\overline{\mathbf{u}}_i\}_{i=1}^{20}) = \mathbf{0}$, for all $i>10$. 
		
		We initialize the weights by adding an i.i.d. Gaussian vector of standard deviation $10^{-5}$ to $\{\bar{\mathbf{u}}_1,\cdots ,\bar{\mathbf{u}}_{20}\}$. We use square loss and optimize using gradient descent for 50000 iterations with step-size $5\cdot 10^{-5}$ .  We plot the evolution of the overall loss and the $\ell_2$ distance of the network weights from the saddle point with iterations in \Cref{near_saddle_loss_evol}, and \Cref{near_saddle_dir_evol} contains the evolution of the angle the weight vectors of the last 10 hidden neurons (they have small magnitude at initialization) make with the positive horizontal axis with iterations and the constrained NCF defined with respect to the residual error at the saddle point. It is evident that the training loss barely changes, and the weights remains close to the saddle point. Moreover, the individual weight vectors for the neurons with small magnitude converge to the the KKT point of the constrained NCF. We note that the constrained NCF seems to have a set of flat KKT points, and therefore, if the weights are initialized in that set, then they remain there as seen in \Cref{near_saddle_dir_evol}. 
		\begin{figure}
			\centering
			\begin{subfigure}[b]{0.4\textwidth}
				\centering
				\includegraphics[width=1\textwidth]{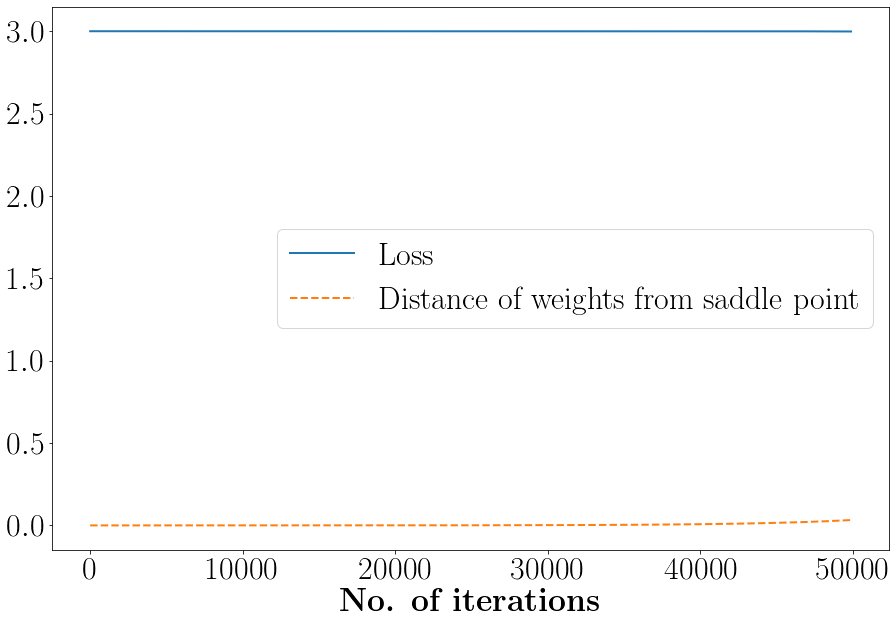}
				\caption{}
				\label{near_saddle_loss_evol}
			\end{subfigure}
			\begin{subfigure}[b]{0.4\textwidth}
				\centering
				\includegraphics[width=1\textwidth]{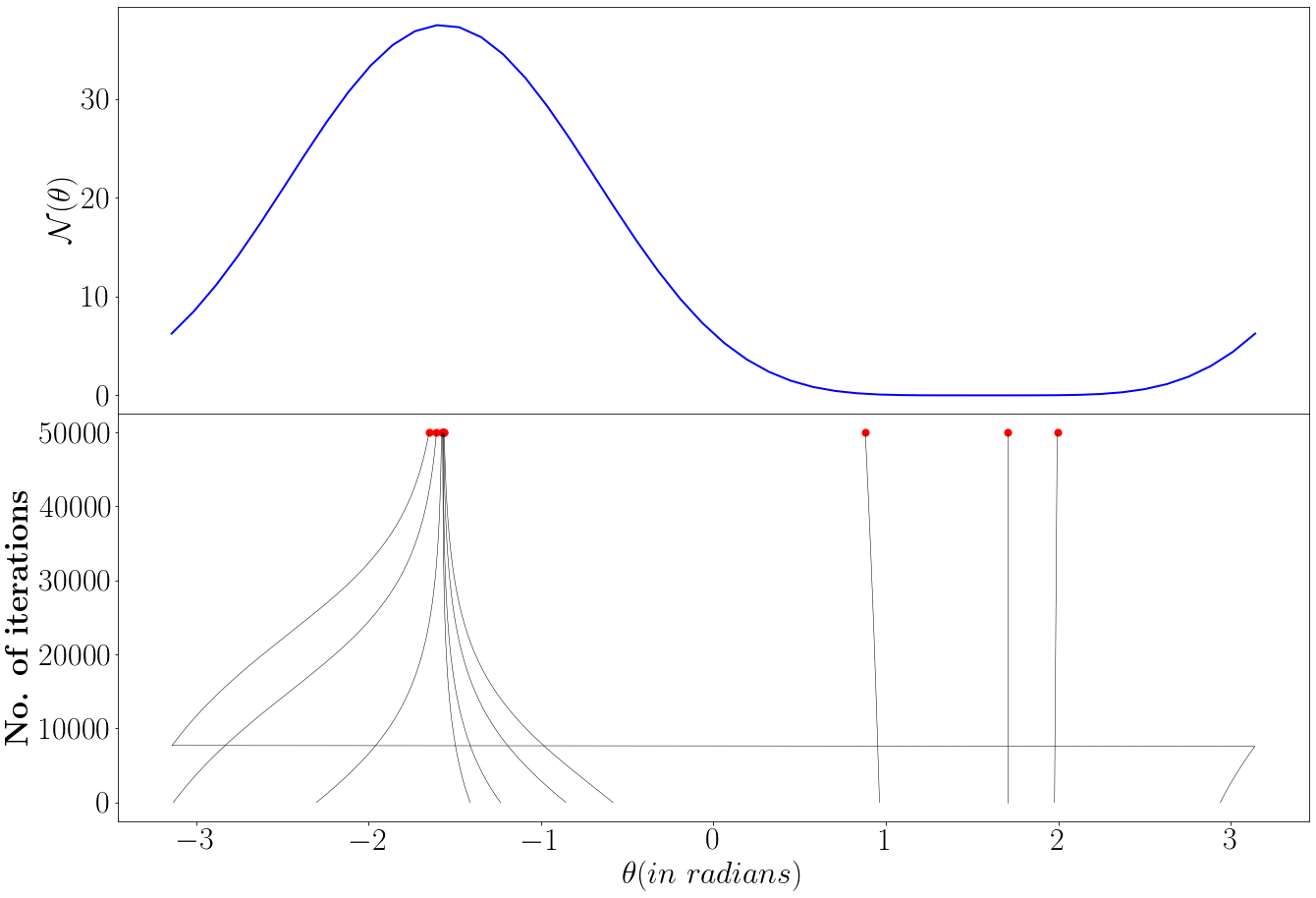}
				\caption{}
				\label{near_saddle_dir_evol}
			\end{subfigure}
			\caption{Panel $(a)$: the evolution of training loss and the $\ell_2$-distance of the weights from the saddle point with iterations. Panel $(b)$: The lower part shows the evolution of $\arctan(\mathbf{u}_{2i}(t)/\mathbf{u}_{1i}(t))$ for the last 10 hidden neurons. The top plot shows the constrained NCF $\mathcal{N}_{\bar{\mathbf{y}},\mathcal{H}}(\theta)= \sum_{i=1}^n \bar{y}_i\max(0,[\cos(\theta), \sin(\theta)]^\top\mathbf{x}_i)^2$, where $\bar{\mathbf{y}}$ is the residual error at the saddle point.  We see that the weights remain near the saddle point and loss barely changes, though the weights of the last 10 neurons converge in direction to the KKT points of the constrained NCF.}
			\label{saddle_full_loss_traj}
		\end{figure}

		\section{Gradient Flow Dynamics of $f(u_1,u_2) = u_1|u_2|$}\label{gf_simp_example}
		In the following lemma, we describe the gradient flow solutions of $f(u_1,u_2) = u_1|u_2|$ when initialized at $[1,0]^\top$.
		
		\begin{lemma}For any $T> 0$, consider the following time-varying function
			\begin{align}
				\mathbf{u}_T(t) = \begin{bmatrix}
					u_{1T}(t)\\ 
					u_{2T}(t)
				\end{bmatrix} =  \left\{\begin{matrix}
					\begin{bmatrix}
						1\\ 
						0
					\end{bmatrix}, \text{ for all }t\in [0,T]\\ 
					\begin{bmatrix}
						\cosh\left({t-T}\right)\\ 
						\sinh\left({t-T}\right)
					\end{bmatrix},\text{ for all }t\geq T,
				\end{matrix}\right.
				\label{gf_soln_toy}
			\end{align}
			then, for a.e.\  $t\geq0$, $\mathbf{u}_T(t)$ satisfies 
			\begin{align}
				\begin{bmatrix}
					\dot{u}_1\\ 
					\dot{u}_2
				\end{bmatrix} \in \begin{bmatrix}
					\partial_{u_1} f(u_1,u_2)\\ 
					\partial_{u_2} f(u_1,u_2)
				\end{bmatrix} =  \begin{bmatrix}
					|u_2|\\ 
					u_1\partial |u_2|
				\end{bmatrix}, \begin{bmatrix}
					{u}_1(0)\\ 
					{u}_2(0)
				\end{bmatrix} = \begin{bmatrix}
					1\\ 
					0
				\end{bmatrix},
			\end{align}
			where $$\partial |u_2| \in\left\{\begin{matrix}
				[-1,1], \text{ if }u_2=0\\ 
				1, \text{ if }u_2>0,\\
				-1, \text{ if }u_2<0
			\end{matrix}\right. \ .$$  
		\end{lemma}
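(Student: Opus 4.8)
The statement is a direct verification, so the plan is to check, in order, that $\mathbf{u}_T$ is a legitimate arc, that the Clarke subdifferential of $f$ has the product form displayed in the lemma, and that the inclusion holds for almost every $t$ on each of the two pieces. First I would note that $\mathbf{u}_T:[0,\infty)\to\mathbb{R}^2$ is continuous at the junction $t=T$ (since $\cosh 0=1$ and $\sinh 0=0$) and is smooth on $[0,T)$ and on $(T,\infty)$; hence it is absolutely continuous on every compact interval, i.e.\ an arc, with $\dot{\mathbf{u}}_T(t)=[0,0]^\top$ on $(0,T)$ and $\dot{\mathbf{u}}_T(t)=[\sinh(t-T),\cosh(t-T)]^\top$ on $(T,\infty)$.

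Next I would compute $\partial f$. Where $u_2\neq 0$, $f(u_1,u_2)=u_1|u_2|$ is differentiable with $\nabla f(u_1,u_2)=(|u_2|,\,u_1\sign(u_2))$; where $u_2=0$, taking the convex hull of limits of these gradients (equivalently, applying Clarke's chain rule, \cref{chain_rule_non_diff}, with outer map $(a,b)\mapsto ab$ and inner maps $u_1$ and $|u_2|$) yields $\partial f(u_1,0)=\{0\}\times u_1[-1,1]$. In every case this coincides with $\{(|u_2|,\,u_1 s):s\in\partial|u_2|\}$, the right-hand side in the statement, where $\partial|u_2|=\{\sign(u_2)\}$ for $u_2\neq0$ and $\partial|0|=[-1,1]$.

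Finally I would check the inclusion piecewise. On $(0,T)$ we have $u_1=1$, $u_2=0$, so the target set is $\{0\}\times[-1,1]$, which contains $\dot{\mathbf{u}}_T=[0,0]^\top$; the initial condition $\mathbf{u}_T(0)=[1,0]^\top$ is immediate. On $(T,\infty)$ we have $u_2=\sinh(t-T)>0$, hence $\partial|u_2|=\{1\}$ and the target set is the single point $\bigl(|\sinh(t-T)|,\,u_1\bigr)=\bigl(\sinh(t-T),\cosh(t-T)\bigr)$ --- using that $\sinh\ge0$ on $[0,\infty)$ --- which is exactly $\dot{\mathbf{u}}_T(t)$. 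Since $\{T\}$ has Lebesgue measure zero, the inclusion holds for a.e.\ $t\ge0$. None of this is hard; the only points that need any care are the gluing at $t=T$ and the non-smooth locus $u_2=0$, and both are absorbed by the ``almost everywhere'' clause together with the set-valuedness of $\partial|\cdot|$ at the origin, so I do not anticipate a real obstacle.
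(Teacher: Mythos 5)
Your proposal is correct and takes essentially the same approach as the paper: a direct piecewise verification that the derivative of $\mathbf{u}_T$ lies in the stated set-valued right-hand side, with the case $t\in(0,T)$ using that $[0,0]^\top\in\{0\}\times[-1,1]$ and the case $t>T$ using that $u_2>0$ makes $\partial|u_2|$ a singleton. Your extra remarks on absolute continuity at the junction and on computing $\partial f$ via limits of gradients are sound additions but do not change the argument.
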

		
		\begin{proof}
			For $t\in [0,T)$, $\mathbf{u}_T(t)$ is a constant, hence,
			$$\begin{bmatrix}
				\dot{u}_{1T}\\ 
				\dot{u}_{2T}
			\end{bmatrix} = \begin{bmatrix}
				0\\ 
				0
			\end{bmatrix} .$$
			Since $u_{1T}(t) = 1$ and $u_{2T}(t) = 0$, for all $t\in [0,T)$, we have
			$$\begin{bmatrix}
				|u_{2T}(t)|\\ 
				u_{1T}(t)\partial |u_{2T}(t)|
			\end{bmatrix} = \begin{bmatrix}
				0\\ 
				[-1,1]
			\end{bmatrix} \ni \begin{bmatrix}
				0\\ 
				0
			\end{bmatrix}.$$
			For $t>T$,  $u_{1T}(t) $ and $u_{2T}(t) $ are continuous and differentiable functions, thus,
			$$\begin{bmatrix}
				\dot{u}_{1T}\\ 
				\dot{u}_{2T}
			\end{bmatrix} = \begin{bmatrix}
				\sinh(t-T)\\ 
				\cosh(t-T)
			\end{bmatrix} .$$
			Since $u_{2T}(t) > 0$, hence, for all $t> T$, we have
			$$\begin{bmatrix}
				|u_{2T}(t)|\\ 
				u_{1T}(t)\partial |u_{2T}(t)|
			\end{bmatrix} = \begin{bmatrix}
				\sinh(t-T)\\ 
				\cosh(t-T)
			\end{bmatrix},$$
			completing the proof.
		\end{proof}
		The above lemma shows that for any $T>0$, $\mathbf{u}_{T}(t)$ defined in \cref{gf_soln_toy} is a gradient flow solution of $f(u_1,u_2)$ when initialized at $[1,0]^\top$. For any finite $T$, it is easy to see that $\lim_{t\rightarrow\infty} \mathbf{u}_T(t)/\|\mathbf{u}_T(t)\|_2 = [1/\sqrt{2},1/\sqrt{2}]^\top$. Hence, for a fixed $T$ and any $\epsilon>0$, we can choose $\overline{T}$ such that 
		$$\left\|\frac{\mathbf{u}_T(\overline{T})}{\|\mathbf{u}_T(\overline{T})\|_2} - \begin{bmatrix}
			1/\sqrt{2}\\ 
			1/\sqrt{2}
		\end{bmatrix}\right\|\geq \epsilon.$$
		
		In the next lemma, we show that except for the set $\{u_2 = 0, u_1 > 0\}$, all points are non-branching for gradient flow dynamics of $f(u_1,u_2) = u_1|u_2|.$ 
		\begin{lemma}
			Let $[z_1,z_2]^\top\in\mathbb{R}^2\symbol{92} \{u_2 = 0, u_1 > 0\}$, then the differential inclusion 
			\begin{align}
				\begin{bmatrix}
					\dot{u}_1\\ 
					\dot{u}_2
				\end{bmatrix} \in \begin{bmatrix}
					\partial_{u_1} f(u_1,u_2)\\ 
					\partial_{u_2} f(u_1,u_2)
				\end{bmatrix}  =  \begin{bmatrix}
				|u_2|\\ 
				u_1\partial |u_2|
				\end{bmatrix}, \begin{bmatrix}
					{u}_1(0)\\ 
					{u}_2(0)
				\end{bmatrix} = \begin{bmatrix}
					z_1\\ 
					z_2
				\end{bmatrix}
			\end{align}
			has a unique solution, where
			$$\partial |u_2| \in\left\{\begin{matrix}
				[-1,1], \text{ if }u_2=0\\ 
				1, \text{ if }u_2>0,\\
				-1, \text{ if }u_2<0
			\end{matrix}\right. \ .$$  
			\label{unique_f}
	\end{lemma}
	\begin{proof}
		We first note that
		$$\dot{u}_1u_1 = \dot{u_2}u_2,$$
		which implies
		$$\frac{d}{dt}(u_1^2 (t)- u_2^2(t)) = 0.$$
		Hence,
		\begin{align}
			u_1^2 (t)- u_2^2(t) = z_1^2 - z_2^2.
			\label{bd_u1_u_2}
		\end{align}
		Now, we prove the lemma by considering different cases.\\
		\textbf{Case 1:} $|z_2| > |z_1|$. 
		From \cref{bd_u1_u_2}, we have $u_1^2 (t)- u_2^2(t) = z_1^2 - z_2^2$. Thus, 
		$u_2^2(t) = u_1^2 (t) + z_2^2 - z_1^2$. Now, since $|z_2| > |z_1|$, we get that $u_2(t) > 0, \forall t\geq 0$. Since $u_2(t)$ stays away from the set $\{u_2 = 0\}$, $\partial_{u_1} f(u_1(t),u_2(t))$ and $\partial_{u_2} f(u_1(t),u_2(t))$ are always unique and hence, $[u_1(t),u_2(t)]$ is unique. 
		
		\textbf{Case 2:} $|z_2| \leq |z_1|, z_2 \neq 0, z_1 > 0.$
		In this case, since $|z_2| > 0$, we have $z_1 = |z_1| > 0$. Now, since $\partial_{u_1} f(u_1(t),u_2(t)) = |u_2(t)|\geq 0$ and $z_1 > 0$, therefore, $u_1(t)\geq z_1 > 0, \forall t\geq 0$. Now, 
		\begin{align*}
			\dot{(u_2^2)} = 2u_2\dot{u}_2 = 2|u_2|u_1.
		\end{align*}
		Since $u_1(t)> 0, \forall t\geq 0$, we get $u_2^2(t) \geq u_2^2(0) = z_2^2 > 0, \forall t\geq 0.$ Hence, $u_2(t)$ stays away from the set $\{u_2 = 0\}$, which implies $\partial_{u_1} f(u_1(t),u_2(t))$ and $\partial_{u_2} f(u_1(t),u_2(t))$ are always unique and thus,$[u_1(t),u_2(t)]$ is unique. 
		
		Before proceeding to the next cases, we first show that the set $\mathcal{S} = \{u_2 = 0, u_1\leq 0\}$ is a stable critical point, that is, if $\{u_1(\bar{t}),u_2(\bar{t}) \} \in \mathcal{S}$, for some time $\bar{t}\geq 0$, then,  $[u_1({t}),u_2({t}) ] = [u_1(\bar{t}),u_2(\bar{t}) ] , \forall t\geq \bar{t}.$
		
		Proving $ [0,0]$ is a stable critical point is trivial since $\partial_{u_1} f(0,0) = 0 = \partial_{u_2} f(u_1,u_2)$.
		
		Now, suppose $[u_1(\bar{t}),u_2(\bar{t}) ]  = [\hat{u}_1,\hat{u}_2] $, where  $\hat{u}_1 < 0$ and $\hat{u}_2 = 0$. Then, $f(\hat{u}_1,\hat{u}_2) = 0$, and since gradient flow can not decrease the objective value, we have $f(u_1({t}),u_2({t})) \geq  0$, for all $t\geq \bar{t}$.   Also, since for any $\tilde{u}_1 < 0$ and $\tilde{u}_2 \neq 0$, we have $f(\tilde{u}_1,\tilde{u}_2) < 0$, therefore, gradient flow can only move along the $u_2 = 0$ axis, i.e., only $u_1(t)$ can change.  However, since  $\partial_{u_1} f(\hat{u}_1,\hat{u}_2) = |\hat{u}_2| = 0$, $u_1(t)$ can not change for $t\geq \bar{t}$. Hence, $[u_1({t}),u_2({t}) ] = [u_1(\bar{t}),u_2(\bar{t}) ] $, for all $t\geq \bar{t}$.
		
		\textbf{Case 3:} $|z_2| < |z_1|,  z_1 < 0.$
		From \cref{bd_u1_u_2}, we have $u_1^2 (t)- u_2^2(t) = z_1^2 - z_2^2$. Thus, 
		$u_1^2(t) = u_2^2 (t) + z_1^2 - z_2^2$. Now, since $|z_1| > |z_2|$, we get that $u_1^2(t) > 0, \forall t\geq 0$. Now, since $u_1(0) = z_1 < 0 $, therefore, $u_1(t) < 0, \forall t\geq 0.$ Hence, if $u_2(\bar{t})$ is $0$ for the first time at some $\bar{t} \geq 0$, then from the stability of the set $\mathcal{S}$, we have $[u_1(t),u_2(t)] = [u_1(\bar{t}),u_2(\bar{t})], \forall t\geq \bar{t}$. For $0\leq t< \bar{t}$, since $u_2(t) \neq 0$, $\partial_{u_1} f(u_1(t),u_2(t))$ and $\partial_{u_2} f(u_1(t),u_2(t))$ are always unique and hence, $[u_1(t),u_2(t)]$ is unique. 
		
		\textbf{Case 3:} $|z_2|=|z_1|,  z_1 \leq 0.$ From \cref{bd_u1_u_2}, we have $|u_1(t)| = |u_2(t)|.$ Now, since $u_1(0) = z_1 \leq 0$, therefore, if $u_1(\bar{t})$ is $0$ for the first time at some $\bar{t} \geq 0$, then $u_2(\bar{t})$ is also $0$ for the first time at $\bar{t}$. Then, from the stability of $[0,0]$, we have $[u_1(t),u_2(t)] = [0,0], \forall t\geq \bar{t}$. For $0\leq t< \bar{t}$, since $u_2(t) \neq 0$, $\partial_{u_1} f(u_1(t),u_2(t))$ and $\partial_{u_2} f(u_1(t),u_2(t))$ are always unique and hence, $[u_1(t),u_2(t)]$ is unique. 
	\end{proof}
	
		\section{Proof of \Cref{sm_kkt_rl}}\label{unique_kkt_pf}
		
		Since $\{v_*,\mathbf{u}_*\}$ is a KKT point of  $$\max_{v^2+\|\mathbf{u}\|_2^2=1}\mathcal{N}_{\mathbf{z},\mathcal{H}}(v,\mathbf{u})= v\mathbf{z}^\top \sigma(\mathbf{X}^\top\mathbf{u}),$$
		therefore, 	for some $\lambda\in\mathbb{R}$, we have
		\begin{align}
			&0 = \lambda v_* + \mathbf{z}^\top\sigma(\mathbf{X}^\top\mathbf{u}_*),\label{v_kkt_rl}\\
			&\mathbf{0}\in \lambda \mathbf{u}_* + v_*\sum_{i=1}^n\sigma'(\mathbf{x}_i^\top\mathbf{u}_*)z_i\mathbf{x}_i,
			\label{u_kkt_rl}
		\end{align}
		where $\sigma'(\cdot)$ is the subdifferential of $\sigma(\cdot)$. Also, since $\mathbf{x}_i^\top\mathbf{u}_* \neq 0$, $\sigma'(\mathbf{x}_i^\top\mathbf{u}_*)$ is unique for all $i\in [n]$.
		
		Now, we may further assume that $\min_{i\in [n]}|\mathbf{x}_i^\top\mathbf{u}_*| =  \eta_1$ and  $\max_{i\in [n]}\|\mathbf{x}_i\|_2 = \eta_2$, for some $\eta_1, \eta_2 > 0.$ Choose $\gamma = \eta_1/(2\eta_2)$, and recall $\mathcal{S} = \{v,\mathbf{u}: sign(v_*)v > \|\mathbf{u}\|_2, \|\mathbf{u}  - \mathbf{u}_*\|_2 \leq \gamma\}$. Note that since $v_*\mathbf{z}^\top\sigma(\mathbf{X}^\top\mathbf{u}_*) > 0$, therefore, $|v_*| \neq 0$ and $sign(v_*)\in \{-1,1\}$. Also, multiplying \cref{v_kkt_rl} by $v_*$ and using $v_*\mathbf{z}^\top\sigma(\mathbf{X}^\top\mathbf{u}_*) > 0$, we get that $\lambda < 0$.
		
		Now, the gradient flow equation is 
		$$\begin{bmatrix}
			\dot{v}\\ 
			\dot{\mathbf{u}}
		\end{bmatrix} \in  \begin{bmatrix}
		 \mathbf{z}^\top\sigma(\mathbf{X}^\top\mathbf{u})\\
			v\sum_{i=1}^n\sigma'(\mathbf{x}_i^\top\mathbf{u})z_i\mathbf{x}_i,
		\end{bmatrix}, \begin{bmatrix}
		{v}(0)\\ 
		\mathbf{u}(0)
		\end{bmatrix} = \begin{bmatrix}
		p\\ 
		\mathbf{q}
		\end{bmatrix}, $$
		where $\{p,\mathbf{q}\}\in \mathcal{S}.$ We first note that
		$$\frac{d}{dt}(v^2 - \|\mathbf{u}\|_2^2) = 2v\dot{v} - 2\mathbf{u}^\top\dot{\mathbf{u}} = 0,$$
		which implies 
		\begin{align}
			v^2(t) - \|\mathbf{u}(t)\|_2^2 = p^2 - \|\mathbf{q}\|_2^2 .
		\end{align}
		
		Since $sign(v_*)p> \|\mathbf{q}\|_2$, we get $sign(p) = sign(v_*)$ and thus $|p|>\|\mathbf{q}\|_2$. Therefore, from the above equation, we get $	v^2(t) - \|\mathbf{u}(t)\|_2^2 > 0$, which implies $v^2(t)>0$. Since $v^2(t)$ is continuous and never becomes $0$, we have $sign(v(t)) = sign(v(0)) = sign(p) = sign(v_*)$.
		
		To prove uniqueness, we will show that $\mathbf{x}_i^\top\mathbf{u}(t) \neq 0$, for all $i\in [n]$ and $t\geq 0$. Note that, since $\|\mathbf{u}_* - \mathbf{q}\|_2 \leq \gamma$, we may assume $\mathbf{q} = \mathbf{u}_* + \epsilon\mathbf{b}$, where $\|\mathbf{b}\|_2 = 1$ and $|\epsilon| \leq \gamma$. Thus,
		 $$|\mathbf{x}_i^\top\mathbf{u}(0)| = |\mathbf{x}_i^\top\mathbf{q}| \geq |\mathbf{x}_i^\top\mathbf{u}_*| - |\epsilon||\mathbf{x}_i^\top\mathbf{b}| \geq \eta_1 - \gamma\eta_2 = \eta_1/2, $$
		 which implies $\mathbf{x}_i^\top\mathbf{u}(0) \neq 0$, for all $i\in [n]$. For the sake of contradiction, suppose there exists some $\bar{t}>0$ such that  $\mathbf{x}_i^\top\mathbf{u}(\bar{t}) = 0$, for some $i\in [n]$, for the first time. Then, for all $t\in [0,\bar{t})$, we have $\mathbf{x}_i^\top\mathbf{u}(t) \neq 0$, for all $i\in [n]$. Furthermore, due to continuity of $\mathbf{u}(t)$, we have, for all $i\in [n]$,
		 $$sign(\mathbf{x}_i^\top\mathbf{u}(t)) = sign(\mathbf{x}_i^\top\mathbf{u}(0)), \forall t\in [0,\bar{t}).$$
		 Now, note that
		 \begin{align*}
		 	\mathbf{x}_i^\top\mathbf{u}_*\mathbf{x}_i^\top\mathbf{u}(0) &= \mathbf{x}_i^\top\mathbf{u}_*\mathbf{x}_i^\top\mathbf{u}_* + \epsilon\mathbf{x}_i^\top\mathbf{u}_*\mathbf{x}_i^\top\mathbf{b}\\
		 	& \geq |\mathbf{x}_i^\top\mathbf{u}_*|^2 - |\epsilon||\mathbf{x}_i^\top\mathbf{u}_*||\mathbf{x}_i^\top\mathbf{b}|\\
		 	&=|\mathbf{x}_i^\top\mathbf{u}_*|(|\mathbf{x}_i^\top\mathbf{u}_*| - |\epsilon||\mathbf{x}_i^\top\mathbf{b}|)\geq \eta_1^2/2.
		 \end{align*}
		 Thus, for all $i\in [n]$,
		 $$sign(\mathbf{x}_i^\top\mathbf{u}(t)) = sign(\mathbf{x}_i^\top\mathbf{u}(0)) = sign(\mathbf{x}_i^\top\mathbf{u}_*) , \forall t\in [0,\bar{t}),$$
		 which implies
		 \begin{align*}
		 	\mathbf{u}(\bar{t}) &= \mathbf{u}(0) + \int_0^{\bar{t}}v(t)\sum_{i=1}^n\sigma'(\mathbf{x}_i^\top\mathbf{u}(t))z_i\mathbf{x}_i dt\\
		 	&= \mathbf{u}(0)  + \left(\int_0^{\bar{t}}v(t) dt\right)\sum_{i=1}^n\sigma'(\mathbf{x}_i^\top\mathbf{u}_*)z_i\mathbf{x}_i\\
		 	&= \mathbf{u}(0)  + \left(\int_0^{\bar{t}}v(t)/v_* dt\right)v_*\sum_{i=1}^n\sigma'(\mathbf{x}_i^\top\mathbf{u}_*)z_i\mathbf{x}_i =  \mathbf{u}(0)  -\lambda \left(\int_0^{\bar{t}}v(t)/v_* dt\right)\mathbf{u}_*,
		 \end{align*}
		 where in the last equality we used \cref{u_kkt_rl}. Now, recall that $\lambda<0$ and $sign(v(t)) = sign(v_*)$, for all $t\geq 0$. Thus, we can write 
		 $$\mathbf{u}(\bar{t})  = \mathbf{u}(0)+\bar{\alpha}\mathbf{u}_*,$$
		 for some $\bar{\alpha} > 0$. This implies that, for all $i\in [n]$,
		 $$	\mathbf{x}_i^\top\mathbf{u}_*\mathbf{x}_i^\top\mathbf{u}(\bar{t}) = \mathbf{x}_i^\top\mathbf{u}_* \mathbf{x}_i^\top\mathbf{u}(0) + \bar{\alpha}|\mathbf{x}_i^\top\mathbf{u}_*|^2 \geq \eta_1^2/2 + \bar{\alpha}\eta_1^2 > 0,$$
		 which contradicts $\mathbf{x}_i^\top\mathbf{u}(\bar{t}) = 0$, for some $i\in [n]$. 
		 
		 \section{KKT Points of NCF: Some Examples}\label{kkt_ex}
		 
		 In this section we provide some examples where KKT points of the NCF can be computed analytically.
		 
		 \subsection{Symmetric Data and Squared ReLU}
		 Let $n$ be even, and suppose the training set is the union of  $\{\mathbf{x}_i, y_i\}_{i=1}^{n/2}$ and $\{-\mathbf{x}_i, -y_i\}_{i=1}^{n/2}$ such that $\|\sum_{i=1}^{n/2}y_i\mathbf{x}_i\mathbf{x}_i^\top\|_2 \neq 0$. Let the neural network $\mathcal{H}(\mathbf{x};\mathbf{u}) = \sigma(\mathbf{x}^\top\mathbf{u}),$ where $\sigma(x) = \max(x,\alpha x)^2$, for some $\alpha\in \mathbb{R}$. Then, for square or logistic loss, the constrained NCF is
		 $$\max_{\|\mathbf{u}\|_2^2 =1 } \sum_{i=1}^{n/2}y_i\left(\sigma(\mathbf{x}_i^\top\mathbf{u}) - \sigma(-\mathbf{x}_i^\top\mathbf{u}) \right).$$
		 Now, since $\sigma(x)-\sigma(-x) = (1+\alpha^2)x^2$, the constrained NCF can be written as   
		 $$\max_{\|\mathbf{u}\|_2^2 =1 } (1+\alpha^2)\mathbf{u}^\top\left(\sum_{i=1}^{n/2}y_i\mathbf{x}_i\mathbf{x}_i^\top\right)\mathbf{u}.$$
		 It is well known that KKT points of the above problem will be the eigenvectors of the matrix $\sum_{i=1}^{n/2}y_i\mathbf{x}_i\mathbf{x}_i^\top$.
		 
		 \subsection{Symmetric Data and 2-layer ReLU}
		 This example is inspired from \cite{lyu_simp}. Let $n$ be even, and suppose the training set is the union of  $\{\mathbf{x}_i, y_i\}_{i=1}^{n/2}$ and $\{-\mathbf{x}_i, -y_i\}_{i=1}^{n/2}$ such that $\|\sum_{i=1}^{n/2}y_i\mathbf{x}_i\|_2 \neq 0$. Let the neural network $\mathcal{H}(\mathbf{x};v,\mathbf{u}) = v\sigma(\mathbf{x}^\top\mathbf{u}),$ where $\sigma(x) = \max(x,\alpha x)$, for some $\alpha\in \mathbb{R}\symbol{92}\{-1\}$. Then, for square or logistic loss, the constrained NCF is
		 $$\max_{v^2+\|\mathbf{u}\|_2^2 =1 } \sum_{i=1}^{n/2}vy_i\left(\sigma(\mathbf{x}_i^\top\mathbf{u}) - \sigma(-\mathbf{x}_i^\top\mathbf{u}) \right).$$
		 
		 Now, since $\sigma(x)-\sigma(-x) = (1+\alpha)x$, the constrained NCF can be written as   
		  $$\max_{v^2+\|\mathbf{u}\|_2^2 =1 } (1+\alpha)v\left(\sum_{i=1}^{n/2}y_i\mathbf{x}_i^\top\right)\mathbf{u}.$$
		  
		  Let $\mathbf{q} = \sum_{i=1}^{n/2}y_i\mathbf{x}_i.$ Now, $\{v_*,\mathbf{u}_*\}$ is a KKT point of the above problem if there exists $\lambda_*$ such that 
		  \begin{align}
		  &\lambda_*v_* + \mathbf{q}^\top\mathbf{u}_* = \mathbf{0},\label{kkt_1}\\
		   &\lambda_*\mathbf{u}_* + v_*\mathbf{q} = \mathbf{0},\label{kkt_2}\\
		    &v_*^2+\|\mathbf{u}_*\|_2^2 = 1.\label{kkt_3}
		  \end{align}
		  Now, if we choose $\lambda_* = 0$, then $v_* = 0$ and $\mathbf{u}_* = \hat{\mathbf{u}}$ satisfies the KKT equation, where $\hat{\mathbf{u}}$ is any vector such that $\hat{\mathbf{u}}^\top\mathbf{q} = 0$ and $\|\hat{\mathbf{u}}\|_2 = 1$.
		  
		  Next, if we choose $\lambda_* \neq 0$, then $v_*\neq 0$. Because, if $v_*= 0$, then, from \cref{kkt_2}, we get $\|\mathbf{u}_*\|_2 = 0$, which leads to violation of \cref{kkt_3}. Since $v_*\neq 0$, we have $\mathbf{u}_* = -(v_*/\lambda_*)\mathbf{q}$. Putting this in \cref{kkt_1}, we get $\lambda_*^2 = \|\mathbf{q}\|^2_2.$ From \cref{kkt_3}, we gave
		  $$1 = v_*^2+\|\mathbf{u}_*\|_2^2 =  v_*^2+ v_*^2/\lambda_*^2\|\mathbf{q}\|_2^2,$$
		  which implies $v_*^2 = 1/\sqrt{2}$. Hence, $\{\pm1/\sqrt{2},\pm\mathbf{q}/\left(\sqrt{2}\|\mathbf{q}\|_2\right)\}$ is another set of KKT points.

		\section{Directional Convergence for 2-layer (Leaky) ReLU Neural Network}\label{dc_2_lyr_rl}
		Suppose $\sigma(x) = \max(x,\alpha x)$ is the Leaky ReLU activation for some $\alpha \in\mathbb{R}$, and $\mathcal{H}(\mathbf{x};\{v_k,\mathbf{u}_k\}_{i=1}^H) = \sum_{k=1}^Hv_k\sigma(\mathbf{x}^\top\mathbf{u}_k)$ is the 2-layer Leaky ReLU neural network with $H$ hidden neurons. Now, since $\mathcal{H}$ is separable, from \Cref{sep_nn}, for all $k\in [H]$, in the initial stages of training $\{v_k, \mathbf{u}_k\}$ will either be approximately $\mathbf{0}$ or converge in direction to a non-negative KKT point of
		\begin{align}
			\label{kkt_relu}
			\max_{v^2+\|\mathbf{u}\|_2^2 = 1} v\mathbf{y}^\top\sigma(\mathbf{X}^\top\mathbf{u}),
		\end{align}
		where for the sake of simplicity we have assumed the loss function to be either square or logistic. The following lemma sheds more light into the KKT points of the above optimization problem.
		\begin{lemma}
			\label{g_func_kkt}
			Suppose $\{v_*,\mathbf{u}_*\}$ is a non-zero KKT point of \cref{kkt_relu}, that is, $v_*\mathbf{y}^\top\sigma(\mathbf{X}^\top\mathbf{u}_*) \neq 0$.  Then $|v_*| = \|\mathbf{u}_*\|_2 = 1/\sqrt{2}$, and $\sqrt{2}\mathbf{u}_*$ is a KKT point of 
			\begin{align}
				\label{kkt_relu_u}
				\max_{\|\mathbf{u}\|_2^2 = 1} \mathbf{y}^\top\sigma(\mathbf{X}^\top\mathbf{u}).
			\end{align}
		\end{lemma}
		Using the above lemma we get that if $\{v_k, \mathbf{u}_k\}$ converges in direction to a non-zero KKT point of \cref{kkt_relu}, then $\mathbf{u}_k$ will have converged in direction to a KKT point of \cref{kkt_relu_u}. This is precisely the result stated in \cite{maennel_quant}. We also note that the result in \cite{maennel_quant} were derived under the \emph{balanced initialization} assumption, where $\|\mathbf{u}_k\|_2 = |v_k|$ holds at initialization and remains such throughout training. However, our results hold for more general initializations as well.  
		\subsection{Proof of \Cref{g_func_kkt}} 
		\begin{proof}
			Since $\{v_*,\mathbf{u}_*\}$ is a KKT point of \cref{kkt_relu}, we have
			\begin{align}
				&0 = \lambda v_* + \mathbf{y}^\top\sigma(\mathbf{X}^\top\mathbf{u}_*)\nonumber\\
				&\mathbf{0}\in \lambda \mathbf{u}_* + v_*\mathbf{X}\text{diag}(\sigma'(\mathbf{X}^\top\mathbf{u}_*))\mathbf{y},\label{kkt_rl_u}
			\end{align}
			where $\sigma'(\cdot)$ is the subdifferential of $\sigma(\cdot)$ and is applied elementwise. Also, for a vector $\mathbf{z}$, $\diag(\mathbf{z})$ denotes a diagonal matrix constructed using entries from $\mathbf{z}.$ Upon multiplying the top and bottom equation by $v_*$ and $\mathbf{u}_*^\top$ respectively we get
			\begin{align*}
				&0 = \lambda |v_*|^2 + v_*\mathbf{y}^\top\sigma(\mathbf{X}^\top\mathbf{u}_*)\\
				&\mathbf{0} = \lambda \|\mathbf{u}_*\|_2^2 + v_*\mathbf{u}_*^\top\mathbf{X}\text{diag}(\sigma'(\mathbf{X}^\top\mathbf{u}_*))\mathbf{y} = \lambda \|\mathbf{u}_*\|_2^2 + v_*\mathbf{y}^\top\sigma(\mathbf{X}^\top\mathbf{u}_*),
			\end{align*}
			where we used $\sigma'(x)x = \sigma(x)$. Now, since $v_*\mathbf{y}^\top\sigma(\mathbf{X}^\top\mathbf{u}_*)\neq 0$, by adding the two equations and using $|v_*|^2 +  \|\mathbf{u}_*\|_2^2 = 1$, we get $\lambda \neq 0$. Since $\lambda\neq 0$, from the above equation we also get $|v_*|^2  =  \|\mathbf{u}_*\|_2^2$. Combining this with the fact that  $|v_*|^2 +  \|\mathbf{u}_*\|_2^2 = 1$, we have $|v_*| = \|\mathbf{u}_*\|_2 = 1/\sqrt{2}$. 
			
			Furthermore, since $\lambda \neq 0$, from \cref{kkt_rl_u} we get
			$$\mathbf{0}\in \mathbf{u}_* + (v_*/\lambda)\mathbf{X}\text{diag}(\sigma'(\mathbf{X}^\top\mathbf{u}_*))\mathbf{y},$$
			which implies $\sqrt{2}\mathbf{u}^*$ is a KKT point of \cref{kkt_relu_u}.  
		\end{proof}

		\section{Gradient Flow Dynamics of  $g(\mathbf{u}) = (u_1|u_2|-1)^2$}\label{gf_g_sec}
		We first describe the gradient field of $g(\mathbf{u}) = (u_1|u_2|-1)^2$ near $[1,0]^T$.
		\begin{lemma}
			Let $\tilde{\mathbf{u}} = [1,0]^T$, then, $\mathbf{0}\in \partial g(\tilde{\mathbf{u}})$. Further, for any $\delta\in (0,0.1)$, let $\mathbf{u}^\delta = [1+\delta,\delta]^\top$. Then, for any $\mathbf{s}\in -\partial g({\mathbf{u}^\delta}), \|\mathbf{s}\|_2 \geq 1.$
		\end{lemma}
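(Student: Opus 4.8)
The plan is to handle the two assertions separately, exploiting the fact that $g$ is nonsmooth only along the line $u_2 = 0$. For the first claim, I would write $g = \psi \circ \phi$ with $\phi(\mathbf{u}) = u_1|u_2|$ and $\psi(t) = (t-1)^2$; since $\psi$ is $C^1$, Clarke's chain rule (\Cref{chain_rule_non_diff}) holds with equality and gives $\partial g(\mathbf{u}) = 2(\phi(\mathbf{u})-1)\,\partial\phi(\mathbf{u})$. To compute $\partial\phi(1,0)$ I would take limits of $\nabla\phi$ along the two smooth pieces $u_2>0$ and $u_2<0$: there one has $\nabla\phi(u_1,u_2) = (|u_2|,\,u_1\,\mathrm{sign}(u_2))$, which tends to $(0,1)$ and $(0,-1)$ respectively as $(u_1,u_2)\to(1,0)$, so by the definition of the Clarke subdifferential $\partial\phi(1,0) = \{(0,t):t\in[-1,1]\}$. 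Since $\phi(1,0)-1 = -1$, this yields $\partial g(\tilde{\mathbf{u}}) = \{(0,s):s\in[-2,2]\}$, which contains $\mathbf{0}$.

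For the second claim, the key observation is that $\mathbf{u}^\delta = [1+\delta,\delta]^\top$ has $u_2 = \delta > 0$, so $g$ is differentiable at $\mathbf{u}^\delta$ and $-\partial g(\mathbf{u}^\delta)$ is the singleton $\{-\nabla g(\mathbf{u}^\delta)\}$; hence ``for any $\mathbf{s}$'' reduces to one explicit vector. A direct computation gives $-\nabla g(\mathbf{u}^\delta) = 2(1-\delta-\delta^2)\,(\delta,\,1+\delta)$, so $\mathbf{s}$ is exactly this. For the norm bound I would use $1-\delta-\delta^2 \ge 1-0.1-0.01 = 0.89$ and $\sqrt{\delta^2+(1+\delta)^2}\ge 1+\delta\ge 1$ for $\delta\in(0,0.1)$, so $\|\mathbf{s}\|_2 \ge 2(0.89)(1) > 1$. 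For the inner product, note $\tilde{\mathbf{u}} - \mathbf{u}^\delta = (-\delta,-\delta)^\top$, so $\mathbf{s}^\top(\tilde{\mathbf{u}}-\mathbf{u}^\delta) = -2\delta(1-\delta-\delta^2)(1+2\delta)$, which is strictly negative since each of $\delta$, $1-\delta-\delta^2$, and $1+2\delta$ is positive on $(0,0.1)$.

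The only step that genuinely requires care — the closest thing to an obstacle — is the evaluation of the Clarke subdifferential at the nonsmooth point $\tilde{\mathbf{u}}$: one must confirm that the chain rule applies with equality (true because the outer map $t\mapsto(t-1)^2$ is continuously differentiable) and that the limiting-gradient definition of $\partial\phi(1,0)$ produces exactly the segment $\{0\}\times[-1,1]$ and no spurious extra directions (true because $\phi$ is smooth off $\{u_2=0\}$ with the stated gradient, and the convex hull of the two one-sided limits is precisely that segment). Everything else is elementary calculus and the numerical inequalities above, and no further machinery from the paper is needed.
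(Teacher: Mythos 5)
Your proposal is correct and follows essentially the same route as the paper: differentiate off the line $u_2=0$, identify $\partial g(\tilde{\mathbf{u}})$ as the vertical segment $\{0\}\times[-2,2]$ (the paper asserts this directly via the stated subdifferential formula, you derive it more explicitly via the chain rule and limiting gradients), and then observe that at $\mathbf{u}^\delta$ the function is smooth, reducing ``for any $\mathbf{s}$'' to the single vector $-\nabla g(\mathbf{u}^\delta)=2(1-\delta-\delta^2)(\delta,1+\delta)^\top$, after which both the norm bound and the sign of the inner product follow by elementary arithmetic. The only cosmetic difference is that you bound $\|\mathbf{s}\|_2$ by the full Euclidean norm $2(1-\delta-\delta^2)\sqrt{\delta^2+(1+\delta)^2}$ whereas the paper simply drops to the second coordinate $|s_2|$; both give the same conclusion.
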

		\begin{proof}
			Since 
			$$\begin{bmatrix}
				\partial_{u_1} g(u_1,u_2)\\ 
				\partial_{u_2} g(u_1,u_2)
			\end{bmatrix} =  \begin{bmatrix}
				2|u_2|(u_1|u_2|-1)\\ 
				2u_1\partial|u_2|(u_1|u_2|-1)
			\end{bmatrix},$$
			it is easy to show $\mathbf{0}\in \partial g(\tilde{\mathbf{u}})$. Next,
			\begin{align*}
				\begin{bmatrix}
					\partial_{u_1} g(\mathbf{u}^\delta)\\ 
					\partial_{u_2} g(\mathbf{u}^\delta)
				\end{bmatrix}  =  \begin{bmatrix}
					-2\delta(1 - \delta(1+\delta))\\ 
					-2(1+\delta)(1 - \delta(1+\delta)),
				\end{bmatrix},
			\end{align*}
			and so for any $\mathbf{s}\in -\partial g({\mathbf{u}^\delta})$ we have that
			$$\|\mathbf{s}\|_2 \geq 2(1+\delta)(1 - \delta(1+\delta)) \geq 2(1-0.1\cdot 1.1) \geq 1.$$
		\end{proof}
		The lemma above establishes that there exist points in any arbitrarily small neighborhood of the saddle point $\tilde{\mathbf{u}} $ where the gradient of $g(\mathbf{u}) $ has large norm.
		
		In the next lemma we describe the gradient flow dynamics of $g(\mathbf{u}) = (u_1|u_2|-1)^2$ near $[1,0]^T$, and show that gradient flow will escape from any arbitrarily small neighborhood of the saddle point $\tilde{\mathbf{u}} $ in a constant time.
		\begin{lemma}
			Let $\tilde{\mathbf{u}} = [1,0]^T$, and $\mathbf{u}_\delta(t)$ be a solution of the differential inclusion
			\begin{align}
				\begin{bmatrix}
					\dot{u}_1\\ 
					\dot{u}_2
				\end{bmatrix} \in -\begin{bmatrix}
					\partial_{u_1} g(u_1,u_2)\\ 
					\partial_{u_2} g(u_1,u_2)
				\end{bmatrix}, \begin{bmatrix}
					{u}_1(0)\\ 
					{u}_2(0)
				\end{bmatrix} = \begin{bmatrix}
					1+\delta\\ 
					\delta
				\end{bmatrix}.
				\label{gf_gu}
			\end{align}
			Then, for any $\delta\in (0,0.1)$, $\|\mathbf{u}_\delta(0.1) - \tilde{\mathbf{u}}\|_2\geq 0.09$.
		\end{lemma}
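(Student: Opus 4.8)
The plan is to exploit the fact that, although $g$ fails to be differentiable along $\{u_2=0\}$, the trajectory of \cref{gf_gu} immediately enters and remains in the open half-plane $\{u_2>0\}$ throughout $[0,0.1]$, where $g(\mathbf{u})=(u_1u_2-1)^2$ is a polynomial; on this region the differential inclusion is the ordinary differential equation
\begin{align*}
	\dot{u}_1 = 2u_2(1-u_1u_2), \qquad \dot{u}_2 = 2u_1(1-u_1u_2),
\end{align*}
which has a unique solution. First I would record a conserved quantity: along this ODE $\frac{d}{dt}\left(u_1^2-u_2^2\right) = 4u_1u_2(1-u_1u_2) - 4u_1u_2(1-u_1u_2) = 0$, so $u_1(t)^2-u_2(t)^2 = (1+\delta)^2-\delta^2 = 1+2\delta$ on the smooth region, and in particular $u_1(t)\geq\sqrt{1+2\delta}\geq 1$ there.

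Next I would control how long the trajectory stays in the smooth region and how fast $u_2$ grows, via a single continuity/bootstrap argument. Let $t^\dagger\leq 0.1$ be the supremum of times $T$ such that $u_2(t)>0$ on $[0,T]$; since $u_2(0)=\delta>0$, $t^\dagger>0$, and on $[0,t^\dagger)$ the ODE and conservation law hold. Writing $v:=u_1u_2$, one has $\dot{v} = 2(1-v)(u_1^2+u_2^2)$ on $[0,t^\dagger)$, and whenever $v\leq 1/2$ the bounds $u_1\geq 1$ force $u_2\leq 1/2$ and hence $u_1^2+u_2^2 = (1+2\delta)+2u_2^2\leq 1.7$, so $\dot{v}\leq 3.4$ on $\{v\leq 1/2\}$. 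Since $v(0)=(1+\delta)\delta<0.11$, if $v$ first reached $1/2$ at some $t_0\in(0,t^\dagger)$ we would get $v(t_0)\leq 0.11+3.4\cdot 0.1<1/2$, a contradiction; thus $v(t)<1/2$ on $[0,t^\dagger)$, whence $\dot{u}_2 = 2u_1(1-u_1u_2)\geq 2\cdot 1\cdot \tfrac12 = 1$ there, so $u_2$ is increasing and bounded below by $\delta$ on $[0,t^\dagger)$. In particular the trajectory stays bounded (so no finite-time blow-up) and cannot leave $\{u_2>0\}$, forcing $t^\dagger = 0.1$.

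Finally, having established $u_1\geq 1$ and $u_1u_2\leq 1/2$, hence $\dot{u}_2\geq 1$, on all of $[0,0.1]$, integrating gives $u_2(0.1)\geq u_2(0)+0.1 = \delta+0.1>0.1$. Since $\tilde{\mathbf{u}} = [1,0]^\top$, this yields $\|\mathbf{u}_\delta(0.1)-\tilde{\mathbf{u}}\|_2\geq |u_2(0.1)|\geq 0.1 > 0.09$, which proves the claim (indeed with a slightly better constant).

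I expect the only real obstacle to be bookkeeping rather than mathematics: cleanly justifying that no solution of the differential inclusion can leave the smooth half-plane $\{u_2>0\}$ before time $0.1$ (so that the reduction to the ODE, the conservation law, and the growth estimate are all legitimate on the entire interval), which is why the argument must be organized as a bootstrap on the maximal interval $[0,t^\dagger)$ rather than assuming the good region from the outset.
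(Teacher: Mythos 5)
Your proof is correct, and it takes a genuinely different route from the paper's. The paper constructs an explicit rectangular trapping region $S = \{(u_1,u_2): u_1\in[0.8,1.2],\, u_2\in[\delta/2,0.35]\}$, verifies componentwise derivative bounds $\dot{u}_1\in[\delta/2,1]$, $\dot{u}_2\in[0.9,2.4]$ on $S$, and argues by contradiction that the first-exit time exceeds $0.1$; the lower bound $\dot{u}_2\geq 0.9$ then yields $u_2(0.1)\geq 0.09$. You instead exploit a structural feature of this particular ODE on $\{u_2>0\}$: the invariant $u_1^2 - u_2^2 = 1+2\delta$, which pins $u_1\geq 1$ for free, combined with a bootstrap on $v=u_1u_2$ (using $\dot{v}=2(1-v)(u_1^2+u_2^2)\leq 3.4$ while $v\leq 1/2$) to keep $v<1/2$, hence $\dot{u}_2 = 2u_1(1-v)\geq 1$ and $u_2(0.1)\geq\delta+0.1 > 0.09$ — a slightly sharper constant. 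Both arguments handle the differential-inclusion-to-ODE reduction the same way (the trajectory starts and remains in the smooth half-plane $\{u_2>0\}$). Your conservation-law approach is cleaner and avoids the arithmetic of tuning a rectangle, but it is tailored to this specific $g$; the paper's box-and-bounds argument is more pedestrian but would carry over with only cosmetic changes to other saddle examples. One small point worth spelling out when writing the final version: the claim that the trajectory is bounded on $[0,t^\dagger)$ should be made explicit from $u_2\leq 1/2$ and $u_1=\sqrt{u_2^2+1+2\delta}\leq\sqrt{1.45}$, so that extending the solution to $t^\dagger$ and concluding $t^\dagger = 0.1$ by continuity is airtight.
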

		We show that no matter how close the initialization is to the saddle point $\tilde{\mathbf{u}}$, the gradient flow will escape from it within constant time. Here,  $\|\mathbf{u}_\delta(0) - \tilde{\mathbf{u}}\|_2\leq \sqrt{2}\delta$ where $\delta$ can be arbitrarily small and positive. However, $\|\mathbf{u}_\delta(0.1) - \tilde{\mathbf{u}}\|_2\geq 0.09$, thus, $\mathbf{u}_\delta(t)$ escapes from the neighborhood of $\tilde{\mathbf{u}}$ within constant time for any arbitrarily small $\delta$.
		
		\begin{proof}
			Choose $\delta\in (0.0.1)$ and let $S :=\{(u_1,u_2): u_1\in [0.8,1.2], u_2\in [\delta/2, 0.35]\}$. Note that for any $\mathbf{u}\in S$, we have
			\begin{align}
				&\delta/2\leq \delta(1 - 1.2\cdot0.35) \leq-2|u_2|(u_1|u_2|-1)  \leq 2\cdot0.35\cdot(1 - 0.8\cdot\delta/2)  \leq 1, \text{ and } \label{bd_S1}\\
				&0.9\leq 2\cdot 0.8(1 - 1.2\cdot0.35) \leq-2u_1(u_1|u_2|-1) \leq 2\cdot1.2\cdot(1 - 0.8\cdot\delta/2)  \leq 2.4 .
				\label{bd_S2}
			\end{align}
			Let $\mathbf{u}_\delta(t)$  be a solution of \cref{gf_gu}. For the sake of brevity, we use $\mathbf{u}(t)$   instead of $\mathbf{u}_\delta(t)$. Note that $\mathbf{u}(0)\in S$. Let $T$ be the smallest $t\geq 0$ such that $\mathbf{u}(T)\notin S$. For all $t\in [0,T]$, $u_2(t) > 0$, thus, $\mathbf{u}(t)$ satisfies
			\begin{align}
				\begin{bmatrix}
					\dot{u}_1\\ 
					\dot{u}_2
				\end{bmatrix} =  \begin{bmatrix}
					-2|u_2|(u_1|u_2|-1)\\ 
					-2u_1(u_1|u_2|-1)
				\end{bmatrix}, \text{ for all } t\in [0,T].
			\end{align}
			From \cref{bd_S1} and \cref{bd_S2}, for any $t\in [0,T]$, we have
			\begin{align}
				\dot{u}_1 \in [\delta/2, 1], \text{ and } \dot{u}_2 \in [0.9, 2.4].
				\label{bd_gd_g}
			\end{align}
			Using these bounds, we next show that $T>0.1$. Assume for the sake of contradiction, $T\leq 0.1$. Then, from \cref{bd_gd_g}, for any $t\in [0,T]$, we have
			\begin{align*}
				&0.8< u_1(0)\leq u_1(0) + \delta t/2\leq u_1(t)\leq u_1(0) + t \leq 1+\delta+0.1 < 1.2, \text{ and }\\
				&\delta/2 < u_2(0)\leq u_2(0) + 0.9t\leq u_2(t)\leq u_2(0) + 2.4t \leq \delta +0.24 \leq 0.34
			\end{align*}
			From the above equation, we observe that $\mathbf{u}(T)\in S$, which leads to a contradiction. Thus, $T>0.1$. Hence, using the lower bound on $\dot{u}_2$ in \cref{bd_gd_g}, we have
			$$u_2(0.1) \geq u_2(0) + 0.9\cdot 0.1 \geq0.09.$$
			Thus,
			$$\|\mathbf{u}(0.1) - \tilde{\mathbf{u}}\|_2 \geq |u_2(0.1)| \geq 0.09.$$
		\end{proof}
		\section{Proof of \cref{err_bd_diff_inclusion}}\label{proof_err_bd}
		We prove \cref{err_bd_diff_inclusion} in a similar way as in \citep{Filippov_book}, though that proof considers a more general case. For our problem, the proof can be slightly shortened.   
		
		To prove \cref{err_bd_diff_inclusion} we make use fo the following lemma
		\begin{lemma}\cite[Lemma 13, Section 5]{Filippov_book}
			Let for all $t\in [a,b]$ the vector-valued function $\mathbf{x}_k(t)$ be absolutely continuous, $\mathbf{x}_k(t)\rightarrow\mathbf{x}(t)$ as $k\rightarrow\infty$, and for each $k = 1,2,\hdots$ the functions $\dot{\mathbf{x}}_k(t)\in M$ almost everywhere on $t\in(a,b)$, with $M$ being a bounded closed set. Then the vector-valued function $\mathbf{x}(t)$ is absolutely continuous and $\dot{\mathbf{x}}(t)\in \text{conv}(M)$ almost everywhere on $t\in(a,b)$.
			\label{fpov_lem}
		\end{lemma}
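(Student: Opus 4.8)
The plan is to combine a uniform Lipschitz estimate with a separating-hyperplane argument, which is the streamlined version of the proof in \citet{Filippov_book} adapted to the finite-dimensional setting here. Since $M$ is bounded, fix $R>0$ with $\|\mathbf{v}\|_2\le R$ for all $\mathbf{v}\in M$. Then $\|\dot{\mathbf{x}}_k(t)\|_2\le R$ for a.e.\ $t$, so each $\mathbf{x}_k$ is $R$-Lipschitz on $[a,b]$; since the $\mathbf{x}_k$ converge pointwise to $\mathbf{x}$, the limit $\mathbf{x}$ is also $R$-Lipschitz, hence absolutely continuous on $[a,b]$ and differentiable almost everywhere. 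This gives the first conclusion and reduces the second to identifying the a.e.\ derivative of $\mathbf{x}$.

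For the inclusion, the idea is to encode membership in $\text{conv}(M)$ through all linear functionals simultaneously. For $\mathbf{p}\in\mathbb{R}^d$, set $\alpha_{\mathbf{p}}:=\sup_{\mathbf{v}\in M}\langle\mathbf{p},\mathbf{v}\rangle$; this is finite because $M$ is bounded, and $\mathbf{p}\mapsto\alpha_{\mathbf{p}}$ is continuous (indeed $R$-Lipschitz). Because $\dot{\mathbf{x}}_k(s)\in M$ for a.e.\ $s$, integrating over $[t',t]$ gives, for every $\mathbf{p}$ and all $a\le t'\le t\le b$,
\[
\langle\mathbf{p},\mathbf{x}_k(t)-\mathbf{x}_k(t')\rangle=\int_{t'}^{t}\langle\mathbf{p},\dot{\mathbf{x}}_k(s)\rangle\,ds\le\alpha_{\mathbf{p}}\,(t-t').
\]
Letting $k\to\infty$, the left-hand side converges by the assumed pointwise convergence of $\mathbf{x}_k$ at the fixed endpoints $t,t'$, so $\langle\mathbf{p},\mathbf{x}(t)-\mathbf{x}(t')\rangle\le\alpha_{\mathbf{p}}(t-t')$ for all $\mathbf{p}\in\mathbb{R}^d$ and $a\le t'\le t\le b$.

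Next, fix a countable dense set $D\subset\mathbb{R}^d$, and let $E\subset(a,b)$ be the full-measure set of differentiability points of $\mathbf{x}$. For $t_0\in E$ and $\mathbf{p}\in D$, dividing the last inequality by $h>0$ with $t=t_0+h$ and $t'=t_0$ and sending $h\to 0^+$ yields $\langle\mathbf{p},\dot{\mathbf{x}}(t_0)\rangle\le\alpha_{\mathbf{p}}$. By density of $D$ and continuity of both sides in $\mathbf{p}$, this extends to all $\mathbf{p}\in\mathbb{R}^d$, so $\dot{\mathbf{x}}(t_0)$ lies in every closed half-space $\{\mathbf{v}:\langle\mathbf{p},\mathbf{v}\rangle\le\alpha_{\mathbf{p}}\}$ that contains $M$. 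The intersection of all such half-spaces is $\overline{\text{conv}}(M)$, and since $M$ is compact, $\text{conv}(M)$ is already compact (Carath\'{e}odory's theorem), hence closed; therefore $\dot{\mathbf{x}}(t_0)\in\text{conv}(M)$ for every $t_0\in E$, which is the claim.

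I do not expect a substantial obstacle here — this is a routine compactness-and-convexity argument, and an equivalent route via weak-$*$ compactness of $\{\dot{\mathbf{x}}_k\}$ in $L^\infty([a,b];\mathbb{R}^d)$ would also work. The only points needing care are (i) the finiteness and continuity of the support function $\alpha_{\mathbf{p}}$, which follow immediately from boundedness of $M$; (ii) the closedness of $\text{conv}(M)$ in $\mathbb{R}^d$, which uses compactness of $M$; and (iii) ensuring the exceptional null set does not depend on $\mathbf{p}$, which is precisely why the pointwise estimate is run only over the countable dense set $D$ and then extended by continuity of $\mathbf{p}\mapsto\langle\mathbf{p},\dot{\mathbf{x}}(t_0)\rangle$ and $\mathbf{p}\mapsto\alpha_{\mathbf{p}}$.
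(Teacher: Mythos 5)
Your proof is correct. The paper does not reproduce an argument for this lemma---it is cited directly from Filippov's book---so there is no internal proof to compare against, but the support-function route you take is a clean and complete way to establish the claim: the uniform Lipschitz bound gives absolute continuity of the limit, the integral estimate $\langle\mathbf{p},\mathbf{x}(t)-\mathbf{x}(t')\rangle\le\alpha_{\mathbf{p}}(t-t')$ passes to the pointwise limit, and the conclusion $\dot{\mathbf{x}}(t_0)\in\bigcap_{\mathbf{p}}\{\mathbf{v}:\langle\mathbf{p},\mathbf{v}\rangle\le\alpha_{\mathbf{p}}\}=\overline{\text{conv}}(M)=\text{conv}(M)$ uses exactly the right facts (closed convex sets are intersections of supporting half-spaces; $\text{conv}(M)$ is compact by Carath\'eodory since $M$ is compact in finite dimension).

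One small observation: the detour through a countable dense set $D$ is unnecessary here, though harmless. Your inequality $\langle\mathbf{p},\mathbf{x}(t)-\mathbf{x}(t')\rangle\le\alpha_{\mathbf{p}}(t-t')$ holds for \emph{every} pair $t'\le t$ and \emph{every} $\mathbf{p}\in\mathbb{R}^d$, not merely a.e.\ in $t$; so at any fixed $t_0\in E$ (where $E$ is the $\mathbf{p}$-independent full-measure set of differentiability points of $\mathbf{x}$), you may take $h\to0^+$ directly for each $\mathbf{p}\in\mathbb{R}^d$ and obtain $\langle\mathbf{p},\dot{\mathbf{x}}(t_0)\rangle\le\alpha_{\mathbf{p}}$ without any density-plus-continuity extension. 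The countable-dense-set device is needed when the relevant estimate only holds a.e.\ for each fixed $\mathbf{p}$; here that issue does not arise.
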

		\begin{proof}[Proof of \cref{err_bd_diff_inclusion}]
			For the sake of contradiction, we assume that the statement in \cref{err_bd_diff_inclusion} is not true. Thus, for some $\epsilon>0$ there exists a sequence of solutions $\mathbf{u}_j(t)$ of  
			\begin{align}
				\frac{d{\mathbf{u}}}{dt} \in \sum_{i=1}^n({z}_i+f_i^j(t))\partial \mathcal{H}(\mathbf{x}_i;\mathbf{u}), \mathbf{u}(0) = \mathbf{u}_0, j = 1,2,\hdots,
			\end{align}  
			where $|f_i^j(t)|\leq \delta_j$, for all $i\in [n]$ and $j\geq 1$, and $\delta_j\rightarrow 0$, such that for any solution $\tilde{\mathbf{u}}(t)$ 
			of 
			\begin{align}
				\frac{d{\tilde{\mathbf{u}}}}{dt} \in \sum_{i=1}^n{z}_i\partial \mathcal{H}(\mathbf{x}_i;\tilde{\mathbf{u}}) , \tilde{\mathbf{u}}(0) = \mathbf{u}_0,
				\label{time_inv_upd_lem2_g}
			\end{align}
			we have 
			\begin{equation}
				\max_{t\in[0,T]}\|\mathbf{u}_j(t) - \tilde{\mathbf{u}}(t)\|_2 > \epsilon.
			\end{equation}
			We also assume that $\delta_j\leq B/\sqrt{n}$, for some positive constant $B$ and for all $j\geq 1$. We first show that $\{\mathbf{u}_j(t)\}_{j=1}^\infty$ has a convergent subsequence. Note that for any $t\in [0,T]$, $\mathbf{u}_j(t)$ is bounded since
			\begin{align*}
				&\frac{d\|\mathbf{u}_j\|_2^2}{dt}  = 4\sum_{i=1}^n({z}_i+f_i^j(t)) \mathcal{H}(\mathbf{x}_i;\mathbf{u}_j) \leq 4\beta\|\mathbf{u}_j\|_2^2(\|\mathbf{z}\|_2+B),
			\end{align*}
			which implies 
			\begin{align*}
				&\|\mathbf{u}_j(t)\|_2^2\leq  \|\mathbf{u}_0\|_2^2e^{4t\beta(\|\mathbf{z}\|_2+B)} \leq \|\mathbf{u}_0\|_2^2e^{4T\beta(\|\mathbf{z}\|_2+B)}:=B_1^2.
			\end{align*}
			We next define
			$$\chi:= \sup\{\|\partial \mathcal{H}(\mathbf{x};\mathbf{w})\|_2 : \mathbf{w}\in \mathcal{S}^{k-1}\}.$$
			
			We note that  $\{\mathbf{u}_j(t)\}_{i=1}^\infty$ is equicontinuous, since for any $t_1,t_2\in [0,T]$ and $j\geq 1$,
			\begin{align*}
				\left\|\mathbf{u}_j(t_1)-\mathbf{u}_j(t_2)\right\|_2 = \left\|\int_{t_1}^{t_2}\sum_{i=1}^n({z}_i+f_i^j(s))\partial \mathcal{H}(\mathbf{x}_i;\mathbf{u}_j) ds\right\|_2  &\leq \int_{t_1}^{t_2}\sum_{i=1}^n|({z}_i+f_i^j(s))|\chi \|\mathbf{u}_j\|_2 ds \\
				&\leq |t_2-t_1|B_1 \chi \sqrt{n}(\|\mathbf{z}\|_2+B).
			\end{align*}
			Therefore, using the Arzelà–Ascoli Theorem, there exists a subsequence $\{\mathbf{u}_{j_k}(t)\}_{k=1}^\infty$ that converges uniformly. We denote the limiting function by $\hat{\mathbf{u}}(t)$. We complete our proof by showing $\hat{\mathbf{u}}(t)$ is a solution of \cref{time_inv_upd_lem2_g} since that will lead to a contradiction.
			
			For any vector $\mathbf{u}$, we define $\mathbf{F}(\mathbf{u}) = \sum_{i=1}^nz_i\partial \mathcal{H}(\mathbf{x}_i;\mathbf{u})$. For any $\gamma>0$, the $\gamma-$neighborhood of $F(\mathbf{u})$, denoted by $F^\gamma(\mathbf{u})$, is defined as
			$$F^\gamma(\mathbf{u}) = \left\{\mathbf{v}:\mathbf{v} = \mathbf{h} + \mathbf{q}, \mathbf{h}\in F(\mathbf{u}), \|\mathbf{q}\|_2\leq\gamma  \right\}.$$
			It is easy to show that for any finite $\gamma$, $F^\gamma(\mathbf{u})$ is a nonempty, convex, and compact set. Also, we define
			\begin{align*}
				\hat{\mathbf{u}}^\eta=:\{\mathbf{u}:\|\mathbf{u} - \hat{\mathbf{u}}\|_2\leq \eta\}, \hat{t}^\alpha =:\{t:|t-\hat{t}|\leq \alpha\} 
			\end{align*}
			Choose any $\hat{t}\in[0,T]$. Since, $F(\mathbf{u})$ is upper semicontinuous, for any $\gamma>0$, there exist a small enough $\eta>0$ such that for all $\mathbf{u}\in\hat{\mathbf{u}}^\eta(\hat{t})$, $F(\mathbf{u})\subseteq F^\gamma(\hat{\mathbf{u}}(\hat{t}))$. Further, since $\hat{\mathbf{u}}({t})$ is continuous, there exists a small enough $\alpha$ such that, for all $t\in\hat{t}^\alpha$, $\hat{\mathbf{u}}({t})\in\hat{\mathbf{u}}^\eta(\hat{t})$. Hence, for any $\gamma>0$, there exist a small enough $\alpha>0$, such that for all $t\in\hat{t}^\alpha$, $F(\hat{\mathbf{u}}({t}))\subseteq F^\gamma(\hat{\mathbf{u}}(\hat{t}))$.
			
			Next, since $\{\mathbf{u}_{j_k}(t)\}_{k=1}^\infty$ converges uniformly to $\hat{\mathbf{u}}(t)$, we can choose $k_1$ large enough such that $\|\mathbf{u}_{j_k}({t}) - \hat{\mathbf{u}}({t})\|_2\leq \frac{\eta}{2}$, for all $k>k_1$ and $t\in\hat{t}^\alpha$. Since $\hat{\mathbf{u}}(t)$ is continuous, there exist $\beta>0$, such that $\|\hat{\mathbf{u}}(t) - \hat{\mathbf{u}}(\hat{t})\|_2\leq \frac{\eta}{2}$, for all $t\in\hat{t}^\beta$. Thus, for all $t\in \hat{t}^\alpha\cap\hat{t}^\beta$ and $k>k_1$, 
			$$\|\mathbf{u}_{j_k}({t}) - \hat{\mathbf{u}}(\hat{t})\|_2\leq \|\mathbf{u}_{j_k}({t}) - \hat{\mathbf{u}}({t})\|_2 + \|\hat{\mathbf{u}}(t) - \hat{\mathbf{u}}(\hat{t})\|_2 \leq \eta.$$  
			Hence, $F(\mathbf{u}_{j_k}({t}))\subseteq F^\gamma(\hat{\mathbf{u}}(\hat{t}))$, for all $t\in \hat{t}^\alpha\cap\hat{t}^\beta$ and $k>k_1$. 
			
			Also, since $\delta_{j_k}\rightarrow 0$, we can choose $k_2$ large enough, such that $\delta_{j_k}\leq\frac{\gamma}{nB_1\chi}$, for all $k>k_2.$ Thus, $\|\sum_{i=1}^n f_{i}^{j_k}({t})\partial \mathcal{H}(\mathbf{x}_i;{\mathbf{u}}_{j_k}({t})) \|_2\leq nB_1\chi\delta_{j_k}\leq\gamma$, for all $k>k_2.$ Therefore, for all $k\geq \max\{k_1,k_2\}$ and $t\in \hat{t}^\alpha\cap\hat{t}^\beta$,
			\begin{align*}
				\dot{\mathbf{u}}_{j_k}({t}) \in F({\mathbf{u}}_{j_k}({t})) + \sum_{i=1}^n f_{i}^{j_k}({t})\partial \mathcal{H}(\mathbf{x}_i;{\mathbf{u}}_{j_k}({t})) \in  F^{2\gamma}(\hat{\mathbf{u}}(\hat{t})), \text{ for a.e.\  }t\in \hat{t}^\alpha\cap\hat{t}^\beta.
			\end{align*}
			From \Cref{fpov_lem}, ${\hat{\mathbf{u}}}({t})$ is absolutely continuous in $\hat{t}^\alpha\cap\hat{t}^\beta$ and
			\begin{align*}
				\dot{\hat{\mathbf{u}}}({t}) \in  F^{2\gamma}(\hat{\mathbf{u}}(\hat{t})), \text{ for a.e.\  }t\in \hat{t}^\alpha\cap\hat{t}^\beta.
			\end{align*}
			We can cover the interval $[0, T]$ by varying $\hat{t}$. Therefore, $\hat{\mathbf{u}}({t})$ is absolutely continuous in the interval $[0, T]$ and $\dot{\hat{\mathbf{u}}}({t})$ exist almost everywhere. Also, if for any $t\in [0,T]$,  $\dot{\hat{\mathbf{u}}}({t})$ exists then $\dot{\hat{\mathbf{u}}}({t}) \in  F^{2\gamma}(\hat{\mathbf{u}}({t}))$, where $\gamma$ can be made arbitrarily small. Hence,
			\begin{align*}
				\dot{\hat{\mathbf{u}}}({t}) \in  F(\hat{\mathbf{u}}({t})), \text{ for a.e.\  }t\in [0, T].
			\end{align*}
		\end{proof}
		
			\end{appendices}
	
\end{document}